\newtheorem{theorem}{Theorem}
\newtheorem{lemma}{Lemma}
\newtheorem{definition}{Definition}
\newtheorem{proposition}{Proposition}
\newtheorem{remark}{Remark}
\DeclareMathOperator*{\argmin}{arg\,min}
\begin{document}

\title{
Large-Dimensional Multibody Dynamics Simulation Using Contact Nodalization and Diagonalization
}
\author{Jeongmin Lee, Minji Lee, and Dongjun Lee
\thanks{This research was supported by the Industrial Strategic Technology Development Program (20001045) of the Ministry of Trade, Industry \& Energy (MOTIE) of Korea; and the Engineering Research Center Program for Soft Robotics (2016R1A5A1938472) of the National Research Foundation (NRF) funded by the Ministry of Science, ICT \& Future Planning (MSIP) of Korea.}
\thanks{The authors are with the Department of Mechanical \& Aerospace Engineering and IAMD, Seoul National University, Seoul, Republic of Korea.
\{ljmlgh,mingg8,djlee\}@snu.ac.kr. Corresponding author: Dongjun Lee.}
}
\maketitle

\begin{abstract}
We propose a novel multibody dynamics simulation framework that can efficiently deal with large-dimensionality and complementarity multi-contact conditions.
Typical contact simulation approaches require performing contact impulse fixed-point iteration (I-FPI), which has high time-complexity from large-size matrix factorization and multiplication, as well as susceptibility to ill-conditioned contact situations.
To circumvent this, we propose a novel framework based on velocity fixed-point iteration (V-FPI), which, by utilizing a certain surrogate dynamics and contact nodalization (with virtual nodes), we achieve not only inter-contact decoupling but also their inter-axes decoupling (i.e., contact diagonalization) at each iteration step. 
This then enables us to one-shot/parallel-solve the contact problem during each V-FPI iteration-loop, while avoiding large-size/dense matrix inversion/multiplication, thereby, significantly speeding up the simulation time with improved convergence property.
We theoretically show that the solution of our framework is consistent with that of the original problem and, further, elucidate mathematical conditions for the convergence of our proposed solver.
Performance and properties of our proposed simulation framework are also demonstrated and experimentally-validated for various large-dimensional/multi-contact scenarios including deformable objects.
\end{abstract}

\section{Introduction}
\IEEEPARstart{A}{s} technology and automation advance, robots performing diverse tasks in various environments on behalf of humans are becoming an increasingly essential topic \cite{kunze18ral}.
Robots no longer do specific tasks in a stationary and well-known environment, necessitating the ability to notice changes in their surroundings and react immediately through interaction.
In this regard, enhancing online performance and flexibility through offline pre-training has recently emerged as a very promising solution.
These concepts have been implemented in a variety of ways (e.g, self-supervised learning \cite{narang21icra,lee20tro}, reinforcement learning \cite{agostinelli19nmi,chebotar19icra}) and have been used to succeed in tasks in a variety of challenging situations (e.g., climbing in complex terrain \cite{hwangbo19sr}, tight tolerance assembly \cite{son20iros}).

Since the approaches described above can be broadly understood in terms of optimization, they commonly require large amounts of high-quality data to achieve adequate performance and robustness.
These data can also be gathered in real-world contexts, but by using virtual environments (i.e., simulation), the data collecting process becomes considerably safer and faster, and the environment can be adjusted without the need for human intervention. 
Furthermore, perfect knowledge of the system can be used for more efficient learning \cite{pintorss18,johns21arxiv} and does not necessitate a separate estimating approach to collect data. As a result, simulation is frequently utilized as a backbone to train robots to do challenging tasks (i.e., sim-to-real transfer \cite{abbeel18icra,matas18corl,ken21icra}).

However, for practically useful data collection, simulation must be both fast and accurate.
One of the most challenging issues in dynamic simulation of robot applications is contact \cite{hofer20rss}, while it inevitably occurs in situations where the robot interacts with the environment and other objects. 
In this paper, we mainly focus on following common form of discrete-time dynamics equations with contact, as in \cite{todorov12iros,duriez18tro,daviet20tog}:
\begin{equation} \label{eq-lindyn}
    \begin{aligned}
        A_k\hat{v}_k = b_k + J_{c,k}^T\lambda_{c,k}
    \end{aligned}
\end{equation}
where $A_k\in\mathbb{R}^{n\times n},b_k\in\mathbb{R}^n$ are constructed from the system state at the $k$-th time step, $\hat{v}_k\in\mathbb{R}^n$ is the representative velocity between the time intervals of $k$-th to $k+1$-th steps, $J_{c,k}\in\mathbb{R}^{3n_c\times n}$ is the contact Jacobian, $\lambda_{c,k}\in\mathbb{R}^{3n_c}$ is the contact impulse, $n$ is the system dimension, and $n_c$ is the contact number at the time step.

In general, the contact constraints can be expressed as the complementarity-based relation between $\hat{v}_k$ and $\lambda_{c,k}$. Therefore, in many cases, the solution process of \eqref{eq-lindyn} requires the following contact impulse space transform:
\begin{equation} \label{eq-delassus}
    \begin{aligned}
        &J_{c,k}\hat{v}_k = J_{c,k}A_k^{-1}J_{c,k}^T\lambda_{c,k} + J_{c,k}A_k^{-1}b_k
    \end{aligned}
\end{equation}
where $J_{c,k}A_k^{-1}J_{c,k}^T$ is called Delassus operator \cite{horak19ral}.
This impulse-based formulation \eqref{eq-delassus} implies the linear relation between contact impulse and the velocity relative to the contact frame. Based on this formulation, various algorithms \cite{hwangbo18ral,daviet11tog,liu05tase,erleben17sca} perform impulse fixed-point iteration (I-FPI) to find a proper collision response that satisfies the contact conditions. 
Despite their popularity in rigid body simulation, I-FPI suffers from the following challenges in general multibody simulation:
\begin{itemize}
    \item Due to numerous coupling between states (e.g., finite element method), structure of the matrix $A_k$ in \eqref{eq-lindyn} is complex and time-varying, which makes the computation of Delassus operator in \eqref{eq-delassus} intractable especially for a large-size dynamics with varying contact points.
    \item In the case of ill-conditioned contact situations (i.e., the condition number of the Delassus operator is poor), the I-FPI adopted to \eqref{eq-delassus} converges slowly and often fails, resulting in implausible contact behavior of the simulation.
\end{itemize}
Accordingly, it is difficult to efficiently simulate a high degree of freedom (DOF) systems with multiple contacts, particularly in scenarios involving deformable objects.

I-FPI based approaches \cite{hwangbo18ral,daviet11tog,liu05tase,erleben17sca} may be characterized as executing fixed-point iteration to meet contact complementarity condition while intrinsically reflecting \eqref{eq-lindyn}.
In contrast, in this paper, we present a new method that uses fixed-point iteration to satisfy \eqref{eq-lindyn} while keeping the contact condition based on velocity fixed-point iteration (V-FPI).
We first propose contact nodalization, which turns all contacts into nodal situations (i.e., contact with Cartesian nodal points) while precisely preserving the contact condition. 
We then proceed to develop a novel numerical solver based on contact diagonalization, which is achieved through multiple contact generation and solution with respect to certain surrogate dynamics.
Each surrogate dynamics problem can be solved in a one-shot/parallelized manner due to the diagonal properties and converges to the original dynamics equation \eqref{eq-lindyn} as iteration progresses.
The main features of our framework can be summarized as follow:
\begin{itemize}
    \item $\textit{Scalable}$: 
    The entire procedure of the solver consists solely of matrix-vector multiplication and simple algebraic operations, resulting in low time and memory complexity.
    \item $\textit{Accurate}$: 
    Dynamics and contact conditions are precisely enforced, resulting in accurate contact simulation as demonstrated by experimental validations.
    \item $\textit{Convergent}$: 
    Convergence is theoretically investigated and practically proved to have fast and robust convergence even for ill-conditioned problems.
    \item $\textit{Versatile}$: 
    Diverse formulations including a combination of rigid-deformable body and maximal-generalized coordinate can be dealt with. Furthermore, it is compatible with various integrator types and friction models as well.
\end{itemize}
From now on, we refer to the framework as simulation using contact nodalization and diagonalization (COND).
We also release an implementation of COND for a specific scenario (cable winding manipulation, see Sec.~\ref{Sec-result} for detail) so that it can be utilized for simulation and learning benchmarks: https://github.com/INRoL/inrol\_sim\_cablewinding.

The problem of solving dynamics with contact has been studied for a long time.
One of the well-known directions is to use the spring-damper-based penalty contact force formulation \cite{yamane06icra,khude13jcnd}, which calculates and applies a force proportional to the penetration or sliding velocity.
By explicitly formulating the contact force, it is advantageous in terms of scalability.
However it often demands a very small time step to prevent penetration or physically odd behavior, and the result tends to vary depending on the gain value \cite{kaufman08tog,drumwright11icra,khude13jcnd}.

Another very traditional method is to use a direct solution of the linear complementarity problem (LCP) \cite{potra97nd}, which is based on a polyhedral shape approximation of the friction cone.
In contrast to the non-linear complementarity problem (NCP), LCP can be solved exactly using various algorithms such as \cite{llyod05icra}, but they require impulse space conversion and have a high computational load during the solution process.
In addition, because of the linearized friction cone, unplausible frictional behavior can be generated \cite{horak19ral}.

In modern simulation research, numerical methods are mainly used.
One of the most prevalent methods is the projected Gauss-Seidel (PGS) algorithm \cite{horak19ral}, which solves \eqref{eq-delassus} using Gauss-Seidel type I-FPI while projecting the solution to the friction cone.
Similarly, gradient-based methods such as \cite{mazhar15tog} can be used, with numerical acceleration schemes.
The methods are faster than direct methods in many cases and able to handle more preferable contact formulation (than LCP) including NCP and its convex relaxation \cite{tasora10coa,todorov14icra}.
Therefore, they have been widely adopted in open-source simulation software (e.g., Bullet \cite{bullet}, MuJoCo \cite{todorov12iros}, Chrono \cite{tasora15chrono}, RaiSim \cite{RaiSim}).
The projection step can sometimes be replaced with a slight modification using the bisection \cite{hwangbo18ral} and Newton \cite{daviet11tog} algorithms.
However, under a complex structure of $A$, there is still a scalability issue in multibody systems due to their reliance on impulse space conversion and their ability to converge slowly in ill-conditioned contact situations.

Various techniques have been proposed to improve computational efficiency for large-scale problems.
For soft objects, model order reduction is utilized to reduce the system dimension in \cite{yoon19iros,duriez18tro} with the open-source framework SOFA \cite{sofa}.
The methods are promising, yet the applicable scenarios are restricted since they assume small deformation or limited modes of system behavior.
Position-based dynamics (PBD) idea \cite{macklin16game,macklin19tog} is prevalent in graphics area, and also utilized in open-source software FleX \cite{flex} and Brax \cite{brax} with various robotic researches \cite{liu21icra,lie22arxiv}.
The main limitation of PBD for robotic simulation is their slow convergence to adequate accuracy in engineering, and possible occurrence of implausible contact behavior which is also described in \cite{daviet20tog}.  
Also, its non-linear Gauss-Seidel fashion constraint resolution is incompatible with generalized coordinates representation which is widely used in the robotic simulation.
Subsystem-based architectures are presented in \cite{peiret20ral,lee21icra} which split the original problem into smaller size problems with parallelization, but the methods are yet limited to systems with few interconnections between subsystems.
In \cite{daviet20tog} and \cite{li18tog}, efficient solvers that can simulate objects such as cloth and hair are proposed while avoiding impulse space conversion. 
In contrast to our framework, they rely on the global relaxation and linear solving process and cannot deal with general rigid body representation.
Another notable work is \cite{macklin19tog} that develops a non-smooth Newton method to solve contact conditions using Schur-complement and complementarity preconditioner. 
It has the advantage of fast convergence due to the nature of the second-order but requires multiple large-size linear solving processes for a single time step.

The rest of the paper is organized as follows. Sec.~\ref{Sec-dyn} will explain how we create the constrained dynamics with contact in a discrete-time domain (i.e., \eqref{eq-lindyn}). 
Then in Sec.~\ref{Sec-ics}, our main algorithm COND for multibody dynamics solver will be described in detail. 
Sec.~\ref{Sec-conv} will investigate the convergence of the solver. 
The simulation and experimental analysis to evaluate the performance of COND will be presented in Sec.~\ref{Sec-result}. 
Sec.~\ref{sec-discussion} provides some discussions on the framework and future work.
Finally, the concluding remark will be presented in Sec.~\ref{sec-conclusion}.

\section{Constrained Dynamics with Contact} \label{Sec-dyn}

In this section, we will describe how we construct the dynamic equation with multiple constraints and contact.

\subsection{Dynamics Integration} \label{slindyn}
We formulate discrete-time domain dynamics at the velocity and impulse level, as is generally the case to avoid inconsistency \cite{champneys16jam}.
Dynamics integration methods can be broadly classified according to explicit/implicit type and linear/non-linear type.
In this work, we construct the dynamic equation in linearized form, considering the constraints in an implicit manner. This is because:
1) while targeting complex systems including flexible bodies, the explicit method has clear limitations in its stability, and 2) non-linear integration can actually be expressed as an iteration of a linear integration.
The equation of motion of mechanical system under contacts in the continuous-time domain can be written as follow \cite{todorov12iros}:
\begin{equation} 
    \begin{aligned}
        M(q)\ddot{q} + C(q,\dot{q})\dot{q} + d\psi^T &= f_{ext} + J_c(q)^T\lambda_c \\
        &= f_{ext} + \sum_{m=1}^{n_c} J_{c,m}(q)^T\lambda_{c,m}
    \end{aligned}
\end{equation}
where $q$ is the generalized coordinate variable\footnote{can involve representation in Euclidean space as well as orientation representation such as SO(3). Note that generalized velocity $\dot{q}$ and acceleration $\ddot{q}$ can still be expressed as $\mathbb{R}^n$.} of system, $M(q),C(q,\dot{q})\in\mathbb{R}^{n\times n}$ are the mass, Coriolis matrix, $d\psi^T\in\mathbb{R}^n$ is the potential action, $f_{ext}\in\mathbb{R}^n$ is the external force, $\lambda_c=[\lambda_{c,1};\cdots;\lambda_{c,{n_c}}]\in\mathbb{R}^{3n_c}$ is the contact impulse with $\lambda_{c,m}=[\lambda_{n,m};\lambda_{t_1,m};\lambda_{t_2,m}]\in\mathbb{R}^{3}$, $J_c=[J_{c,1};\cdots;J_{c,{n_c}}]\in\mathbb{R}^{3n_c\times n}$ is the contact Jacobian, and subscripts $n,t_1,t_2$ denote the normal and tangential directions.
Then we perform the discretization of the dynamics as
\begin{equation} \label{disc_dyn}
    \begin{aligned}
        &M_k\frac{v_{k+1}-v_k}{t_k} + C_k v_k + d\psi_k^T = f_{ext,k} + J_{c,k}^T\lambda_{c,k} \\
        &\hat{v}_k=\frac{v_k+v_{k+1}}{2}, \quad q_{k+1} \leftarrow \text{update}(q_k,\hat{v}_k,t_k)
    \end{aligned}
\end{equation}
where $k$ denotes the time step index, $M_k=M(q_k)$, $C_k=C(q_k,v_k)$, $t_k$ is the step size, $v_k\in\mathbb{R}^n$ is the velocity, and $\hat{v}_k$ is the representative velocity of each time step. 
We derive potential action from the passivity relation presented in \cite{kim17ijrr} i.e.,
\begin{align} \label{eq-passive}
    d\psi_k^T\hat{v}_k t_k 
    &=\frac{\partial\psi}{\partial q_k}\hat{v}_k t_k + \frac{1}{2}\hat{v}_k^T\frac{\partial^2\psi}{\partial q_k^2}\hat{v}_k t_k^2 \\
    &\approx \psi_{k+1}-\psi_k
\end{align}
with the second-order approximation of exact potential energy deviation. 
Here, potential function may be non-convex therefore the Hessian term may not be symmetric positive definite.
Yet, some common approximations can be adopted to solve the issue in a compact way.
Consider the following constraint potential form:
\begin{equation} \label{eq-gen_psi}
        \psi(q) = \frac{1}{2} e(q)^TK(q)e(q)
\end{equation}
where $e(q)\in\mathbb{R}^{n_e}$ is the constraint error and $K(q)\in\mathbb{R}^{n_e\times n_e}$ is the symmetric positive definite gain matrix, and $n_s$ is the constraint dimension. 
This form \eqref{eq-gen_psi} is very versatile, as it can represent almost all types of constraint including simple spring, co-rotational finite element model \cite{moita96nme} and even hyper-elastic material with generalized compliance model \cite{macklin19tog}. 
Then we can write as follow using outer product approximation similar to \cite{tournier15tog,macklin19tog,lee21icra}:
\begin{align} \label{hess_psi}
    \begin{split}
        &\frac{\partial \psi}{\partial q} \approx J_e(q)^T K(q) e(q) \\
        &\frac{\partial^2\psi}{\partial q^2} \approx J_e(q)^T K(q)J_e(q) + E(q)
    \end{split}
\end{align}
where $J_e(q)\in\mathbb{R}^{n_e\times n}$ is the constraint Jacobian (i.e., $\frac{\partial e}{\partial q}$) and $E(q)\in\mathbb{R}^{n\times n}$ is the symmetric positive definite damping matrix.
o maintain the exact energy relation along these approximations (or at least, stability preserved), determining $E(q)$ is another meaningful subject, as it is necessary to find an appropriate energy dissipation to preserve the passivity of the system.
Yet in this paper, we do not delve deep into this issue and apply the following two simple policies: 1) user-defined constant damping matrix or 2) symmetric positive definite projection of geometric stiffness matrix \cite{tournier15tog} i.e.,
\begin{align*}
    E(q) = \text{proj}_{\mathbb{S}_n^+}\left[ \frac{\partial J_e}{\partial q}K(q)e(q) \right]
\end{align*}
where $\text{proj}_{\mathbb{S}_n^+}$ is the projection to $n\times n$ symmetric positive definite matrix manifold \cite{you21neuro}.
One of classical ways to compute $\text{proj}_{\mathbb{S}_n^+}$ uses singular value decomposition \cite{nicholas88spd}, however, to reduce the computation time for this, we construct a diagonal matrix where each element is the sum of the absolute values of the elements in each column of the original matrix.

One of the widely used potentials that cannot be represented by \eqref{eq-gen_psi} is gravity potential, however, its Hessian can be ignored by dropping out the derivative of the $J_e(q)$ as above. 
Finally, substituting \eqref{hess_psi} to \eqref{disc_dyn}, dynamics can be represented as in form of \eqref{eq-lindyn} with
\begin{equation} \label{eq-akbk}
\begin{aligned}
    &A_k = \hat{M}_k+ \frac{1}{2}(J_e(q_k)^T K(q_k)J_e(q_k) + E(q_k))t_k \\
    &b_k = \hat{M}_k v_k - C(q_k,v_k)v_k - J_e(q_k)^T K(q_k)e(q_k) + f_{ext,k} 
\end{aligned}
\end{equation}
where $\hat{M}_k=2t_k^{-1}M_k$.
The structure of \eqref{eq-akbk} is very similar to the one used in \cite{tournier15tog,kim17ijrr}.
Although it is based on the linearization of non-linear potential action, constraints are considered in implicit manner, therefore applicable to complex multibody systems including rigid and deformable bodies.
Also here, note that $A_k$ is always a symmetric positive definite matrix.

The form of \eqref{eq-lindyn} is also applicable to the general integration method.
For example, coordinate transform using $M(q)^{\frac{1}{2}}$ introduced in \cite{kim17ijrr} can be utilized to improve the passivity property of the dynamics, as it maintains the linearized form with a symmetric positive definite property of $A_k$. 
For the cases of non-linear integration, that takes constraints into account in a completely implicit manner without linearization, we can write the equation in the form of
\begin{align*} 
    g(\hat{v}_k) = 0
\end{align*}
where $g:\mathbb{R}^n\rightarrow\mathbb{R}^n$ is the non-linear function. 
For instance, for fully-implicit Euler integration \cite{bouaziz14tog}, $g(\hat{v}_k)$ is
\begin{align*} 
M_k\frac{v_{k+1}-v_k}{t_k} + C_k v_k + \nabla\psi(\hat{q}_k) - f_{ext,k} - J_{c,k}^T\lambda_{c,k}
\end{align*}
with $\hat{v}_k=v_{k+1},~\hat{q}_k=\text{update}(q_k,\hat{v}_k,t_k)$. 
Similarly, implicit midpoint integration \cite{volino05cav}, variational integration \cite{murphey09tro} can be represented.
Then solving this equation using the Newton method is equivalent to
\begin{align} \label{eq-newton} 
    \hat{v}_{k} \leftarrow \hat{v}_{k} - \frac{\partial g}{\partial\hat{v}_{k}}g(\hat{v}_{k})
\end{align}
and \eqref{eq-newton} can be simplified as
\begin{align*} 
    A_{k}\hat{v}_{k} = b_{k} + J_{c,k}^T\lambda_{c,k} 
\end{align*}
which is same as \eqref{eq-lindyn}.
Here, $A_{k}$ is a symmetric positive definite matrix, with some quasi-Newton style approximation.
Meanwhile, still there remains room for expansion to be directly applicable to the non-linear integration methods, especially those that can be expressed in an optimization form \cite{bouaziz14tog}.
However, we will remain the part as future work.

For specific cases, algorithms such as \cite{feather00icra} and \cite{wang19tog} can be utilized for effective factorization or linear solving of $A_k$. 
They are not considered here as they do not apply to general multibody dynamics.

\subsection{Signorini-Coulomb Condition} \label{subsec-ncp}

\begin{figure}[t] 
    \centering
    \subfigure[Open]{
    \includegraphics[width=2.5cm]{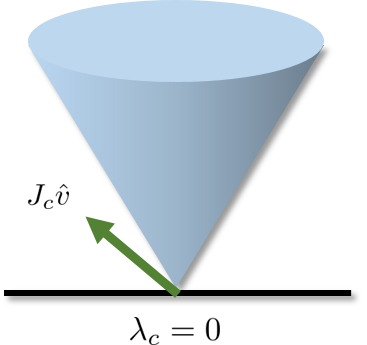}
    }
    \subfigure[Stick]{
    \includegraphics[width=2.5cm]{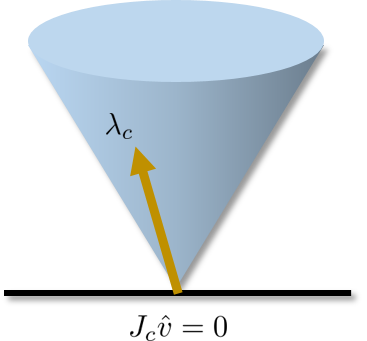}
    }
    \subfigure[Slip]{
    \includegraphics[width=2.5cm]{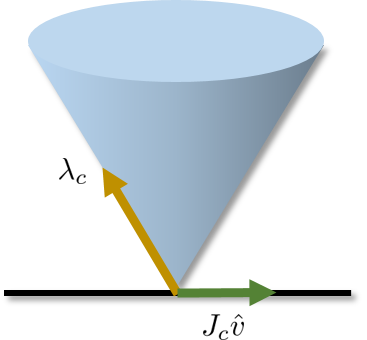}
    }
    \caption{Three cases induced from SCC. Blue shape represents friction cone set, green line denotes contact frame velocity and yellow line denotes contact impulse.}
    \label{fig-complementarity}
\end{figure}

To reflect the physical properties of the contact robustly and precisely, we consider solving the constructed dynamics \eqref{eq-lindyn} under the following complementarity-based Signorini-Coulomb condition (SCC):
\begin{equation} \label{eq-scc}
    \begin{aligned}
    & 0 \le \lambda_{n,m} \perp J_{n,m}\hat{v} + \phi_{n,m} \ge 0 \\
    & 0 \le \delta_m \perp \mu_m\lambda_{n,m} - \| \lambda_{t,m} \| \ge 0 \\
    &\delta_m \lambda_{t,m} +\mu_m\lambda_{n,m}J_{t,m}\hat{v} = 0 
    \end{aligned}
\end{equation}
for all contact indices $m=\left\{ 1, \cdots ,n_{c} \right\}$ where $\mu_m$ is the friction coefficient, $\phi_{n,m}\in\mathbb{R}^{1}$ is the additional term for penetration compensation and restitution coefficient, and $\delta_m\in\mathbb{R}^{1}$ is the auxiliary variable.
The first line of \eqref{eq-scc} is known as the Signorini condition which prevents penetration in a collision.
The rest parts correspond to the Coulomb friction condition that ensures the contact impulse contained within the friction cone set is defined as
\begin{equation} \label{eq-genfc}
    \mathcal{C}_{\lambda} = \left\{\lambda_n,\lambda_t~\vert~ \lambda_n\ge 0,~\mu\lambda_{n} - \| \lambda_{t} \| \ge 0 \right\}
\end{equation}
and tangential impulse operates in the opposite direction of motion if sliding occurs.
There are three possible behavior outcomes as the result of this condition, which are illustrated in Fig.~\ref{fig-complementarity} - open ($\lambda_{n,m}=0$), stick ($\lambda_{n,m}>0, \delta_m=0$), and slip ($\lambda_{n,m}>0, \delta_m>0$).

\section{Contact Nodalization and Diagonalization} \label{Sec-ics}

The main idea of COND is to solve the original contact problem (i.e., solve \eqref{eq-lindyn} with \eqref{eq-scc}) using the repetitions of surrogate contact problem (i.e., solve surrogate dynamics with \eqref{eq-scc}).
In this section, we will describe how the surrogate dynamics contact problems are constructed and solved, with the concept of contact nodalization and diagonalization.

\subsection{Contact Nodalization} \label{subsec-nodal}

We start with the following definition that categorizes contacts into two types:
\begin{definition}
For any point of contact, it is an S-contact, if it is a part of a collision between a dynamic object and a static environment. Otherwise, it is a D-contact, which is a part of a collision between two dynamic objects. 
\end{definition}

\subsubsection{Nodal contact assumption}

\begin{figure}[t] 
    \centering
    \subfigure[Nodal contact]{
    \includegraphics[width=4cm]{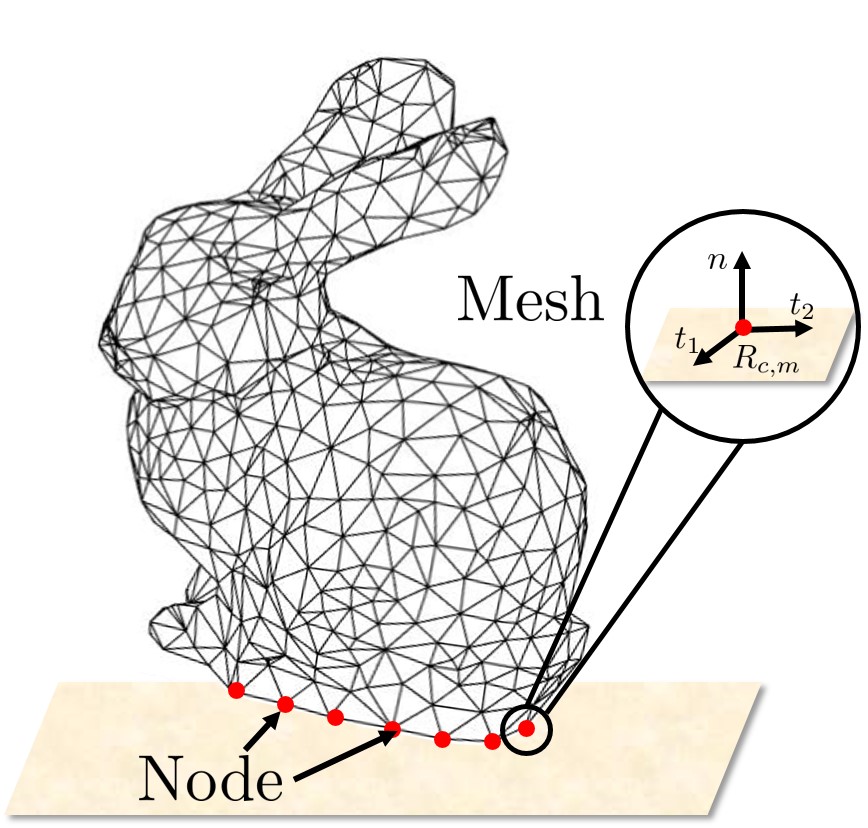}
    \label{fig-nodal}
    }
    \subfigure[Nodal contact with virtual node]{
    \includegraphics[width=4cm]{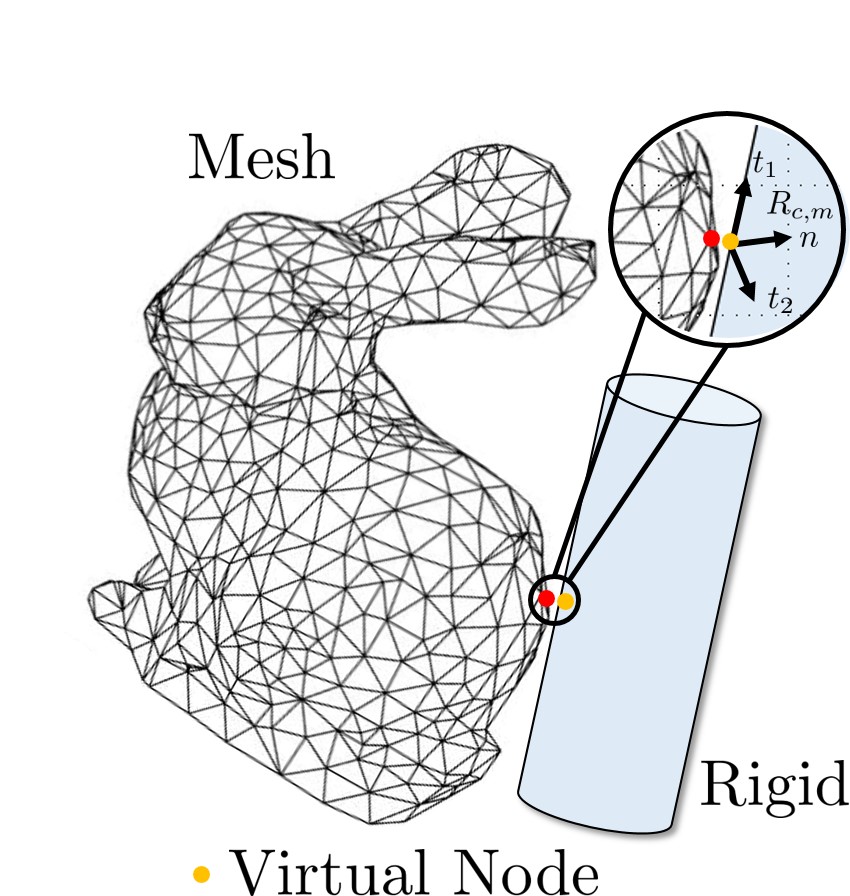}
    \label{fig-virtualnode}
    }
    \caption{Schematic of nodal contact situation. Red points are the nodes that originally exists in the mesh. Orange point is a virtual node that temporally generated on rigid body.}
\end{figure}

As illustrated in Fig.~\ref{fig-nodal}, nodal contact assumes that the points at which the contact acts are only on the nodes that comprise the part of the system coordinates.
That is, under the nodal contact assumption, S-contact is a static environment-node contact, while D-contact is a node-node contact.
Based on this, we can take the important observation that the contact Jacobian $J_c$ in the nodal situation can be represented by the stack of SO(3):
\begin{equation} \label{eq-jcnode}
    \begin{aligned}
    \begin{cases} 
    J_{c,m} = [0,\cdots,R_{c,m},\cdots,0] & \mbox{if S-contact} \\
    J_{c,m} = [0,\cdots,R_{c,m},\cdots,-R_{c,m},\cdots,0] & \mbox{if D-contact}
    \end{cases} 
    \end{aligned}
\end{equation}
where $R_{c,m}\in\mathbb{R}^{3\times 3}$ is the SO(3) matrix for $m$-th contact that converts global frame to nodal contact frame.
Although this concept is reasonable for fine mesh-based systems in practice and successively applied to thin nodal objects such as cloth and hair \cite{li18tog,daviet20tog}, it has not been applied to general multibody systems.
This is mainly because contact points cannot be regarded as a system node, in a coordinate representation such as rigid bodies or joint angles.
However since we are aiming for robotic applications, this extension is essential, and thus we propose the following new technique.

\subsubsection{Virtual node}

To embrace the broader contact situation under the nodal assumption, we make the concept of the virtual node as shown in Fig.~\ref{fig-virtualnode}.  
The mass-less virtual node is temporarily formed when contact occurs if the point is not a predefined node and the contact is treated as occurring on the virtual node.
By this, all the contacts that exist in the system can be fairly considered nodal contacts.
However, for the virtual node concept to be valid, the motion of the virtual node and the point where the contact originally occurred must match.
To implement this, we utilize viscous damping force between the virtual nodes and collision points:
\begin{equation} \label{eq-vforce}
    \begin{aligned}
    &f_v = -k_v(J_v\hat{v}-\hat{v}_v)
    \end{aligned}
\end{equation}
where $k_v$ can be interpreted as a gain, $J_v\in\mathbb{R}^{3n_v \times n}$ is the Jacobian matrix that maps the velocity of the original coordinate to collision point velocity, and $\hat{v}_v\in\mathbb{R}^{3n_v}$ is the representative velocity of virtual nodes while $n_v$ is the number of virtual nodes. 
Note that damping force is sufficient to match the motion since the virtual nodes are created at the exact locations (i.e., no constraint error) at every time step and disappear at the next time step.
Virtual nodes can be generated on any part of the system (e.g., surface on rigid body, interior points between mesh nodes) by simply creating a mapping from the original coordinate to the location of the node. 

Based on \eqref{eq-vforce}, we can reformulate the original dynamics as
\begin{equation} \label{eq-vnode}
    \begin{aligned}
        &\underbrace{\begin{bmatrix}
        A_o+k_vJ_v^TJ_v & -k_vJ_v^T \\
        -k_vJ_v & k_vI
        \end{bmatrix}}_{=A}
        \underbrace{
        \begin{bmatrix}
        \hat{v}_o \\ \hat{v}_v
        \end{bmatrix}}_{=\hat{v}} =
        \underbrace{\begin{bmatrix}
        b_o \\ 0
        \end{bmatrix}}_{=b} + 
        \underbrace{
        \begin{bmatrix}
        J^T_{co} & 0 \\ 0 & J^T_{cv} 
        \end{bmatrix}}_{=J^T_c}
        \underbrace{
        \begin{bmatrix}
            \lambda_{co} \\ \lambda_{cv}
        \end{bmatrix}}_{\lambda_c}
    \end{aligned}
\end{equation}
where $A_o$, $b_o$ are from original dynamics, $\lambda_{co}$, $\lambda_{cv}$, $J_{co}$, and $J_{cv}$ are respectively the contact impulse on original/virtual nodes and contact Jacobian on original/virtual nodes. 

\subsubsection{Analysis on virtual node}
It is easy to see that \eqref{eq-vnode} is structurally identical to \eqref{eq-lindyn}.
We also demonstrated in Prop. 1 that the symmetric positive definite property of dynamic matrix $A$ is preserved even after nodalization.

\begin{proposition} 
$A$ in \eqref{eq-vnode} is a symmetric positive definite matrix. 
\end{proposition}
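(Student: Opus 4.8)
The plan is to establish the two requirements separately: symmetry directly from the block structure, and positive definiteness by a Schur-complement argument taken with respect to the (invertible) bottom-right block. Symmetry is immediate: the bottom-right block $k_vI$ is symmetric; the top-left block $A_o+k_vJ_v^TJ_v$ is symmetric because $A_o$ is symmetric (it is SPD by the construction \eqref{eq-akbk} discussed in Sec.~\ref{slindyn}) and $J_v^TJ_v$ is symmetric; and the two off-diagonal blocks $-k_vJ_v^T$ and $-k_vJ_v$ are transposes of one another. Hence $A=A^T$.

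For positive definiteness, since the gain $k_v>0$, the block $k_vI$ is SPD and invertible, so I can form the Schur complement of $A$ with respect to it:
\begin{equation*}
S = (A_o+k_vJ_v^TJ_v) - (-k_vJ_v^T)(k_vI)^{-1}(-k_vJ_v) = A_o + k_vJ_v^TJ_v - k_vJ_v^TJ_v = A_o .
\end{equation*}
The quadratic coupling introduced by the virtual-node damping cancels exactly, leaving $S=A_o$, which is SPD. By the standard block criterion (a symmetric matrix with SPD bottom-right block is positive definite iff its Schur complement is positive definite), $k_vI\succ0$ together with $S=A_o\succ0$ gives $A\succ0$.

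For intuition — and I would include this line in the write-up — one can equivalently complete the square: for any $\hat{v}=[\hat{v}_o;\hat{v}_v]$,
\begin{equation*}
\hat{v}^TA\hat{v} = \hat{v}_o^TA_o\hat{v}_o + k_v\left\lVert J_v\hat{v}_o-\hat{v}_v\right\rVert^2 \ge 0 ,
\end{equation*}
and equality forces $\hat{v}_o^TA_o\hat{v}_o=0$, hence $\hat{v}_o=0$ since $A_o$ is SPD, and then $\hat{v}_v = J_v\hat{v}_o = 0$, so $\hat{v}=0$. There is no genuine obstacle in this proof; the only facts it relies on are that $A_o$ is SPD (inherited from the discretization in Sec.~\ref{slindyn}) and that $k_v>0$, both of which hold by assumption. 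The one point worth emphasizing is that the apparently destabilizing off-diagonal term $-k_vJ_v^T$ is precisely compensated by the $k_vJ_v^TJ_v$ term added to the top-left block, so that contact nodalization with virtual nodes preserves the SPD structure of the dynamics matrix verbatim.
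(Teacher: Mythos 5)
Your proof is correct, and its second half is essentially the paper's own argument: the paper also establishes positive semi-definiteness from the sum-of-squares structure $\hat{v}^TA\hat{v}=\hat{v}_o^TA_o\hat{v}_o+k_v\lVert J_v\hat{v}_o-\hat{v}_v\rVert^2$ and then shows that vanishing of the quadratic form forces $\hat{v}_o=0$ and hence $\hat{v}_v=0$. Your Schur-complement computation $S=A_o$ is just an equivalent (and slightly cleaner) packaging of the same cancellation, so there is no substantive difference in approach.
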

\begin{proof}
Since $A_o$ is a symmetric positive definite matrix, $A$ is at least symmetric and positive semi-definite. Now suppose that $x=[x_1;x_2]$ exists that satisfies $x^TAx=0$. Then $x_1$ must be zero vector to make $x_1^TA_ox_1 = 0$. Then $x^TAx=k_v x_2^Tx_2=0$ holds, which show $x_2$ is also a zero vector and it denotes $A$ is positive definite.
\end{proof}

Now let us identify how the viscous damping force \eqref{eq-vforce} affects the system dynamics. 
Dynamic equation \eqref{eq-vnode} can be rewritten as
\begin{align} 
    &A_o\hat{v}_o = b_o - k_vJ_v^TJ_v\hat{v}_o+k_vJ_v^T\hat{v}_v + J_{co}^T\lambda_{co} \label{eq-vnode1} \\
    &J_{cv}^T\lambda_{cv} = -k_vJ_v\hat{v}_o + k_v\hat{v}_v  \label{eq-vnode2}
\end{align}
Substituting \eqref{eq-vnode2} to \eqref{eq-vnode1}, we can obtain
\begin{align} \label{eq-vnode3}
    A_o\hat{v}_o = b_o + J_v^TJ_{cv}^T\lambda_{cv} + J_{co}^T\lambda_{co}
\end{align}
which implies that the formulation is consistent with the original dynamics structure, without any additional force.
However, the contact solution from \eqref{eq-vnode} satisfies the SCC with respect to virtual node velocity rather than the original contact point velocity i.e.,
\begin{align*}
    J_{cv}\hat{v}_v = J_{cv}(J_v\hat{v}_o + k_v^{-1}J_{cv}^T\lambda_{cv})
\end{align*}
instead of $J_{cv}J_v\hat{v}_o$.
This means the term $k_v^{-1}J_{cv}J_{cv}^T\lambda_{cv}$ induces the ``perturbation'' on the contact constraint. 
However, we theoretically show that as $k_v$ increases, this perturbation term vanishes in Prop. 2, and thus constraint drift (e.g., penetration) can be certifiably avoided if $k_v$ is large enough.
Note that, this is clearly different from spring-damper based force injection, as they creates additional force in the system and does not guarantee the satisfaction of the constraint.
For empirical evaluation, see Sec.~\ref{Sec-result}.

\begin{proposition} 
The contact formulation using virtual nodes converges to the original contact formulation as $k_v\rightarrow \infty$.
\end{proposition}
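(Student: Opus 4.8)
The plan is to recast the nodalized contact problem at the contact-impulse (Delassus) level, where nodalization turns out to be nothing but a vanishing perturbation of the Delassus operator. First I would invert the nodalized matrix $A$ of \eqref{eq-vnode}: eliminating $\hat{v}_v$ — equivalently, taking the Schur complement of $A$ with respect to its $k_vI$ block — leaves exactly $A_o$, since the two $k_vJ_v^TJ_v$ contributions cancel. This yields $A^{-1}$ in closed form, and since $b=[b_o;0]$ one gets $J_c\hat{v}=[\,J_{co}\hat{v}_o;\,J_{cv}\hat{v}_v\,]=w_0+(G_0+k_v^{-1}\Pi)\lambda_c$, where $w_0=\tilde{J}A_o^{-1}b_o$ and $G_0=\tilde{J}A_o^{-1}\tilde{J}^T$ are the free contact-space velocity and the Delassus operator of the \emph{original} problem for the lumped contact Jacobian $\tilde{J}=[\,J_{co};\,J_{cv}J_v\,]$, and $\Pi=\mathrm{diag}(0,\,J_{cv}J_{cv}^T)\succeq 0$. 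Because the SCC \eqref{eq-scc} is imposed precisely on $J_c\hat{v}$, the nodalized contact problem is therefore \emph{literally} \eqref{eq-scc} with the data pair $(w_0,G_0)$ replaced by $(w_0,\,G_0+k_v^{-1}\Pi)$.

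Next I would bound $\Pi$. Since the contacts are nodal, $J_{cv}$ has the SO(3)-block form \eqref{eq-jcnode}, so $J_{cv}J_{cv}^T$ is a fixed matrix — diagonal blocks $I$ for S-contacts and $2I$ for D-contacts, with sparse off-diagonal coupling only between contacts sharing a node — and hence $\norm{\Pi}$ is bounded by a constant depending only on the contact incidence pattern, independent of $k_v$ and of the system state (Prop.~1 already gives $A_o,A\succ 0$, so $G_0\succeq 0$ and all of the above is well defined). Therefore $G_0+k_v^{-1}\Pi\to G_0$ in norm as $k_v\to\infty$, and it remains only to invoke continuity of the solution of \eqref{eq-scc} with respect to its Delassus operator.

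For that continuity I would argue on two levels. Under the convex relaxation of the Coulomb cone (one of the friction formulations this framework supports), \eqref{eq-scc} with a positive definite operator is a strongly monotone cone complementarity problem, hence uniquely solvable with solution Lipschitz in $(w,G)$; subtracting the complementarity systems at $G_0$ and $G_0+k_v^{-1}\Pi$ and using monotonicity gives $\norm{\lambda_c(k_v)-\lambda_c^\star}=O(k_v^{-1})$, and since \eqref{eq-vnode3} already has the form \eqref{eq-lindyn} with the $k_v$-independent Jacobian $\tilde{J}$, also $\hat{v}_o(k_v)\to\hat{v}_o^\star$ — in particular the constraint perturbation $k_v^{-1}J_{cv}J_{cv}^T\lambda_{cv}\to 0$. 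For the exact non-convex SCC I would instead first establish a $k_v$-uniform a priori bound on $\lambda_c(k_v)$ — test \eqref{eq-vnode3} against $\hat{v}_o$, use $A_o\succ 0$ together with the dissipativity built into \eqref{eq-scc} to bound $\hat{v}_o$ (hence $\hat{v}_v$) by the data $b_o,\phi$, then read the normal-impulse bound off the uniformly positive definite $G_0+k_v^{-1}\Pi$ and the tangential bound off the friction cone $\norm{\lambda_{t,m}}\le\mu_m\lambda_{n,m}$ — then extract a convergent subsequence and pass to the limit, using closedness of the relations \eqref{eq-scc} to conclude that the limit solves the original contact problem and, via \eqref{eq-vnode3}, the original dynamics.

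The hard part will be the friction: with the exact SCC \eqref{eq-scc} the solution map in $G$ need not be single-valued, let alone Lipschitz, so the clean convergence rate is available only after the convex relaxation, and in full generality one must settle for the accumulation-point statement — which in turn rests entirely on the $k_v$-uniform impulse bound above. The two structural ingredients that carry everything are cheap: the Schur-complement cancellation that collapses the nodalized dynamics back to $A_o$, and the SO(3) block form of $J_{cv}$ that keeps $\Pi$ bounded.
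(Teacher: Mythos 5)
Your structural reduction is correct, and its first half is essentially the observation the paper makes \emph{before} the proposition: eliminating $\hat{v}_v$ from \eqref{eq-vnode} recovers $A_o\hat{v}_o = b_o + J_{co}^T\lambda_{co} + J_v^TJ_{cv}^T\lambda_{cv}$ (the paper's \eqref{eq-vnode3}), and the only trace of nodalization at the contact level is the additive perturbation $k_v^{-1}\,\mathrm{diag}(0,\,J_{cv}J_{cv}^T)$ of the Delassus operator, whose norm is bounded by the SO(3) block structure \eqref{eq-jcnode}. Where you diverge is in what you do next. The paper never argues solution continuity: it writes the dissipativity inequality $\lambda_{cv}^TJ_{cv}(J_vA_o^{-1}J_v^T+k_v^{-1}I)J_{cv}^T\lambda_{cv}+\lambda_{cv}^TJ_{cv}J_vA_o^{-1}b'_o\le 0$ implied by SCC, and notes that if $k_v^{-1}J_{cv}^T\lambda_{cv}\not\to 0$ then the term $k_v\|k_v^{-1}J_{cv}^T\lambda_{cv}\|^2$ diverges while the remaining quadratic in $J_v^TJ_{cv}^T\lambda_{cv}$ is bounded below, a contradiction. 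Hence the constraint perturbation vanishes (in fact $\|J_{cv}^T\lambda_{cv}\|=O(\sqrt{k_v})$, so the perturbation decays like $k_v^{-1/2}$), which is all the proposition asserts. You aim at the strictly stronger statement that the \emph{solutions} converge.

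That stronger statement is where your argument has a genuine gap, in precisely the step you flag as load-bearing. You correctly record that Prop.~1 only yields $G_0=\tilde{J}A_o^{-1}\tilde{J}^T\succeq 0$, yet you then propose to ``read the normal-impulse bound off the uniformly positive definite $G_0+k_v^{-1}\Pi$.'' It is not uniformly positive definite: $\Pi$ is itself only PSD (zero block on the original-node contacts) and enters scaled by $k_v^{-1}\to 0$, so whenever $\tilde{J}$ is row-rank-deficient (redundant or degenerate contacts, which the paper does not exclude) the impulse is not controlled by $J_c\hat{v}$ and need not remain bounded as $k_v\to\infty$. The energy estimate you invoke bounds $\hat{v}_o$ and therefore $\tilde{J}^T\lambda_c=J_{co}^T\lambda_{co}+J_v^TJ_{cv}^T\lambda_{cv}$, but not $\lambda_c$ nor even $J_{cv}^T\lambda_{cv}$ separately; without the uniform impulse bound both your subsequence extraction in the non-convex case and your Lipschitz/strong-monotonicity estimate in the convex case (which needs $A_c\succ 0$) collapse. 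The paper's proof is engineered to avoid exactly this: it never needs $\lambda_{cv}$ bounded, only $\|J_{cv}^T\lambda_{cv}\|=o(k_v)$, and that weaker fact drops out of the dissipativity inequality because the offending term carries an extra factor of $k_v$. If you either assume non-degenerate contact geometry or weaken your conclusion to ``the perturbation term in the formulation vanishes'' (the paper's actual claim), your route goes through and is arguably the cleaner packaging of the result.
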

\begin{proof} 
From the dissipative property of SCC (normal complementarity, opposite friction direction),
\begin{align*}
    \lambda_{cv}^TJ_{cv}(J_vA_o^{-1}J_v^T+k_v^{-1}I)J_{cv}^T\lambda_{cv} + \lambda_{cv}^TJ_{cv}J_vA_o^{-1}b'_o \le 0 
\end{align*}
holds where $b'_o=b_o+J_{co}^T\lambda_{co}$.
Suppose that if $k_v^{-1}J_{cv}^T\lambda_{cv}$ does not converges to $0$ as $k_v\rightarrow\infty$, which means $\| J_{cv}^T\lambda_{cv} \|$ goes to $+\infty$. 
Now notice that the left-hand side of the inequality is a summation of $k_v\| k_v^{-1}J_{cv}^T\lambda_{cv} \|^2$ and the quadratic term with respect to $J_v^T J_{cv}^T\lambda_{cv}$.
Then it is clear to see that the former goes to $+\infty$, and the latter also has a lower bound as $A_o^{-1}$ is a symmetric positive definite matrix.
This means that $k_v^{-1}J_{cv}^T\lambda_{cv}$ must converge to $0$ to satisfy the inequality.
Therefore as $k_v\rightarrow\infty$, virtual node-based contact formulation converges to original contact formulation as the perturbation becomes zero and the relation between $J_{cv}J_v\hat{v}$ and $\lambda_{cv}$ is exactly leveraged.
\end{proof}

It is also important to note that in \eqref{eq-vforce}, utilizing implicit representation $\hat{v},\hat{v}_v$ rather than $v,v_v$ is critical, as it leads \eqref{eq-vnode3} does not introduce any additional force.
As a result, even with sporadic generations of multiple virtual nodes and high $k_v$, the formulation can maintain stability.

Our virtual node-based formulation is similar to a concept of slack variable, so may have a disadvantage that the dimension of the system increases as the number of virtual nodes increases.
Nonetheless, the nodal transformation significantly contributes to ability of our algorithm to reduce the burden of the $\mathcal{O}(n^3)$ complexity in matrix operations\cite{todorov12iros}, as will be discussed further below.

\subsection{Derivation of Surrogate Dynamics} \label{subsec-subp}

Let us first consider solving \eqref{eq-vnode} with only Signorini condition in \eqref{eq-scc}.
Then we can find that it is equivalent to Karush–Kuhn–Tucker (KKT) conditions of the following velocity-level optimization problem:
\begin{equation} \label{eq-vopt}
    \begin{aligned}
        \min_{\hat{v}}&~\frac{1}{2}\hat{v}^TA\hat{v}-b^T\hat{v} \\
        \text{s.t.}&~\hat{v} \in \mathcal{C_V}
    \end{aligned}
\end{equation}
where $\mathcal{C_V}=\{\hat{v} \mid J_{n}\hat{v} + \phi_n \ge 0 \}$ is the feasible set of velocity with $\phi_n=[\phi_{n,1},\cdots,\phi_{n,m}]^T$. Here and hereafter, notations for $k$-th time step are omitted for simplification, however all the components are still time-varying.
This means that contact problems without friction can be replaced with solving \eqref{eq-vopt}. One way to solve \eqref{eq-vopt} is the projected gradient descent method \cite{boyd14fto} which takes the following steps:
\begin{align}
    &\hat{v}^* \leftarrow \hat{v}^l-W(A\hat{v}^l-b) \label{eq-pgd1} \\
    &\hat{v}^{l+1} \leftarrow \Pi^W_{\mathcal{C_V}}(\hat{v}^*) \label{eq-pgd2}
\end{align}
where $l$ is the iteration loop index, $W\in\mathbb{R}^{n\times n}$ is the symmetric positive definite step size matrix for gradient descent.
Here, $W$ can alternatively interpreted as an inverse of \textit{surrogate} dynamics matrix, which will be a key component of our derivation.
Also, $\Pi(\cdot)$ is defined as
\begin{equation} \notag
    \begin{aligned}
        &\Pi^W_{\mathcal{C_V}}(x) = \argmin_{y\in \mathcal{C_V}} \|y-x \|_W^2
    \end{aligned}
\end{equation}
which is the projection from $x$ on the convex set $\mathcal{C_V}$, with respect to weighted Euclidean distance. 
Since $A$ is positive definite and $\mathcal{C_V}$ is convex, the projected gradient descent method will well converge to the solution of \eqref{eq-vopt} if the problem is feasible.
However, since \eqref{eq-scc} also include the impulse constraint like friction direction constraint, it is not enough to deal with the generic contact problem. 
Instead, we modify the projection step \eqref{eq-pgd2} as follow:
\begin{equation} \label{eq-vd}
    \begin{aligned}
        &W^{-1} \hat{v}^{l+1} \leftarrow W^{-1}\hat{v}^* + J_{c}^T\lambda_{c}
    \end{aligned}
\end{equation}
Noticing the similarity between \eqref{eq-vd} and \eqref{eq-lindyn}, \eqref{eq-vd} can be considered as a surrogate dynamics, which is constructed from $W$ and $\hat{v}^*$. 
Then the update step can be interpreted as solving the contact problem with respect to this surrogate dynamics.
Now recalling the form of \eqref{eq-delassus}, $\lambda_c$ is equivalent to
\begin{equation} \label{eq-subp}
    \begin{aligned}
    \lambda_c = \text{SOL}(\eta_c,\Gamma_c)
    \end{aligned}
\end{equation}
where $\Gamma_c = J_cWJ_c^T \in \mathbb{R}^{3n_c\times 3n_c}$ can be interpreted as a surrogate Delassus operator, $\eta_c = J_c\hat{v}^*\in \mathbb{R}^{3n_c}$, and SOL denotes the solution that $\lambda_c$ and $J_c\hat{v}^{l+1}=\Gamma_c\lambda_c+\eta_c$ satisfies SCC.
After solving this contact problem \eqref{eq-subp}, update step is performed as
\begin{align} \label{eq-vup}
    &\hat{v}^{l+1} \leftarrow \hat{v}^* + W J_{c}^T\lambda_{c} 
\end{align}
Unlike the projection step in projected gradient descent that only enforces velocity-level constraint, this new velocity fixed point iteration (V-FPI, i.e., iteration of \eqref{eq-pgd1}, \eqref{eq-subp} and \eqref{eq-vup}) directly achieves the contact condition \eqref{eq-scc}. 

\begin{remark} [Consistency]
If the fixed-point iteration converges i.e.,
\begin{align*}
    &\hat{v}^{l+1} =\hat{v}^l-W(A\hat{v}^l-b) + W J_{c}^T\lambda_{c} =\hat{v}^l
\end{align*}
then the solution exactly consistent with the original dynamics
\begin{align*}
    A\hat{v}^l = b + J_{c}^T\lambda_{c}
\end{align*}
as $W$ is supposed to be a non-singular matrix.
Therefore, we can find that surrogate dynamics \eqref{eq-vd} eventually accurately reflects the dynamics condition if the iteration converges, see also Sec.~\ref{Sec-result}.
For the analysis of the convergence, see Sec.~\ref{Sec-conv}.
\end{remark}

\subsection{Contact Diagonalization} \label{subsec-contactdiag}

In our V-FPI process above, $\hat{v}^*$ computing step \eqref{eq-pgd1} and $\hat{v}^{l+1}$ updating step \eqref{eq-vup} are simple processes, yet the part that solves surrogate contact problem \eqref{eq-subp} may be complicated.
However, we find that selecting an appropriate surrogate dynamics matrix (i.e., $W$) under contact nodalization can make our surrogate Delassus operator $\Gamma_c = J_{c}WJ_{c}^T$ to be a diagonal matrix for all situations and the process can be drastically simplified. 
The proposition below demonstrates how it works.

\begin{proposition} 
Under contact nodalization, suppose 
\begin{align*}
    &W = \text{diag}(w_{11},\cdots,w_{nn})
\end{align*}
with $w_{i_1i_1}=w_{i_2i_2}=w_{i_3i_3}=\bar{w}_i \in \mathbb{R}^1$ for all $i$-th node that is in contact, where $i_1,i_2,i_3$ are indices corresponding to the node. Then the surrogate Delassus operator $\Gamma_c$ can be written as follow:
\begin{align} \label{eq-diagonal}
\begin{split}
    \Gamma_c&=\text{diag}(\Gamma_{c,1},\cdots,\Gamma_{c,{n_c}}) \\ &=\text{diag}(\gamma_{c,1} I_{3\times 3},\cdots,\gamma_{c,{n_c}} I_{3\times 3}) \\
    \gamma_{c,m} &=
    \begin{cases}
    \bar{w}_i, & \mbox{if $i$-th node is in S-contact} \\       
    \bar{w}_i+\bar{w}_j, & \mbox{if $i,j$-th node are in D-contact}
    \end{cases}
\end{split}
\end{align}
where $I$ denotes the identity matrix.
\end{proposition}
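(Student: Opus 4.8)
The plan is to exploit the very specific block structure that contact nodalization forces onto $J_c$, together with the scalar-per-node structure of the proposed $W$. First I would record, for each node $i$, the $3\times 3$ diagonal block of $W$ attached to that node; by the hypothesis $w_{i_1i_1}=w_{i_2i_2}=w_{i_3i_3}=\bar w_i$, this block equals $\bar w_i I_{3\times 3}$ whenever node $i$ is in contact (and those are the only node-blocks of $W$ that actually enter $\Gamma_c$). Next, from \eqref{eq-jcnode}, each row-block $J_{c,m}$ of $J_c$ is supported only on the single node (S-contact) or the two nodes (D-contact) participating in contact $m$, with the nonzero $3\times 3$ entries being $R_{c,m}\in SO(3)$, and $-R_{c,m}$ for the second node of a D-contact.

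Then I would compute the generic $(m,m')$ block of $\Gamma_c = J_c W J_c^T$, namely $J_{c,m} W J_{c,m'}^T$. Since $W$ is diagonal, this product collapses to a sum over the nodes lying in the intersection of the supports of $J_{c,m}$ and $J_{c,m'}$, each surviving term having the form $(\pm)(\pm)\,\bar w_\ell\, R_{c,m} R_{c,m'}^T$ for a shared node $\ell$. Invoking the standing nodalization convention that each (original or virtual) node participates in at most one contact — automatic for virtual nodes, each of which is generated for its own contact — the supports of distinct contacts are disjoint, so every off-diagonal block ($m\ne m'$) is an empty sum and hence $0$; thus $\Gamma_c$ is block diagonal.

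For the diagonal blocks I would split into the two cases. For an S-contact $m$ on node $i$, only the node-$i$ term survives, giving $\Gamma_{c,m} = R_{c,m}(\bar w_i I_{3\times 3})R_{c,m}^T = \bar w_i R_{c,m}R_{c,m}^T = \bar w_i I_{3\times 3}$ by $R_{c,m}\in SO(3)$. For a D-contact $m$ between nodes $i$ and $j$, the two surviving terms add: $\Gamma_{c,m} = R_{c,m}(\bar w_i I_{3\times 3})R_{c,m}^T + (-R_{c,m})(\bar w_j I_{3\times 3})(-R_{c,m}^T) = (\bar w_i+\bar w_j)R_{c,m}R_{c,m}^T = (\bar w_i+\bar w_j)I_{3\times 3}$. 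Reading off $\gamma_{c,m}$ as $\bar w_i$ or $\bar w_i+\bar w_j$ accordingly recovers \eqref{eq-diagonal}. The only point that needs care rather than a deep idea is the disjoint-support / one-contact-per-node step; once that is granted, the whole statement reduces to the cancellation $R_{c,m}R_{c,m}^T = I$ for rotations combined with the per-node weight block being a scalar multiple of the identity — which is precisely why the isotropy restriction $w_{i_1i_1}=w_{i_2i_2}=w_{i_3i_3}$ on $W$ is imposed, since a non-isotropic per-node block would not pass through $R_{c,m}$ to leave a multiple of the identity.
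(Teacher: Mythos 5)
Your proof is correct and follows essentially the same route as the paper's: expand $J_{c,m}WJ_{c,m}^T$ using the nodal block structure of \eqref{eq-jcnode} and the scalar-per-node blocks of $W$, then cancel via $R_{c,m}R_{c,m}^T=I$. You are in fact slightly more complete than the paper, which only computes the diagonal blocks and leaves implicit the vanishing of the off-diagonal blocks (your disjoint-support / one-contact-per-node observation); this is a worthwhile point but does not change the argument.
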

\begin{proof}
Recalling the structure of \eqref{eq-jcnode}, for S-contact,
\begin{align*}
    \Gamma_{c,m} = R_{c,m} (\bar{w}_i I_{3\times 3}) R_{c,m}^T = \bar{w}_i R_{c,m}R_{c,m}^T = \bar{w}_i I_{3\times 3}
\end{align*}
and for D-contact,
\begin{align*}
    \Gamma_{c,m} 
    &= R_{c,m} (\bar{w}_i I_{3\times 3}) R_{c,m}^T + R_{c,m} (\bar{w}_j I_{3\times 3}) R_{c,m}^T \\
    &= \bar{w}_i R_{c,m}R_{c,m}^T + \bar{w}_j R_{c,m}R_{c,m}^T \\
    &= (\bar{w}_i+\bar{w}_j) I_{3\times 3}
\end{align*}
holds from the definition of SO(3).
\end{proof}

Prop. 3 demonstrates that under a certain structure of $W$, extracting the components of $W$ is sufficient to construct the surrogate Delassus operator, without any matrix-matrix multiplication or factorization.
Therefore, we can obtain significant advantages in time and memory compared to the original Delassus operator (i.e. $J_cA^{-1}J_c^T$) assembly.
Also, the simple structure of the Delassus operator is significantly advantageous for the contact solving process (see Sec.~\ref{subsec-csolver}).

With this contact ``diagonalization'' method, we have built the basic structure of our simulation algorithm - COND, which is summarized in Alg. 1.

\begin{algorithm} [t]
\caption{COND}
\label{alg1}
\begin{algorithmic}[1] 
\While{Simulation loop}
\State{Construct original dynamics}
\State{Perform collision detection with contact nodalization}
\State{Construct dynamics $A,b,J_c$ based on \eqref{eq-vnode}}
\State{Initialize $l=1, \hat{v}^1$}
\While{V-FPI loop}
\State{Determine step size matrix $W$}
\State Compute $\hat{v}^*$ using \eqref{eq-pgd1}
\State{Construct $\Gamma_c$ based on diagonalized property \eqref{eq-diagonal}}
\State $\text{Solve } \eqref{eq-subp} \text{ using Alg. 2}$
\State Update $\hat{v}^{l+1}$ using \eqref{eq-vup}
\State Compute residual $\theta = \| \hat{v}^{l+1}-\hat{v}^l\|$
\If{$\theta < \theta_{th}$ or $l=l_{max}$} 
\State \textbf{break}
\EndIf
\State $l\leftarrow l+1$
\EndWhile
\State{Update state using $\hat{v}^{l+1}$}
\EndWhile
\end{algorithmic}
\end{algorithm}

\subsection{Solving Surrogate Dynamics Problem} \label{subsec-csolver}

Now the remaining part is how to solve \eqref{eq-subp}.
In typical solvers, although the contact conditions \eqref{eq-scc} are constraints that are independent with each contact, coupling between each contact still exists since the Delassus operator is dense. 
These coupling terms make the global iteration and relaxation process essential for contact solvers when dealing with multi-contact situations \cite{hwangbo18ral,horak19ral}.
However, in COND, the diagonalized property of $\Gamma_c$ established in Sec.~\ref{subsec-contactdiag} resolves this problem. 
Convenience induced from the property of $\Gamma_c$ in the contact solving process can be summarized as follows:
\begin{itemize}
    \item Coupling relaxation through global iteration (e.g., Gauss-Seidel) is unnecessary since each contact situation is completely decoupled. Furthermore, completely parallel computation is possible for each contact.
    \item Even for each single contact problem, coupling between normal and tangential impulse does not exist and it leads the exact solution to be obtained more simply without the need of numerical methods such as bisection \cite{hwangbo18ral}, Newton \cite{daviet11tog}, etc.
\end{itemize}
Overall, our contact solver consists only of a single linear solving of $-\Gamma_c^{-1}$ (which is very simple since $\Gamma_c$ is a diagonal matrix) with parallelized local projection step on $\mathcal{C}_{\lambda}$. We summarize this contact solver in Alg. 2.

\begin{figure}[t] 
    \centering
    \subfigure[Strict]{
    \includegraphics[width=2.5cm]{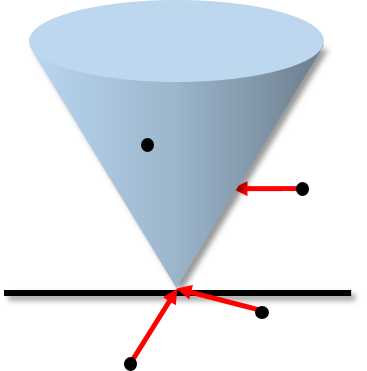}
    \label{strict}
    }
    \subfigure[Proximal]{
    \includegraphics[width=2.5cm]{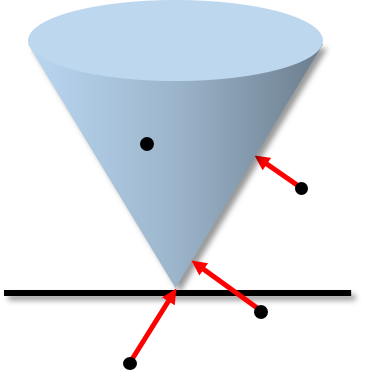}
    \label{proximal}
    }
    \caption{Two ProjectFC schemes: strict and proximal. Black dot: value before projection, Red arrow: projection.}
    \label{projectfc}
\end{figure}

\begin{algorithm}[t]
\caption{One-shot/parallelized contact solver}
\label{alg2}
\begin{algorithmic}[1]
\For{$m = 1 \text{ to } n_c$} \textbf{in parallel} 
\State $\lambda_{c,m}^{*} = -(\gamma_{c,m} I_{3\times 3})^{-1}(\eta_{c,m}+\phi_{c,m})$
\State $\lambda_{c,m} = \text{ProjectFC}(\lambda_{c,m}^{*})$
\EndFor
\end{algorithmic}
\end{algorithm}

The friction cone projection step (denoted as ProjectFC in Alg. 2) is necessary to enforce the output of the contact solver to satisfy the SCC.
If temporary value (i.e., $\lambda_{c,m}^{*}$ in Alg. 2) is inside $\mathcal{C}_{\lambda}$, ProjectFC simply yields the same value as input. Otherwise, $\lambda_{c,m}^{*}$ is projected on the surface of $\mathcal{C}_{\lambda}$. In this paper, we use two projection schemes - ``strict'' and ``proximal'' operator as illustrated in Fig~\ref{projectfc}. 
Each can be written as
\begin{equation} \label{eq-operator}
    \begin{aligned}
        \text{Strict: } &\lambda_{n,m} = \max(\lambda_{n,m}^{*},0) \\
        &\lambda_{t,m} = \Pi^I_{\mathcal{C}_{\lambda_{n,m}}}(\lambda_{t,m}^{*}) \\
        \text{Proximal: } &\lambda_{c,m} = \Pi^I_{\mathcal{C}_{\lambda}}(\lambda_{c,m}^{*})
    \end{aligned}
\end{equation}
where $\mathcal{C}_{\lambda_{n,m}}$ is the cross-section of $\mathcal{C}_{\lambda}$ cut vertically from $\lambda_{n,m}$.
It can be easily verified that for both strict operator and proximal operator, we can obtain the solution in a very simple analytic form. Also note that $\phi_c$ is only related to normal component, therefore $\phi_{c,m}=[\phi_{n,m};0;0]$.

We figure out that the two projection schemes have the following trade-offs in our solver: 1) the result of the strict operator exactly satisfies the SCC condition \eqref{eq-scc}, yet sufficient condition for convergence is not always feasible\footnote{From the fundamental limitation of SCC. In practice, we do not experience the convergence problem. For detailed discussions and results, see Sec.~\ref{Sec-conv},\ref{Sec-result}.} and 2) proximal operator can always guarantee the convergence of the solver, yet the solution does not exactly satisfy \eqref{eq-scc}.
Also, the result from the proximal operator is equivalent to the convex contact model proposed in \cite{tasora10coa,todorov14icra}, therefore having a unique solution under strong convexity.

\subsubsection{Strict operator}

The strict operator can exactly achieve SCC in one-shot which is shown in Prop. 4.

\begin{proposition}
Output of Alg. 2 with the strict operator in \eqref{eq-operator} is the unique solution of \eqref{eq-subp}.
\end{proposition}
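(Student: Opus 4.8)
The plan is to use the block structure of the surrogate Delassus operator to turn the claim into a collection of independent, elementary one-contact claims. By Prop.~3 we have $\Gamma_c=\mathrm{diag}(\gamma_{c,1}I_{3\times3},\dots,\gamma_{c,n_c}I_{3\times3})$ with each $\gamma_{c,m}>0$, so the velocity felt by contact $m$ along the surrogate relation $J_c\hat{v}^{l+1}=\Gamma_c\lambda_c+\eta_c$ is $\gamma_{c,m}\lambda_{c,m}+\eta_{c,m}$, which involves only $\lambda_{c,m}$. Since the conditions in \eqref{eq-scc} are stated per index $m$, the SOL problem \eqref{eq-subp} decouples completely across contacts. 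Hence it suffices to prove, for a single contact (I will drop the index $m$ and write $\gamma>0$), that the pair produced by lines~2--3 of Alg.~2 is the \emph{unique} $\lambda_{c}$ for which $(\lambda_{c},\,\gamma\lambda_{c}+\eta_{c})$ satisfies the one-contact Signorini--Coulomb condition; stacking these unique per-contact solutions then gives the unique solution of \eqref{eq-subp}.

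First I would handle the normal component. Plugging $\gamma\lambda_n+\eta_n$ into the Signorini line of \eqref{eq-scc} gives the scalar linear complementarity problem $0\le\lambda_n\perp\gamma\lambda_n+\eta_n+\phi_n\ge0$. Because $\gamma>0$, this is strictly monotone and has the single solution $\lambda_n=\max\!\big(-\gamma^{-1}(\eta_n+\phi_n),0\big)=\max(\lambda_n^{*},0)$: if $\eta_n+\phi_n\ge0$ then $\lambda_n=0$ and the velocity residual is nonnegative; if $\eta_n+\phi_n<0$ then $\lambda_n=\lambda_n^{*}>0$ zeroes the residual. This is exactly the normal output of the strict operator in \eqref{eq-operator}.

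Next, with $\lambda_n$ fixed, I would show the remaining two lines of \eqref{eq-scc} are equivalent to $\lambda_t=\Pi^{I}_{\mathcal{C}_{\lambda_n}}(\lambda_t^{*})$ with $\lambda_t^{*}=-\gamma^{-1}\eta_t$ and $\mathcal{C}_{\lambda_n}=\{\lambda_t:\|\lambda_t\|\le\mu\lambda_n\}$. The clean route is to observe that, with the surrogate tangential velocity $\gamma\lambda_t+\eta_t$, the Coulomb friction line with auxiliary multiplier $\delta\ge0$ is precisely the optimality system of the strongly convex program $\min_{\|\lambda_t\|\le\mu\lambda_n}\tfrac{\gamma}{2}\|\lambda_t\|^{2}+\eta_t^{T}\lambda_t$, whose unique minimizer is the Euclidean projection of $\lambda_t^{*}$ onto the disk $\mathcal{C}_{\lambda_n}$. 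Equivalently — and this is the version I would actually write out — one checks the three behavioral cases: (open) $\lambda_n=0$ forces $\|\lambda_t\|\le0$, hence $\lambda_t=0=\Pi^{I}_{\{0\}}(\lambda_t^{*})$; (stick) $\lambda_n>0,\ \delta=0$ forces $\gamma\lambda_t+\eta_t=0$, hence $\lambda_t=\lambda_t^{*}$, which is feasible exactly when $\|\lambda_t^{*}\|\le\mu\lambda_n$, and then the projection is the identity; (slip) $\lambda_n>0,\ \delta>0$ forces $\|\lambda_t\|=\mu\lambda_n$ and $\lambda_t$ antiparallel to $\eta_t$, hence $\lambda_t=\mu\lambda_n\,\lambda_t^{*}/\|\lambda_t^{*}\|$ with consistent value $\delta=\gamma(\|\lambda_t^{*}\|-\mu\lambda_n)>0$, which occurs exactly when $\|\lambda_t^{*}\|>\mu\lambda_n$. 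In each case the tangential value coincides with the ProjectFC output, and the degenerate sub-cases $\mu=0$ (so $\mathcal{C}_{\lambda_n}=\{0\}$) and $\eta_t=0$ fall under the stick case. Combining the normal and tangential parts, $\lambda_{c}$ is uniquely determined and equals the Alg.~2 output; restacking over $m$ finishes the argument.

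The main obstacle I anticipate is the careful treatment of the Coulomb line written with the auxiliary variable $\delta$: one must show this $\delta$-formulation is equivalent to the cone projection, that the stick/slip branch is split at exactly $\|\lambda_t^{*}\|=\mu\lambda_n$ (so it matches the internal branch condition of ProjectFC), and that $\lambda_{c}$ is unique \emph{even though} $\delta$ itself is not pinned down when $\lambda_n=0$. The case split must also be shown to be mutually exclusive and exhaustive once $\lambda_n$ is fixed, and the degenerate situations ($\mu=0$, $\eta_t=0$, $\lambda_n=0$) dispatched explicitly. By contrast, the scalar normal LCP and the reduction to a per-contact problem via Prop.~3 are routine.
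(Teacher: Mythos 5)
Your proposal is correct and follows essentially the same route as the paper's own proof: exploit the diagonal surrogate Delassus operator to decouple the contacts and to decouple normal from tangential components, solve the normal part as a strictly monotone scalar complementarity problem whose unique solution is $\max(\lambda_{n,m}^{*},0)$, and resolve the tangential part by the open/stick/slip case split, with the slip impulse pinned to the cone boundary antiparallel to the surrogate sliding velocity. Your additional care with the degenerate cases ($\mu=0$, $\eta_t=0$, $\lambda_n=0$, and the non-uniqueness of $\delta$ when $\lambda_n=0$) and the optional convex-program reformulation are refinements of, not departures from, the paper's argument.
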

\begin{proof}
From $\Gamma_{c,m}=\gamma_{c,m} I_{3\times 3}$, normal component is completely decoupled from tangential component as 
\begin{align*}
    &J_{n,m}\hat{v} = \gamma_{c,m}\lambda_{n,m} + \eta_{n,m}    
\end{align*}
where $\eta_{c,m} = [\eta_{n,m},\eta_{t_1,m},\eta_{t_2,m}]^T$.
If $\lambda_{n,m}^{*}>0$, $\lambda_{n,m}=\lambda_{n,m}^{*}$ is the only solution that $J_{n,m}\hat{v}+\phi_{n,m}=0$ is satisfied. 
Else, $\lambda_{n,m}=0$ is the only solution as $J_{n,m}\hat{v}+\phi_{n,m}>0$ since $\gamma_{n,m}>0$.
Therefore, normal components are uniquely determined, and satisfy the complementarity condition is satisfied.
For tangential components, stick case is trivial. 
For slip case,
\begin{equation} \label{eq-gbeta}
    \begin{aligned}
    &\gamma_{c,m}\lambda_{t_1,m} + \eta_{t_1,m} = -\beta \lambda_{t_1,m} \\
    &\gamma_{c,m}\lambda_{t_2,m} + \eta_{t_2,m} = -\beta \lambda_{t_2,m}
    \end{aligned}
\end{equation}
must be satisfied for $\beta>0$. Substituting \eqref{eq-gbeta} to boundary of $\mathcal{C}_{\lambda}$, $\beta$ is uniquely determined as
\begin{align*}
    \beta = -\gamma_{c,m} + \frac{\| \lambda_{t,m}\|}{\mu_m\lambda_{n,m}}
\end{align*}
and therefore $\lambda_{t,m}$ is uniquely determined and equivalent to the result of the strict operator. 
\end{proof}

\subsubsection{Proximal operator}
 
As depicted in Fig.~\ref{projectfc}, a result of the proximal operator is different from the strict operator, which implies that the proximal operator-based formulation contains some approximation in SCC, as shown in Prop. 5.

\begin{proposition}
Output of Alg. 2 with the proximal operator in \eqref{eq-operator} is the unique solution of \eqref{eq-subp}, while SCC is replaced with following relaxed form:
\begin{align} \label{eq-relaxedscc}
\begin{split}
    & 0 \le \lambda_{n,m} \perp J_{n,m}\hat{v} + \phi_{n,m} - \mu_m \delta_m \ge 0 \\
    & 0 \le \delta_m \perp \mu_m\lambda_{n,m} - \| \lambda_{t,m} \| \ge 0 \\
    &\delta_m \lambda_{t,m} + \mu_m\lambda_{n,m}J_{t,m}\hat{v} = 0 
\end{split}
\end{align}
\end{proposition}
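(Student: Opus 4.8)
The plan is to recognize the proximal step $\lambda_{c,m}=\Pi^I_{\mathcal{C}_{\lambda}}(\lambda_{c,m}^{*})$ as a Euclidean projection onto a closed convex cone, rephrase it as a normal-cone inclusion (variational inequality), substitute the surrogate Delassus relation to turn that inclusion into a statement purely about $J_{c,m}\hat{v}$ and $\lambda_{c,m}$, and finally match it, case by case on where $\lambda_{c,m}$ sits in $\mathcal{C}_{\lambda}$, against the relaxed condition \eqref{eq-relaxedscc} by exhibiting a suitable auxiliary variable $\delta_m$. For uniqueness, I would note that since $\Gamma_{c,m}=\gamma_{c,m}I_{3\times 3}$ with $\gamma_{c,m}>0$, the two lines of Alg.~2 return exactly the minimizer of the strongly convex per-contact QP $\min_{\lambda_{c,m}\in\mathcal{C}_{\lambda}} \tfrac12\lambda_{c,m}^T\Gamma_{c,m}\lambda_{c,m}+(\eta_{c,m}+\phi_{c,m})^T\lambda_{c,m}$; stacking over $m$ this is a single convex QP over the product cone $\mathcal{C}_{\lambda}$, i.e., the convex contact model of \cite{tasora10coa,todorov14icra}, which has a unique solution under strong convexity. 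This already settles the uniqueness claim and the last sentence of the statement, leaving only the identification of its optimality conditions with \eqref{eq-relaxedscc}.

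Next I would use $\lambda_{c,m}^{*}=-\gamma_{c,m}^{-1}(\eta_{c,m}+\phi_{c,m})$ together with the surrogate relation $J_{c,m}\hat{v}=\gamma_{c,m}\lambda_{c,m}+\eta_{c,m}$ to obtain $\lambda_{c,m}^{*}-\lambda_{c,m}=-\gamma_{c,m}^{-1}(J_{c,m}\hat{v}+\phi_{c,m})$. The standard characterization of the projection onto a convex set gives $\lambda_{c,m}\in\mathcal{C}_{\lambda}$ and $\lambda_{c,m}^{*}-\lambda_{c,m}\in N_{\mathcal{C}_{\lambda}}(\lambda_{c,m})$, where $N_{\mathcal{C}_{\lambda}}$ is the normal cone. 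Since the normal cone of a cone is invariant under scaling by the positive factor $\gamma_{c,m}$, this reduces to $-(J_{c,m}\hat{v}+\phi_{c,m})\in N_{\mathcal{C}_{\lambda}}(\lambda_{c,m})$, recalling $\phi_{c,m}=[\phi_{n,m};0;0]$.

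The remaining work is to compute $N_{\mathcal{C}_{\lambda}}(\lambda_{c,m})$ for the three possible locations of $\lambda_{c,m}$ in the second-order cone $\mathcal{C}_{\lambda}=\{(\lambda_n,\lambda_t):\|\lambda_t\|\le\mu_m\lambda_n\}$ and check both implications against \eqref{eq-relaxedscc}. (i) Apex $\lambda_{c,m}=0$: here $N_{\mathcal{C}_{\lambda}}(0)$ is the polar cone $\mathcal{C}_{\lambda}^{\circ}=\{(u_n,u_t):-u_n\ge\mu_m\|u_t\|\}$, so the inclusion reads $J_{n,m}\hat{v}+\phi_{n,m}\ge\mu_m\|J_{t,m}\hat{v}\|$; taking $\delta_m=\|J_{t,m}\hat{v}\|$ and $\lambda_{t,m}=0$ satisfies all three lines of \eqref{eq-relaxedscc} (the open/incipient-slip case). (ii) Strict interior $\lambda_{n,m}>0$, $\|\lambda_{t,m}\|<\mu_m\lambda_{n,m}$: $N_{\mathcal{C}_{\lambda}}(\lambda_{c,m})=\{0\}$, forcing $J_{n,m}\hat{v}+\phi_{n,m}=0$ and $J_{t,m}\hat{v}=0$; choosing $\delta_m=0$ recovers the stick line of \eqref{eq-relaxedscc}. (iii) Lateral boundary $\lambda_{n,m}>0$, $\|\lambda_{t,m}\|=\mu_m\lambda_{n,m}>0$: $N_{\mathcal{C}_{\lambda}}(\lambda_{c,m})$ is the ray generated by the outward normal $(-\mu_m,\lambda_{t,m}/\|\lambda_{t,m}\|)$, so the inclusion gives $J_{n,m}\hat{v}+\phi_{n,m}=\mu_m t$ and $J_{t,m}\hat{v}=-t\,\lambda_{t,m}/\|\lambda_{t,m}\|$ for some $t\ge0$; setting $\delta_m=t=\|J_{t,m}\hat{v}\|$ makes the first line read $J_{n,m}\hat{v}+\phi_{n,m}-\mu_m\delta_m=0$ (complementary with $\lambda_{n,m}>0$), the second line holds on the boundary, and the third is immediate from anti-parallelism (the slip case). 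Since $\lambda_n=0$ forces $\lambda_t=0$, these cases are exhaustive, and conversely each line of \eqref{eq-relaxedscc} with the stated $\delta_m$ reproduces the corresponding normal-cone inclusion, giving the equivalence.

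The main obstacle is the second-order-cone geometry bookkeeping in step (iii)/(i): correctly deriving the polar cone $\mathcal{C}_{\lambda}^{\circ}$ and the lateral-boundary normal direction — the $-\mu_m$ in its normal component is precisely what produces the extra $-\mu_m\delta_m$ term distinguishing \eqref{eq-relaxedscc} from \eqref{eq-scc} — and handling the degenerate seam between the boundary case with $t=0$ and the interior case so that the case split is exhaustive and consistent. Everything else (rewriting projection as a normal-cone inclusion, the surrogate-relation substitution, and uniqueness via strong convexity) is routine once the cone geometry is pinned down.
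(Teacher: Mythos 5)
Your proposal is correct and follows essentially the same route as the paper: both identify the proximal output with the minimizer of the strongly convex per-contact QP over $\mathcal{C}_{\lambda}$ and read off its first-order optimality conditions, with uniqueness from strict convexity. The only difference is presentational — the paper writes explicit KKT multipliers $\vartheta,\vartheta'$ and identifies $\delta_m$ with the multiplier of the friction-cone constraint, whereas you express the same optimality conditions as a normal-cone inclusion and verify them case by case (apex, interior, lateral boundary), which is an equivalent formulation.
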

\begin{proof}
Proximal operator solves following optimization problem: 
\begin{equation} \label{eq-prox-opt}
\begin{aligned}
    \lambda_{c,m} = \min_{\lambda} \frac{1}{2}\lambda^T\Gamma_{c,m}\lambda + \lambda^T(\eta_{c,m}+\phi_{c,m}) \quad \text{s.t.} \quad \lambda \in \mathcal{C}_{\lambda}
\end{aligned}
\end{equation}
for all contact index $m=\left\{ 1, \cdots ,n_{c} \right\}$. 
Then KKT conditions of the problem can be written as
\begin{align} 
        &J_{n,m}\hat{v} + \phi_{n,m} - \vartheta' - \mu_m\vartheta = 0 \label{eq-proxkkt1}\\
    &J_{t,m}\hat{v} + \vartheta(\lambda_{t,m}/\|\lambda_{t,m}\|) = 0 \label{eq-proxkkt2}\\
    &\vartheta'\lambda_{n,m} = 0\label{eq-proxkkt4} \\
    &\vartheta(-\mu_m\lambda_{n,m} + \|\lambda_{t,m}\|) = 0 \label{eq-proxkkt5}\\
    &\vartheta,\vartheta' \ge 0 \label{eq-proxkkt6} \\
    &\lambda_{c,m} \in \mathcal{C}_{\lambda} \label{eq-proxkkt7}
\end{align}
for $m=\left\{ 1, \cdots ,n_{c} \right\}$ where $\vartheta,\vartheta'$ is the Lagrange multipliers. From \eqref{eq-proxkkt1}, \eqref{eq-proxkkt4}, and \eqref{eq-proxkkt6}, we can find that
\begin{align*}
    &J_{n,m}\hat{v} + \phi_{n,m} - \mu_m\vartheta = \vartheta' \ge 0 \\
    &\lambda_{n,m}(J_{n,m}\hat{v} + \phi_{n,m} - \mu_m\vartheta) = \vartheta'\lambda_{n,m} = 0
\end{align*}
holds and it is equivalent to relaxed normal complementarity condition in the statement.
For tangential component condition, open case is trivial from \eqref{eq-proxkkt7}.
Also from \eqref{eq-proxkkt5}, $\vartheta = 0$ holds for stick case and $\| J_{t,m}\hat{v} \| = 0$ is satisfied. Finally, \eqref{eq-proxkkt2} is equivalent to the condition for slip case.
Also, since \eqref{eq-prox-opt} is the strictly convex optimization problem, the solution is unique.
\end{proof}

If V-FPI iteration with proximal operator converges, original dynamics \eqref{eq-lindyn} is satisfied and
\begin{align*}
    \Gamma_c\lambda_c + \eta_c &= \Gamma_c\lambda_c + J_c(\hat{v}-WJ_c^T\lambda_c) \\
    &=A_c\lambda_c + b_c
\end{align*}
holds where $A_c = J_cA^{-1}J_c^T$ and $b_c = J_cA^{-1}b$.
Thus, $\lambda_c = \text{SOL}(\eta_c,\Gamma_c) = \text{SOL}(b_c,A_c)$ and we can find that converged solution of COND using proximal operator is equivalent to the solution of CCP \cite{tasora10coa} which can be written as 
\begin{equation} \label{eq-ccp}
\begin{aligned}
    \lambda_c = &\min_{\lambda=[\lambda_1;\cdots;\lambda_{n_c}]} \frac{1}{2}\lambda^TA_c\lambda + \lambda^T(b_c+\phi_c) \\
    &\text{s.t.} \quad \lambda_m \in \mathcal{C}_{\lambda} \quad m=\left\{ 1, \cdots ,n_{c} \right\} 
\end{aligned}
\end{equation}
Compare the conditions in Prop. 5 with \eqref{eq-scc}, we can find that Signorini condition is relaxed. Consequently, as also shown in \cite{horak19ral}, the solution of CCP may generate unplausible dynamic behavior (e.g., gliding effect during sliding).
However, as mentioned in \cite{tasora10coa}, the moderate use of $\phi$ can reduce the effect of the approximation of normal conditions, and in practice it is employed in well-known simulators such as MuJoCo and Chrono as it works quite robustly.

\begin{remark}[Uniqueness]
Although Prop. 4 and 5 conclude the uniqueness of the solution to the surrogate problem, they do not directly represent the uniqueness of the solution to the original problem.
The uniqueness of the solution is related to the number of fixed-points in the V-FPI.
\end{remark}

\subsection{Extensions} \label{subsec-invt}

Contact diagonalization allows COND to easily extend with various contact models.
Here we present some examples.
\subsubsection{Invertible contact}
In \cite{todorov14icra}, an invertible contact model based on the regularization term is proposed as follows:
\begin{align} \label{eq-invt}
\begin{split}
    \lambda_c = \text{SOL}(b_c,A_c+\Omega_c)
\end{split}
\end{align}
where $\Omega_c\in\mathbb{R}^{3n_c\times 3n_c}$ is the symmetric positive definite regularization term.
Under the model, $\lambda_c$ is uniquely determined by solving the following problem:
\begin{align*}
    \lambda_c = \text{SOL}(J_c\hat{v},\Omega_c)
\end{align*}
Invertible contact model allows for reversely calculating contact impulse (i.e., $\lambda_c$) from the velocity result (i.e., $\hat{v}$)), which can be useful in fields such as contact-implicit trajectory optimization.
Such a scheme can straightforwardly be included in COND, as it can be easily verified that \eqref{eq-invt} is also solvable using our framework by simply using $\Gamma_c+\Omega_c$ instead of $\Gamma_c$ in \eqref{eq-subp}.

\subsubsection{Anisotropic Friction}

\begin{figure}[t] 
    \centering
    \includegraphics[width=7.5cm]{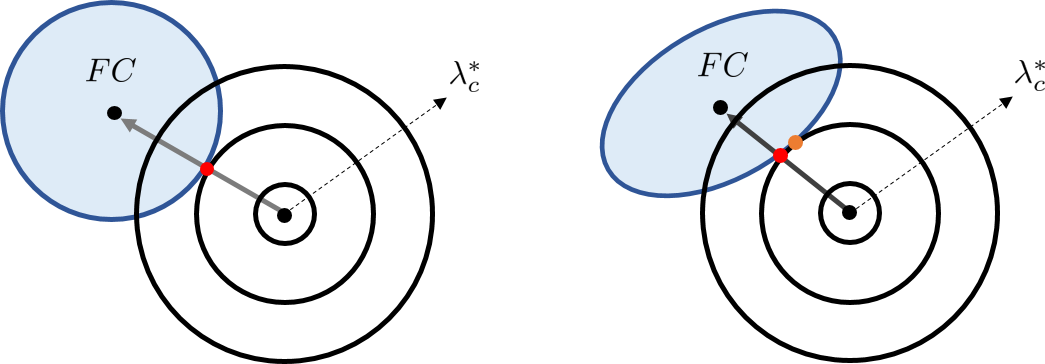}
    \caption{Illustration of the strict operator projection result (orange) for the isotropic friction model (left) and the anisotropic friction model (right). The solution with the friction direction opposite to the sliding direction (red) differs in anisotropic friction.}
    \label{mdp_fc}
\end{figure}

So far, we focus on isotropic friction model, which use the same friction coefficient for all tangential directions. 
However in some cases, anisotropic friction modeling is necessary \cite{erleben20cgf} for accurate surface modeling. 
For example, ellipsoidal cone as
\begin{align*} 
    \mathcal{C}_{\lambda} = \left\{\lambda_n,\lambda_t~\vert~ \lambda_n\ge 0,~\lambda_{n}^2 \ge \frac{\lambda_{t_1}^2}{\mu_1^2} + \frac{\lambda_{t_2}^2}{\mu_2^2} \right\}
\end{align*}
can be utilized instead of isotropic cone.
In this paper, we use the maximal dissipation principle (MDP \cite{preclik17cm}) to model the behavior under anisotropic friction.
The validity of the MDP formulation has been shown in \cite{yu16iros,ma18ral}.
Here, due to our diagonalization process, MDP can be conveniently accommodated, using the strict operator.
Details are shown in Prop. 6.

\begin{proposition}
Output of Alg. 2 with the strict operator is the maximal dissipation solution.
\end{proposition}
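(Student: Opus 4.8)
The plan is to show that the output $\lambda_{c,m}$ of Alg.~2 with the strict operator satisfies the maximal dissipation principle (MDP): among all impulses $\lambda \in \mathcal{C}_{\lambda}$ (now the ellipsoidal cone), the chosen one maximizes the dissipation rate, i.e.\ minimizes $\lambda^T J_{c,m}\hat{v}$ subject to the computed normal component $\lambda_{n,m}$. The key enabler, exactly as in Prop.~4, is that the diagonalization $\Gamma_{c,m} = \gamma_{c,m} I_{3\times 3}$ decouples the normal axis from the tangential plane, so that $J_{n,m}\hat{v} = \gamma_{c,m}\lambda_{n,m} + \eta_{n,m}$ and $J_{t,m}\hat{v} = \gamma_{c,m}\lambda_{t,m} + \eta_{t,m}$ hold componentwise after the update \eqref{eq-vup}. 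First I would dispose of the normal component: it is determined by the unprojected value $\lambda_{n,m}^{*} = -\gamma_{c,m}^{-1}(\eta_{n,m}+\phi_{n,m})$ clamped at $0$, which reproduces the Signorini complementarity argument of Prop.~4 verbatim and is unaffected by the shape of the tangential cross-section.

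Next I would treat the tangential component. Fixing $\lambda_{n,m}$, the MDP solution is the minimizer of $\lambda_{t,m}^T J_{t,m}\hat{v} = \lambda_{t,m}^T(\gamma_{c,m}\lambda_{t,m} + \eta_{t,m})$ over the ellipsoidal cross-section $\{\lambda_{t,m} : \lambda_{t_1,m}^2/\mu_1^2 + \lambda_{t_2,m}^2/\mu_2^2 \le \lambda_{n,m}^2\}$; equivalently, completing the square, it minimizes $\|\lambda_{t,m} - \lambda_{t,m}^{*}\|^2$ where $\lambda_{t,m}^{*} = -\gamma_{c,m}^{-1}\eta_{t,m}$ is exactly the unprojected tangential value of Alg.~2. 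So the MDP solution is precisely the Euclidean projection of $\lambda_{t,m}^{*}$ onto the ellipsoidal cross-section — which is what ProjectFC (strict) computes. If $\lambda_{t,m}^{*}$ is already feasible this is the stick case and the identity is immediate; otherwise the projection lands on the ellipse boundary. I would then verify that this boundary point satisfies the remaining MDP optimality/complementarity relations: the KKT stationarity condition for the projection gives $\gamma_{c,m}\lambda_{t,m} + \eta_{t,m} = -\nu\, G\lambda_{t,m}$ for some multiplier $\nu>0$, where $G = \mathrm{diag}(1/\mu_1^2, 1/\mu_2^2)$, i.e.\ $J_{t,m}\hat{v}$ is anti-parallel to the $G$-scaled outward normal of the ellipse at $\lambda_{t,m}$ — which is exactly the geometric statement of maximal dissipation for an ellipsoidal cone (the sliding velocity opposes the friction impulse in the metric induced by the cone), cf.\ Fig.~\ref{mdp_fc}.

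The main obstacle I anticipate is the tangential projection step itself: unlike the isotropic (circular) case, Euclidean projection onto an ellipse has no closed form and reduces to a scalar rootfinding problem (a quartic, or equivalently a monotone secular equation in the multiplier $\nu$). I would handle this by noting that for the purposes of \emph{this} proposition only existence, uniqueness, and the characterizing stationarity condition are needed — all of which follow from strict convexity of $\|\lambda_{t,m}-\lambda_{t,m}^{*}\|^2$ over the convex cross-section — so the argument does not require an explicit formula; the practical computation of the projection can be relegated to a remark or to the implementation. A secondary subtlety is confirming that decoupling of the normal and tangential axes still holds when the cone is ellipsoidal: it does, because decoupling is a property of $\Gamma_{c,m}$ (hence of $W$ and the nodalized $J_c$), not of $\mathcal{C}_{\lambda}$, so Prop.~3 applies unchanged and only the final projection set is modified.
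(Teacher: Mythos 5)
Your overall route is the same as the paper's: fix the normal component by the Signorini complementarity (which the diagonalization decouples from the tangential axes), recast the tangential MDP subproblem as a Euclidean projection onto the cross-section of the cone by completing the square, and identify that projection with the strict ProjectFC step. Your closing remarks are also consistent with the paper, which likewise notes that the anisotropic projection has no closed form and may require bisection, and that MDP is no longer equivalent to ``friction opposite to sliding.''

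There is, however, a concrete algebraic gap in your key step. You take the dissipation functional to be $\lambda_{t,m}^T J_{t,m}\hat{v} = \lambda_{t,m}^T(\gamma_{c,m}\lambda_{t,m}+\eta_{t,m}) = \gamma_{c,m}\|\lambda_{t,m}\|^2 + \lambda_{t,m}^T\eta_{t,m}$ and assert that completing the square turns this into $\|\lambda_{t,m}-\lambda_{t,m}^{*}\|^2$ with $\lambda_{t,m}^{*}=-\gamma_{c,m}^{-1}\eta_{t,m}$. That identity is false: $\gamma\|\lambda\|^2+\lambda^T\eta = \gamma\|\lambda+\eta/(2\gamma)\|^2 - \|\eta\|^2/(4\gamma)$, so the center of the completed square is $\lambda_{t,m}^{*}/2$, not $\lambda_{t,m}^{*}$. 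Projecting $\lambda_{t,m}^{*}/2$ onto the cross-section is genuinely different from projecting $\lambda_{t,m}^{*}$ --- even the stick/slip classification (whether the point lies inside the cross-section) can change --- so with your objective the output of Alg.~2 would not be the minimizer. The discrepancy comes from the choice of dissipation functional: the paper's MDP objective is $\tfrac{1}{2}\lambda^T\Gamma_{c}\lambda + \lambda^T(\eta_{c}+\phi_c)$, i.e., the work done by the impulse over the surrogate step ($\lambda^T$ dotted with the average of $J_c\hat{v}^{*}$ and $J_c\hat{v}^{l+1}$), not $\lambda^T$ dotted with the terminal velocity alone. With the factor $\tfrac{1}{2}$ in place one gets $\tfrac{\gamma_{c,m}}{2}\|\lambda-\lambda_{c,m}^{*}\|^2$ plus a constant, and the remainder of your argument (uniqueness from strict convexity over the convex cross-section, the KKT stationarity condition giving the $G$-scaled anti-parallelism) then goes through and coincides with the paper's proof. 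The fix is therefore to adopt the correct discrete dissipation functional before completing the square.
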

\begin{proof}
Maximal dissipation principle can be written as
\begin{align*}
    \lambda_{c} = &\min_{\lambda=[\lambda_1;\cdots;\lambda_{n_c}]} \frac{1}{2}\lambda^T\Gamma_{c}\lambda + \lambda^T(\eta_{c}+\phi_c) \\ 
    \text{s.t.}~&0 \le \lambda_{n,m} \perp \gamma_{c,m}\lambda_{n,m} +\eta_{n,m} + \phi_{n,m} \ge 0 \\
    &\lambda_m \in \mathcal{C}_{\lambda} \quad m=\left\{ 1, \cdots ,n_{c}\right\}
\end{align*}
We can easily find that the cost function in MDP is proportional to the square of the distance from $\lambda_{t,m}^{*}$ as in Fig.~\ref{mdp_fc} while the normal component is already determined to satisfy the Signorini condition. Therefore, the optimal solution is equivalent to the minimum distance projection result. 
\end{proof}

Note that, in the case of anisotropic friction, the MDP is not equivalent to that the friction is in the opposite direction which is also depicted in Fig.~\ref{mdp_fc}.
Also, the projection result of the strict operator may not be represented analytically, thus numerical methods (e.g., bisection) may be required.
However, as Alg. 2 supports parallelization for all contacts, the problem can be handled without bottlenecks.

\subsection{Complexity} \label{subsec-complexity}

Summarizing the preceding contents, it is established that all components of V-FPI in COND (Alg. 1) are made up of simple scalar algebraic operations and matrix-vector multiplication. 
More precisely, the first multiplication of $W$ in \eqref{eq-vd} and \eqref{eq-vup} has $\mathcal{O}(n)$ complexity for both time and space since $W$ is a diagonal matrix.
One-shot contact solving process (Alg. 2) also possesses $\mathcal{O}(n)$ complexity since the construction of $\Gamma_c$ does not involve any multiplication, but only element extraction from $W$, while contact solving is interpreted as $n_c$ (usually proportional to $n$) number of simple parallelizable operations.
Also, since $J_c$ can be treated as just a stack of SO(3) matrices, matrix-vector multiplication on $J_c$ has also $\mathcal{O}(n)$ complexity.
Therefore, the only part that has over linear complexity is a computation of $A\hat{v}^l$. Matrix-vector multiplication requires $\mathcal{O}(n^2)$ complexity if $A$ is dense, but in many cases, especially for deformable body parts, $A$ contains many $0$ components. As a result, COND shows the complexity near $\mathcal{O}(n)$ - see Sec.~\ref{Sec-result}. Note that despite $A$ is sparse, $A^{-1}$ is generally fully dense, therefore computation of Delassus operator $J_cA^{-1}J_c^T$ is still has a complexity near $O(n^3)$. 

\subsection{Chebyshev Acceleration}
We find that Chebyshev acceleration \cite{wang15tog} can be utilized as an efficient plug-in to accelerate the V-FPI in COND. 
For iterative linear solver to solve $A\hat{v}=b$ defined as
\begin{equation} \notag
    \hat{v}^{l+1}= A_1^{-1}(A_2\hat{v}^l-b)
\end{equation}
where $A=A_1-A_2$, Chebyshev acceleration method proposes iteration scheme as
\begin{align} \label{eq-chebyup}
    \hat{v}^{l+1}= \nu_{l+1}(A_1^{-1}(A_2\hat{v}^l-b)-\hat{v}^{l-1})+\hat{v}^{l-1}
\end{align}
with
\begin{equation} \notag
    \nu_{l+1} =\frac{2P_{l}(\frac{1}{\varrho})}{\varrho P_{l+1}(\frac{1}{\varrho})}
\end{equation}
where $P$ denotes the Chebyshev polynomial and $\varrho$ is the spectral radius of $A_1^{-1}A_2$. 
Looking closely at Alg. 1, we can find that our solver is quite similar to the linear solving scheme with $A_1=-W^{-1}, A_2=W^{-1}-A$. Therefore, it can be expected that applying Chebyshev acceleration to COND will be effective. The algorithm is clarified in Alg. 3.

\begin{algorithm}[t]
\caption{V-FPI with Chebyshev acceleration}
\label{alg3}
\begin{algorithmic}[1] 
\While{V-FPI loop}
\State{Determine step size matrix $W$}
\State {Compute $\hat{v}^*$ using \eqref{eq-pgd1}} 
\State{Construct $\Gamma_c$ based on diagonalized property \eqref{eq-diagonal}}
\State $\text{Solve } \eqref{eq-subp} \text{ using Alg. 2}$
\State Update $\hat{v}^{**}$ instead of $\hat{v}^{l+1}$ using \eqref{eq-vup}
\State $\nu_{l+1} = 
\begin{cases}
1 & \mbox{if }l<l_s \\
\frac{2}{2-\varrho^2}& \mbox{if }l=l_s \\
\frac{4}{4-\varrho^2\nu_l}& \mbox{if }l>l_s \\
\end{cases}$
\State Update $\hat{v}^{l+1}$ using \eqref{eq-chebyup}
\State Compute residual $\theta = \| \hat{v}^{l+1}-\hat{v}^l\|$
\If{$\theta < \theta_{th}$ or $l=l_{max}$} 
\State \textbf{break}
\EndIf 
\State Compute $\varrho$ using \eqref{eq-varrho}
\State $l\leftarrow l+1$
\EndWhile
\end{algorithmic}
\end{algorithm}
Here, $l_s$ is the index for slowly starting acceleration as in \cite{wang15tog} which is helpful to avoid oscillation at the beginning of the iteration. Also, since spectral radius $\varrho$ is equivalent to the largest of the absolute values of the eigenvalues, its time complexity becomes an issue for a large size matrix. For this reason, we use a simple approximation of it instead as
\begin{align} \label{eq-varrho}
    \varrho = \min(\frac{\| \hat{v}^l-\hat{v}^{l-1} \|}{\| \hat{v}^{l-1}-\hat{v}^{l-2} \|},1)
\end{align}
Intuitively, this formulation reflects the degree of a contraction property, which is directly related to the spectral radius. The min operator is necessary because monotonic decreasing of residual is not always guaranteed with the acceleration schemes. 
In practice, when Chebyshev acceleration is applied, the residual of V-FPI reduces much faster, yet with some chattering. To reduce the resulting instability, we utilize the under-relaxation strategy (i.e., $0<u<1$ in Alg. 3) which is proposed in \cite{wang15tog}.

\section{Convergence Analysis} \label{Sec-conv}
In this section, sufficient conditions for convergence of V-FPI in COND will be discussed concretely.
We first utilize the Banach Fixed-Point Theorem \cite{richard07fpt} as a basis to discuss convergence i.e.,

\begin{lemma}[Banach Fixed-Point Theorem \cite{richard07fpt}] \label{banach} 
If $f:X\rightarrow X$ is a contraction mapping on non-empty complete metric space, fixed-point iteration $x\leftarrow f(x)$ always converges to a unique fixed-point.
\end{lemma}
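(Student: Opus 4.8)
The plan is to run the fixed-point iteration from an arbitrary starting point, show the resulting sequence is Cauchy, invoke completeness to obtain a limit, and use continuity of $f$ to confirm that the limit is a fixed-point. Write $q\in[0,1)$ for the contraction constant and $d$ for the metric, so that $d(f(x),f(y))\le q\,d(x,y)$ for all $x,y\in X$. Since $X$ is non-empty, pick some $x_0\in X$ and define $x_{n+1}=f(x_n)$; a one-line induction then gives $d(x_{n+1},x_n)\le q^{n}d(x_1,x_0)$.

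First I would establish the Cauchy property. For $m>n$, the triangle inequality together with the previous bound yields $d(x_m,x_n)\le\sum_{k=n}^{m-1}d(x_{k+1},x_k)\le\left(\sum_{k=n}^{\infty}q^{k}\right)d(x_1,x_0)=\frac{q^{n}}{1-q}\,d(x_1,x_0)$, which tends to $0$ as $n\to\infty$ because $q<1$. Hence $(x_n)$ is Cauchy, and by completeness of $X$ it converges to some $x^{\ast}\in X$.

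Next I would verify that $x^{\ast}$ is a fixed-point: every contraction is Lipschitz and therefore continuous, so $f(x^{\ast})=f(\lim_n x_n)=\lim_n f(x_n)=\lim_n x_{n+1}=x^{\ast}$. For uniqueness, suppose $x^{\ast}$ and $y^{\ast}$ are both fixed-points; then $d(x^{\ast},y^{\ast})=d(f(x^{\ast}),f(y^{\ast}))\le q\,d(x^{\ast},y^{\ast})$, so $(1-q)d(x^{\ast},y^{\ast})\le 0$ and hence $x^{\ast}=y^{\ast}$. Because $x_0$ was arbitrary, every fixed-point iteration $x\leftarrow f(x)$ converges to this same unique $x^{\ast}$.

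The argument is classical and presents no genuine obstacle; the only steps meriting care are summing the geometric tail to pass from the per-step contraction estimate to the Cauchy bound, and the explicit appeal to completeness, without which the limit need not lie in $X$. It is worth emphasizing that in the intended application the substance is not this lemma but verifying that the V-FPI update map assembled from \eqref{eq-pgd1}, \eqref{eq-subp}, and \eqref{eq-vup} is actually a contraction on $\mathbb{R}^n$ under a suitable norm, which is the task taken up in the remainder of Sec.~\ref{Sec-conv}.
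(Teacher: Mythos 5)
Your proof is the standard and correct classical argument (geometric-series Cauchy estimate, completeness, continuity of the contraction, and the uniqueness inequality); the paper itself offers no proof of this lemma, citing it as a known result, so there is nothing to diverge from. Your closing remark is also on point: the real work in Sec.~\ref{Sec-conv} is verifying the contraction property of the V-FPI map, not this lemma.
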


The iteration process of COND can be written in the following function form:
\begin{align*}
f(\hat{v}) &= (I-WA)\hat{v} + Wb + W J_c^T\lambda_c \\
           &= \hat{v}^* + W J_c^T\lambda_c
\end{align*}
Here $\lambda_c$ is the function of $\hat{v}^*$, yet cannot be expressed analytically in usual cases. However, our one-shot contact solver described in Sec.~\ref{subsec-csolver} facilitates access to this difficulty. 

\begin{definition} 
V-FPI in COND is contractible if the contraction property of the iteration can be guaranteed under the proper matrix $W$.
\end{definition}
By Lemma 1, if we show the contractibility of V-FPI, it implies that we can ensure the convergence of iteration and the uniqueness of the problem solution.
We find that exact and proximal operators exhibit different mathematical properties so the analysis for each will be dealt with separately.
Note that we use 2-norm as a distance metric so further notation of $\| \cdot \|$ denotes 2-norm of a matrix/vector.
Also, $\sigma(\cdot)$ will refer eigenvalues of the matrix, and $\bar{v}_i=[\hat{v}_{i_1};\hat{v}_{i_2};\hat{v}_{i_3}]\in\mathbb{R}^3$ will indicate representative velocity of $i$-th node.

\begin{remark}[Nodalization]
The contact acts only on the (virtual) nodes among the total system degrees of freedom. For this reason, the value of the elements of $f(\hat{v})$ other than the indices of the node to which the contact acts is equal to that of $\hat{v}^*$ - the description of the corresponding content will be omitted in the proofs.
\end{remark}

\subsection{Strict Operator}

Since the strict operator gives the solution that exactly satisfy SCC, it means that if our V-FPI always converges, there is always a unique solution to the original contact problem. 
However, feasibility and uniqueness of the contact NCP solution cannot be always guaranteed \cite{preclik17cm,mazhar15tog}.
For this reason, dealing with complete convergence property of the strict operator is a fundamentally hard problem, unlike the proximal operator case (which will be explained later).
Therefore, in this paper, we consider specific type of system.

\begin{definition} 
A system is a completely-nodal system if the system is composed only of particle nodes, and contact only acts on the nodes.
\end{definition}

Even if a definition of the completely-nodal system does not cover all commonly used system dynamics expressions, it can ``represent'' any system by adopting a sufficiently large number of nodes.
We find that for the completely-nodal system with a sufficiently small time step, the contractibility of V-FPI can be guaranteed. For this, we first present the following lemma.

\begin{lemma}
Strict operator is Lipschitz continuous.
\end{lemma}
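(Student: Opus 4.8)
The plan is to show that the strict operator, viewed as the map sending the pre-projection impulse $\lambda_{c,m}^{*}$ (equivalently, the input velocity residual $\eta_{c,m}$) to the projected impulse $\lambda_{c,m} = \mathrm{ProjectFC}(\lambda_{c,m}^{*})$, is Lipschitz continuous with some finite constant, and then propagate this through to the composed iteration map $f$. First I would recall from Prop.~4 that under diagonalization $\Gamma_{c,m} = \gamma_{c,m} I_{3\times 3}$, so the normal and tangential parts decouple completely: $\lambda_{n,m} = \max(\lambda_{n,m}^{*},0)$ and $\lambda_{t,m} = \Pi^{I}_{\mathcal{C}_{\lambda_{n,m}}}(\lambda_{t,m}^{*})$. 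The normal map $x \mapsto \max(x,0)$ is $1$-Lipschitz, so that piece is immediate.

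The substance is the tangential piece. Here I would argue case-by-case over the three contact regimes (open, stick, slip) of Fig.~\ref{fig-complementarity}, and, crucially, across the \emph{boundaries} between regimes, since $\lambda_{c,m}^{*}$ can move continuously from one regime into another. In the interior of the stick regime the map is the identity (Lipschitz constant $1$); in the slip regime the impulse lies on the cone boundary and the map is the radial/nearest-point projection onto the disk of radius $\mu_m \lambda_{n,m}$, which is again $1$-Lipschitz \emph{for fixed $\lambda_{n,m}$}; and for fixed $\lambda_{t,m}^{*}$ the dependence on $\lambda_{n,m}$ (through the disk radius, and through the formula $\beta = -\gamma_{c,m} + \|\lambda_{t,m}\|/(\mu_m\lambda_{n,m})$ in Prop.~4) must be controlled — this is where I expect the only real work. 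The strict operator is the composition of the (continuous, piecewise-smooth) projection onto a set whose geometry varies with $\lambda_{n,m}$; I would bound its Jacobian on each piece and invoke continuity across pieces to conclude a global (finite) Lipschitz constant. Equivalently, one can note that $\mathrm{ProjectFC}$ is a piecewise-$C^1$ map on $\mathbb{R}^3$ that is globally continuous (the strict and proximal operators agree on the boundary geometry in the normal direction and differ only in how the tangential overshoot is apportioned), and a globally continuous, piecewise-$C^1$ map with bounded pieces is Lipschitz. Finally I would compose: $f(\hat v) = \hat v^{*} + W J_c^{T}\lambda_c$ with $\lambda_c = -\Gamma_c^{-1}(\eta_c + \phi_c)$ fed through $\mathrm{ProjectFC}$, $\eta_c = J_c \hat v^{*}$, and $\hat v^{*} = (I - WA)\hat v + Wb$ affine in $\hat v$; each link is Lipschitz, so $f$ is Lipschitz, which is the content of the lemma (and the stepping-stone to the contractibility result for small $t_k$ stated next).

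The main obstacle will be the uniform control of the tangential projection as $\lambda_{n,m}$ varies near zero, i.e., near the apex of the friction cone, where the disk radius $\mu_m\lambda_{n,m}\to 0$: one must check that the strict-operator tangential output does not change faster than linearly in the inputs there. This is handled by observing that as $\lambda_{n,m}\to 0^{+}$ the projected tangential impulse is squeezed to $0$ as well (its norm is bounded by $\mu_m\lambda_{n,m}$), so the map remains Lipschitz — in fact bounded — in a neighborhood of the apex; combined with the $1$-Lipschitz behavior away from the apex on each regime, and global continuity, one gets a single finite Lipschitz constant. Everything else is the routine bookkeeping of composing finitely many Lipschitz maps, which I would not grind through in detail.
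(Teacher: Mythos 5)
Your proposal is correct and rests on the same essential decomposition as the paper's proof: the normal part is the $1$-Lipschitz scalar $\max(\cdot,0)$, and the only real work is the tangential projection $\Pi^I_{\mathcal{C}_{\lambda_{n,m}}}$, whose dependence on the normal input must be controlled with the bound $\lVert \partial\lambda_t/\partial\lambda_n^*\rVert=\mu$ (this is exactly the derivative computation the paper makes, and it is why the paper remarks that the Lipschitz constant grows with the friction coefficient). Where you differ is in how the local bounds are assembled into a global constant. The paper splits $\lVert\lambda_1-\lambda_2\rVert$ via an intermediate point $\lambda_3^*=[\lambda_{2,n}^*;\lambda_{1,t_1}^*;\lambda_{1,t_2}^*]$, so that one leg varies only the normal input (handled by the three explicit subcases: both inside the cone, both outside, and the mixed case, the last via a scaling trick) and the other leg varies only the tangential input at a fixed cross-section (handled by nonexpansiveness of convex projection). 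You instead invoke the general principle that a globally continuous, piecewise-$C^1$ map with uniformly bounded Jacobians on each piece is Lipschitz. That buys you a shorter argument and makes the apex behavior ($\lambda_{n,m}\to 0^+$) transparent, but it is precisely the regime-crossing step that the principle sweeps under the rug and that the paper's mixed inside/outside subcase treats explicitly; to be fully rigorous you should either integrate the Jacobian bound along a segment joining the two inputs (noting it crosses the finitely many semialgebraic regime boundaries finitely often) or reproduce something like the paper's intermediate-point construction. Your final paragraph composing the bound into the full iteration map $f$ is not part of this lemma (it is the content of Theorem 1), though it is consistent with how the lemma is used.
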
 
\begin{proof}
From the property that projection on convex set is contraction \cite{studer08thesis} (and therefore, Lipschitz continuous), we can find that max operator is Lipschitz continuous, as $\lambda_n\ge0$ is the invariant convex set. Therefore, we only need to show $\Pi_{\mathcal{C}_{\lambda_{n,m}}}$ is Lipschitz continous. Suppose two inputs of $\Pi_{\mathcal{C}_{\lambda_{n,m}}}$ are $\lambda_1^*=[\lambda_{1,n}^*;\lambda_{1,t_1}^*;\lambda_{1,t_2}^*]$ and $\lambda_2^*=[\lambda_{2,n}^*;\lambda_{2,t_1}^*;\lambda_{2,t_2}^*]$ with outputs are $\lambda_1$ and $\lambda_2$ correspond to each. 
Now consider following triangular inequality:
\begin{align*}
    \| \lambda_1- \lambda_2 \| 
    &\le \| \lambda_1 - \lambda_3 \| + \| \lambda_2 - \lambda_3 \|
\end{align*}
with $\lambda_3^*=[\lambda_{2,n}^*;\lambda_{1,t_1}^*;\lambda_{1,t_2}^*]$ and output $\lambda_3$.
Then we can prove each of the following two:
\begin{itemize}
    \item $\| \lambda_1- \lambda_3 \| \le \zeta \| \lambda_1^*- \lambda_3^* \|$ for finite constant $\zeta$ 
\end{itemize}
If $\lambda_1^*,\lambda_3^*\in\mathcal{C}_{\lambda}$, it is trivial.
If $\lambda_1^*,\lambda_3^*\notin\mathcal{C}_{\lambda}$, it holds from the fact that following derivative is bounded:
\begin{align*}
{\left\lVert \frac{\partial\lambda_{t}}{\partial\lambda_n^*} \right\rVert}
= {\left\lVert \frac{\mu\lambda_{t}^*}{\| \lambda_{t}^* \|} \right\rVert}
= \mu
\end{align*}
If $\lambda_1^*\in\mathcal{C}_{\lambda},\lambda_3^*\notin\mathcal{C}_{\lambda}$, it can be derived from following inequality:
\begin{align*}
    \| \lambda_1- \lambda_3 \| \le \| \acute{\lambda}_1- \acute{\lambda}_3 \| \le \zeta\| \acute{\lambda}^*_1- \acute{\lambda}^*_3 \| = \zeta \| \lambda_1^*- \lambda_3^* \|
\end{align*}
where $\acute{\lambda}$ is the output of $\Pi_{\mathcal{C}_{\lambda_{n,m}}}$ from input $\acute{\lambda}^*=[\lambda_n^*;d\lambda_{t_1}^*;d\lambda_{t_2}^*]$ with large enough $d$ that satisfies $\acute{\lambda}^*\notin\mathcal{C}_{\lambda}$.

\begin{itemize}
    \item $\| \lambda_2- \lambda_3 \| \le \zeta \| \lambda_2^*- \lambda_3^* \|$ for finite constant $\zeta$
\end{itemize}
Since $\lambda_2$ and $\lambda_3$ share same cross-section $\mathcal{C}_{\lambda_{n,m}}$, the property is directly derived from the contraction property of convex set projection.

Finally, following inequalities hold
\begin{align*}
    &\| \lambda_1^* - \lambda_3^* \|\le\| \lambda_1^* - \lambda_2^* \| \quad \| \lambda_2^* - \lambda_3^* \|\le\| \lambda_1^* - \lambda_2^* \|
\end{align*}
and therfore Lipschitz continuity is established.
\end{proof}

Here, the Lipschitz constant is dependent on friction coefficients and increases as the friction coefficient gets bigger. Based on the lemma, we can derive the following theorem.

\begin{theorem}
Suppose that the system is a completely-nodal system. Then V-FPI in Alg. 1 with the strict operator is contractible for sufficiently small time step $t$. 
\end{theorem}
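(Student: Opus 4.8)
The plan is to show that the iteration map $f(\hat v)=(I-WA)\hat v+Wb+WJ_c^T\lambda_c$ is a contraction in the 2-norm once the time step $t$ is small enough, by splitting $f$ into its ``linear dynamics'' part and its ``contact'' part and controlling each separately. First I would fix a choice of the diagonal surrogate matrix $W$ satisfying the hypotheses of Prop.~3 — e.g.\ $\bar w_i$ taken proportional to the reciprocal of the relevant diagonal block of $A$, or simply a small constant — so that $\Gamma_c=J_cWJ_c^T$ is the block-scalar matrix of \eqref{eq-diagonal} and Alg.~2 applies. For a completely-nodal system, $A=\hat M+\tfrac12(J_e^TKJ_e+E)t$ with $\hat M=2t^{-1}M$, so $A=2t^{-1}M+O(1)$; thus $I-WA$ can be made to have spectral norm bounded by $1-ct$ for some $c>0$ uniformly in $t$ small, provided $W$ scales like $t$ as well (e.g.\ $W=\alpha t M^{-1}$ with $\alpha<1$, which is diagonal when $M$ is — true for particle nodes). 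This is the standard gradient-descent-on-a-quadratic contraction estimate and gives the first piece.

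Next I would bound the contact contribution. Writing $\hat v^*=(I-WA)\hat v+Wb$, the solver output $\lambda_c$ depends on $\hat v$ only through $\eta_c=J_c\hat v^*$, and by Prop.~3 the per-contact map is $\lambda_{c,m}=\mathrm{ProjectFC}\big(-\gamma_{c,m}^{-1}(\eta_{c,m}+\phi_{c,m})\big)$. By Lemma~3 the strict $\mathrm{ProjectFC}$ operator is Lipschitz with some constant $L_\mu$ depending only on the friction coefficients; composing with the $-\gamma_{c,m}^{-1}$ scaling and with $J_c$ (an isometry-stack, so $\|J_c\|$ is bounded), the map $\hat v\mapsto WJ_c^T\lambda_c$ is Lipschitz with constant $\le \|W\|\cdot\|J_c\|^2\cdot\max_m\gamma_{c,m}^{-1}\cdot L_\mu\cdot\|I-WA\|$. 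Since $\gamma_{c,m}=\bar w_i$ or $\bar w_i+\bar w_j$ is itself $\Theta(\|W\|)$, the factors $\|W\|\cdot\max_m\gamma_{c,m}^{-1}$ stay $O(1)$, and the remaining factor $\|I-WA\|\le 1-ct$ — crucially \emph{below one} — is what tames the contact term. Adding the two pieces, $f$ has Lipschitz constant $\le (1-ct)(1+C)$ for a constant $C$ collecting the contact factors; this is not yet $<1$, so a naive sum is too lossy.

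The main obstacle — and the step I expect to require real care — is precisely this: the contact term does not shrink on its own, so one cannot simply add Lipschitz constants. The fix I would pursue is to exploit \emph{nodalization} (Remark~3): $f$ acts as $\hat v^*$ on all non-contact coordinates, and on a contact node $i$ the update is $\bar v_i^{new}=\bar v_i^*+\bar w_i\,R_{c,m}^T\lambda_{c,m}$ (or the D-contact analogue). Because $\mathrm{ProjectFC}$ is a projection-type (hence firmly nonexpansive in its relevant components) map and $\bar w_i\gamma_{c,m}^{-1}\le 1$, the combined map ``$\bar v_i^*\mapsto\bar v_i^*+\bar w_iR^T\mathrm{ProjectFC}(-\gamma^{-1}R(\bar v_i^*+\phi/\ldots))$'' is itself nonexpansive (this is the same mechanism by which a projected-gradient step onto a convex cone is nonexpansive, adapted here to the block-scalar $\Gamma_c$; for S-contacts it is exactly a weighted projection, for D-contacts one checks it on the relative coordinate $\bar v_i-\bar v_j$ while the complementary combination passes through unchanged). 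Hence the contact post-processing contributes Lipschitz constant $\le 1$, and the overall constant is $\le\|I-WA\|\le 1-ct<1$. I would present the S-contact case in full and indicate that the D-contact case reduces to it by the change of variables to relative/average nodal velocities; the friction-dependence of $L_\mu$ then only affects \emph{how small} $t$ must be, consistent with the remark after Lemma~3 that larger friction coefficients worsen the constant. Finally, completeness of $(\mathbb{R}^n,\|\cdot\|_2)$ lets Lemma~1 (Banach) conclude convergence to a unique fixed point, which by the Consistency remark solves the original contact problem.
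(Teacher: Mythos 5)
There is a genuine gap in your final step, and it is exactly the point where the strict and proximal operators part ways. You correctly identify that naively multiplying the Lipschitz constants gives $(1-ct)(1+C)\not<1$, but your proposed fix --- that the combined nodal map $\bar v_i^*\mapsto \bar v_i^*+\bar w_i R_{c,m}^T\,\mathrm{ProjectFC}\bigl(-\gamma_{c,m}^{-1}(\eta_{c,m}+\phi_{c,m})\bigr)$ is nonexpansive because $\mathrm{ProjectFC}$ is ``projection-type, hence firmly nonexpansive'' --- fails for the \emph{strict} operator. The strict operator is a two-stage projection (normal component clipped first, then the tangential component projected onto the cross-section determined by the already-clipped normal), not the metric projection onto the convex cone $\mathcal{C}_\lambda$; the monotonicity property $(\xi^1-\xi^2)^T(\lambda^1-\lambda^2)\ge 0$ that underlies the nonexpansiveness of a projected step is precisely what the paper's Lemma~3 establishes for the proximal operator and explicitly refutes for the strict operator (Fig.~\ref{counter}). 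Your argument, if valid, would prove contractibility for the strict operator with no restriction on $t$ beyond $\|I-WA\|<1$, i.e.\ it would make Theorem~1 as strong as Theorem~2 --- which is exactly what the distinction between the two operators rules out. What is actually available for the strict operator is only Lipschitz continuity with a constant $\zeta_m$ that grows with the friction coefficient and can exceed $1$ (Lemma~2).

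The repair is already latent in your own setup: you do not need the contact step to be nonexpansive, because with the choice $W=\hat M^{-1}=\tfrac{t}{2}M^{-1}$ (diagonal for a completely-nodal system, hence admissible for Prop.~3) one gets $WA = I + \tfrac{t}{2}\hat M^{-1}(J_e^TKJ_e+E)$, so $\|I-WA\|=O(t^2)$ tends to \emph{zero} as $t\to 0$, not merely below $1$. Your estimate $\|I-WA\|\le 1-ct$ (coming from $W=\alpha t M^{-1}$ with generic $\alpha$) throws this away. With the stronger bound, the overall Lipschitz constant of the iteration is $\zeta\,\|I-WA\|$, where $\zeta$ collects the per-contact constants $1+\zeta_m$ (S-contact) and their D-contact analogues from the triangle inequality, and this product is driven below $1$ by shrinking $t$ --- which is the paper's proof and also explains why the theorem statement must carry the ``sufficiently small $t$'' hypothesis at all. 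Aside from this, your first part (choice of diagonal $W$, reduction to per-node estimates via nodalization, appeal to Lemma~2 and Banach) matches the paper's route.
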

\begin{proof}
Consider that $m$-th contact is S-contact on $i$-th node. From
\begin{align*}
    \bar{v}_{i}^{1,l+1} - \bar{v}_{i}^{2,l+1} =\bar{v}_{i}^{1,*} - \bar{v}_{i}^{2,*} + \bar{w}_{i}R_{c,m}^T(\lambda_{c,m}^1-\lambda_{c,m}^2)
\end{align*}
with triangular inequality,
\begin{align*} 
    \| \bar{v}_{i}^{1,l+1} - \bar{v}_{i}^{2,l+1} \| \le \| \bar{v}_{i}^{1,*} - \bar{v}_{i}^{2,*} \| + \bar{w}_{i} \| \lambda_{c,m}^1-\lambda_{c,m}^2 \|
\end{align*}
holds. From Lemma 2, let us Lipschitz constant of the strict operator as $\zeta_m$. Then
\begin{equation} \notag
    \| \lambda_{c,m}^1-\lambda_{c,m}^2 \| \le \bar{w}_{i}^{-1} \zeta_m \| \bar{v}_i^{1,*}-\bar{v}_i^{2,*} \|
\end{equation}
is satisfied and we can find that
\begin{align} \label{eq-thm1-1}
    \| \bar{v}_{i}^{1,l+1} - \bar{v}_{i}^{2,l+1} \| \le
    (1+\zeta_m) \| \bar{v}_i^{1,*}-\bar{v}_i^{2,*} \| 
\end{align}
holds. Now consider that $m$-th contact is D-contact on $i,j$-th node. Then similarly to above,
\begin{align*}
    &\| \bar{v}_{i}^{1,l+1} - \bar{v}_{i}^{2,l+1} \| \\
    &\le \| \bar{v}_{i}^{1,*} - \bar{v}_{i}^{2,*} \| + 
    \zeta_m (\| \bar{v}_{i}^{1,*} - \bar{v}_{i}^{2,*} \|+\| \bar{v}_{j}^{1,*} - \bar{v}_{j}^{2,*} \|) \\
    &\| \bar{v}_{j}^{1,l+1} - \bar{v}_{j}^{2,l+1} \| \\
    &\le \| \bar{v}_{j}^{1,*} - \bar{v}_{j}^{2,*} \| + 
    \zeta_m (\| \bar{v}_{j}^{1,*} - \bar{v}_{j}^{2,*} \|+\| \bar{v}_{i}^{1,*} - \bar{v}_{i}^{2,*} \|) 
\end{align*}
is satisfied and we can find that
\begin{equation} \label{eq-thm1-2}
    \begin{aligned}
    &\| \bar{v}_{i}^{1,l+1} - \bar{v}_{i}^{2,l+1} \|^2 +
    \| \bar{v}_{j}^{1,l+1} - \bar{v}_{j}^{2,l+1} \|^2 \\
    &\le\zeta'_m (\| \bar{v}_{i}^{1,*} - \bar{v}_{i}^{2,*} \|^2 +
    \| \bar{v}_{j}^{1,*} - \bar{v}_{j}^{2,*} \|^2) 
    \end{aligned}
\end{equation}
holds for finite $\zeta'_m$. Applying \eqref{eq-thm1-1} and \eqref{eq-thm1-2} to all contact,
\begin{align*}
    \| \hat{v}^{1,l+1} - \hat{v}^{2,l+1} \| \le \zeta \| \hat{v}^{1,*} - \hat{v}^{2,*} \|
\end{align*}
is established for finite $\zeta$. Therefore from $\hat{v}^*=(I-WA)\hat{v}^l$, if $\| I-WA \| < \frac{1}{\zeta}$, V-FPI satisfies contraction property. Now recall the structure of $A$ in \eqref{eq-akbk}. Suppose $W=\hat{M}^{-1}$ as it satisfies the structure in Prop. 3 as the system is a complete-nodal system. Finally, from
\begin{align*}
    \| I-WA \| = \frac{t^2}{2}\| M^{-1}(J_e^TKJ_e + E) \| 
\end{align*}
we can reach $\| I-WA \| < \frac{1}{\zeta}$ by lowering $t$.
\end{proof}

Although Thm. 1 does not describe the complete convergence property of the solver, it provides a partial answer to the uniqueness and existence of the contact NCP solution in the multi-contact situation which is previously unknown.
We also believe that this result suggests that in the case of a completely-nodal system, small local deformation of the contact part is possible even when it is a rigid body, so the contradictions arising from assuming an ideal rigid body can be alleviated.
Extensive study for generalization of the completely-nodal system will remain as future work. 
Note that aside from the theoretical analysis, we empirically observe that the solver is robustly convergent in the general case. 

\subsection{Proximal Operator}
As mentioned earlier, when the proximal operator is used, it is the same as the solution of a convex optimization, which indicates the existence and uniqueness of the solution. 
We find that our algorithm can always ensure convergence in this case.
To show this, we first utilize the following lemma.
\begin{lemma} \label{theta}
Consider the following equations:
\begin{align*}
    &\lambda^1_{c,m} = \Pi^I_{\mathcal{C}_\lambda}(-\gamma_{c,m}^{-1}(\eta_{c,m}^1+\phi_{c,m})) \\
    &\lambda^2_{c,m} = \Pi^I_{\mathcal{C}_\lambda}(-\gamma_{c,m}^{-1}(\eta_{c,m}^2+\phi_{c,m}))
\end{align*}
Then for $\xi_{c,m}=-\gamma_{c,m}^{-1}(\eta_{c,m}+\phi_{c,m})-\lambda_{c,m}$, following is holds:
\begin{align*}
    (\xi_{c,m}^1-\xi_{c,m}^2)^T(\lambda_{c,m}^1-\lambda_{c,m}^2) \ge 0  
\end{align*}
\end{lemma}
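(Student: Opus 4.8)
The plan is to recognize this lemma as the standard monotonicity property of the Euclidean projection onto a convex set. Write $u^i = -\gamma_{c,m}^{-1}(\eta_{c,m}^i+\phi_{c,m})$ for $i=1,2$, so that $\lambda_{c,m}^i = \Pi^I_{\mathcal{C}_\lambda}(u^i)$ and $\xi_{c,m}^i = u^i - \lambda_{c,m}^i$. The claim then reads $(\xi^1 - \xi^2)^T(\lambda^1 - \lambda^2) \ge 0$, i.e. $((u^1-\lambda^1)-(u^2-\lambda^2))^T(\lambda^1-\lambda^2) \ge 0$. This is exactly the assertion that the residual map $u \mapsto u - \Pi_{\mathcal{C}_\lambda}(u)$ is monotone, which is a classical consequence of firm nonexpansiveness of projections onto convex sets.

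The key steps, in order: first, invoke the variational characterization of the Euclidean projection onto the closed convex set $\mathcal{C}_\lambda$: for any $x$ and any $z\in\mathcal{C}_\lambda$, $(x - \Pi_{\mathcal{C}_\lambda}(x))^T(z - \Pi_{\mathcal{C}_\lambda}(x)) \le 0$. Second, apply this twice: once with $x = u^1$ and $z = \lambda^2$ (legitimate since $\lambda^2\in\mathcal{C}_\lambda$), giving $(\xi^1)^T(\lambda^2 - \lambda^1) \le 0$; once with $x = u^2$ and $z = \lambda^1$, giving $(\xi^2)^T(\lambda^1 - \lambda^2) \le 0$. Third, add the two inequalities: $(\xi^1)^T(\lambda^2-\lambda^1) + (\xi^2)^T(\lambda^1-\lambda^2) \le 0$, which rearranges to $(\xi^1 - \xi^2)^T(\lambda^1 - \lambda^2) \ge 0$, as desired. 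Since $\mathcal{C}_\lambda$ (the friction cone of \eqref{eq-genfc}) is a closed convex set and $\Gamma_{c,m}=\gamma_{c,m}I$ makes the weighted projection $\Pi^I$ the ordinary Euclidean projection, all hypotheses of the variational inequality are met.

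There is no real obstacle here — the only thing to be careful about is making sure the convexity of $\mathcal{C}_\lambda$ is genuinely available (it is, being a second-order/ice-cream cone) and that $\phi_{c,m}$ and $\gamma_{c,m}$ are just fixed data so the inputs $u^1, u^2$ are arbitrary vectors in $\mathbb{R}^3$ to which the projection characterization applies verbatim. If one wanted a self-contained argument, the projection inequality itself follows from the first-order optimality condition of $\min_{y\in\mathcal{C}_\lambda}\tfrac12\|y-x\|^2$: $\nabla$ at the minimizer $\bar y = \Pi_{\mathcal{C}_\lambda}(x)$ satisfies $(\bar y - x)^T(z-\bar y)\ge 0$ for all $z\in\mathcal{C}_\lambda$. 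I would cite this (e.g. via the same reference used for "projection on convex set is contraction" in Lemma 2) rather than rederive it, and then the lemma is two lines of algebra. This monotonicity is precisely what will be needed to bound $\|\lambda_{c,m}^1-\lambda_{c,m}^2\|$ in terms of $\|\eta_{c,m}^1-\eta_{c,m}^2\|$ and hence establish contractibility of the V-FPI with the proximal operator in the theorem that follows.
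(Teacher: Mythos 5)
Your proof is correct and is essentially the same argument as the paper's: the variational inequality $(x-\Pi_{\mathcal{C}_\lambda}(x))^T(z-\Pi_{\mathcal{C}_\lambda}(x))\le 0$ you invoke is exactly the paper's statement that $\xi_{c,m}^i$ lies in the normal cone $\mathcal{N}_{\mathcal{C}_\lambda}(\lambda_{c,m}^i)$, and both proofs conclude by adding the two resulting inequalities.
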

\begin{proof}
Since $\mathcal{C}_{\lambda}$ is a convex set, proximal operator has following property \cite{studer08thesis}:
\begin{align*}
&x=\Pi^I_{\mathcal{C}_\lambda}(x^*) \rightarrow x^* \in x + \mathcal{N}_{\mathcal{C}_\lambda}(x)
\end{align*}
where $\mathcal{N}$ is the normal cone defined as
\begin{align*}
&\mathcal{N}_{\mathcal{C}_\lambda}(x) = \left\{y\vert y^T(x'-x) \le 0 \right\} \quad \forall x'\in \mathcal{C}_\lambda,x \in \mathcal{C}_\lambda
\end{align*}
Therefore, $\xi_{c,m}^1\in\mathcal{N}_{\mathcal{C}_\lambda}(\lambda_{c,m}^1),\xi_{c,m}^2\in\mathcal{N}_{\mathcal{C}_\lambda}(\lambda_{c,m}^2)$ and
\begin{align*}
    &(\xi_{c,m}^1)^T(\lambda_{c,m}^1-\lambda_{c,m}^2) \ge 0  \\
    &(\xi_{c,m}^2)^T(\lambda_{c,m}^2-\lambda_{c,m}^1) \ge 0
\end{align*}
is satisfied from the definition of normal cone.
\end{proof}

\begin{figure}[t] 
    \centering
    \includegraphics[width=5.5cm]{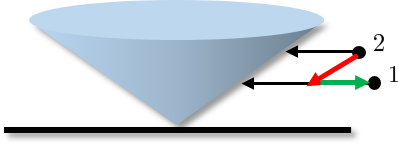}
    \caption{Counter-example of Lemma 3 on strict operator. Red vector denotes $\lambda_c^1-\lambda_c^2$ and green vector denotes $\xi_c^1-\xi_c^2$.}
    \label{counter}
\end{figure}

As depicted in Fig.~\ref{counter}, Lemma 3 does not always hold for the strict operator, since it takes a two-stage projection rather than finding the nearest point.
Based on this lemma, we can prove the contractibility of our algorithm, which is shown in Thm. 2.

\begin{theorem}
V-FPI in Alg. 1 with the proximal operator is contractible.
\end{theorem}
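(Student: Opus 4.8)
The plan is to establish the contraction property required by Lemma 1 (Banach Fixed-Point Theorem), but measured in the $W^{-1}$-weighted norm $\|x\|_{W^{-1}}^2 := x^T W^{-1} x$ rather than the plain 2-norm; since $W$ is symmetric positive definite and fixed throughout a single V-FPI loop, this metric is equivalent to the Euclidean one, so a contraction in it suffices. Write the iteration map as $f(\hat{v}) = \hat{v}^* + W J_c^T \lambda_c$ with $\hat{v}^* = (I-WA)\hat{v} + Wb$. For two inputs $\hat{v}^1,\hat{v}^2$ put $\Delta\hat{v} := \hat{v}^1-\hat{v}^2$, $\Delta\hat{v}^* := (I-WA)\Delta\hat{v}$, $\Delta\eta_c := J_c\Delta\hat{v}^*$, $\Delta\lambda_c := \lambda_c^1-\lambda_c^2$, and $\Delta\xi_c := \xi_c^1-\xi_c^2$ in the notation of Lemma 3. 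Within a V-FPI loop the quantities $A,b,W,J_c,\Gamma_c,\phi_c$ are all constant, so only $\hat{v}$ varies between the two evaluations.

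First I would expand $\|f(\hat{v}^1)-f(\hat{v}^2)\|_{W^{-1}}^2 = \|\Delta\hat{v}^* + W J_c^T\Delta\lambda_c\|_{W^{-1}}^2$; using $W^{-1}W = I$ and $\Gamma_c = J_c W J_c^T$ from Prop. 3, this equals $\|\Delta\hat{v}^*\|_{W^{-1}}^2 + 2\Delta\eta_c^T\Delta\lambda_c + \Delta\lambda_c^T\Gamma_c\Delta\lambda_c$. Next I would substitute the defining relation of Lemma 3, namely $\xi_{c,m} = -\gamma_{c,m}^{-1}(\eta_{c,m}+\phi_{c,m})-\lambda_{c,m}$, rewritten as $\eta_{c,m} = -\gamma_{c,m}(\lambda_{c,m}+\xi_{c,m})-\phi_{c,m}$, so that (with $\phi_{c,m}$ constant) $\Delta\eta_{c,m} = -\gamma_{c,m}(\Delta\lambda_{c,m}+\Delta\xi_{c,m})$. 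Plugging this in together with $\Gamma_{c,m} = \gamma_{c,m}I_{3\times 3}$ collapses the two contact terms into $-\sum_m \gamma_{c,m}\|\Delta\lambda_{c,m}\|^2 - 2\sum_m \gamma_{c,m}\,\Delta\xi_{c,m}^T\Delta\lambda_{c,m}$. Both sums are non-positive: $\gamma_{c,m}>0$ because $W$ is positive definite, and $\Delta\xi_{c,m}^T\Delta\lambda_{c,m}\ge 0$ by Lemma 3 — this is precisely the step that uses the proximal operator and fails for the strict operator (cf. Fig.~\ref{counter}). Hence $\|f(\hat{v}^1)-f(\hat{v}^2)\|_{W^{-1}}^2 \le \|\Delta\hat{v}^*\|_{W^{-1}}^2$.

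It then remains to contract the linear part. Setting $u := W^{-1/2}\Delta\hat{v}$ and using $(I-WA)W^{1/2} = W^{1/2}(I - W^{1/2}AW^{1/2})$ gives $\|\Delta\hat{v}^*\|_{W^{-1}}^2 = \|(I-W^{1/2}AW^{1/2})u\|^2 \le \|I-W^{1/2}AW^{1/2}\|^2\,\|\Delta\hat{v}\|_{W^{-1}}^2$. Since $A$ is symmetric positive definite (Prop. 1) and $W$ is symmetric positive definite, $W^{1/2}AW^{1/2}$ is symmetric positive definite with bounded spectrum; choosing $W$ small enough — e.g. $W = \alpha I$ with $0 < \alpha < 2/\max\sigma(A)$, which trivially satisfies the block structure demanded by Prop. 3 — forces every eigenvalue of $W^{1/2}AW^{1/2}$ into $(0,2)$, so $\|I-W^{1/2}AW^{1/2}\| < 1$. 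Combining the two estimates, $f$ is a contraction in $\|\cdot\|_{W^{-1}}$ for this proper $W$; thus the V-FPI with the proximal operator is contractible, and Lemma 1 gives convergence to the unique fixed point.

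I expect the key idea (and the only genuinely nontrivial step) to be the choice of the $W^{-1}$-weighted metric: it is what makes the cross term generated by the contact update telescope against the surrogate Delassus quadratic form $\Delta\lambda_c^T\Gamma_c\Delta\lambda_c$, so that Lemma 3 can be invoked to drop the entire contact contribution. In the plain Euclidean norm the $WJ_c^T\Delta\lambda_c$ term does not combine cleanly and the monotonicity of Lemma 3 cannot be used; everything after that — the similarity transform and the spectral choice of $W$ — is routine.
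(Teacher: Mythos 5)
Your proof is correct, and it reaches the same two pillars as the paper's argument --- the normal-cone monotonicity of the proximal projection (Lemma 3) to kill the contact contribution, and the spectral choice $W=\alpha I$, $0<\alpha<2/\sigma_{\max}(A)$, to contract the linear part --- but it assembles them differently. The paper expands $\|\bar v_i^{1,l+1}-\bar v_i^{2,l+1}\|^2$ node by node in the plain Euclidean norm, treating S- and D-contacts as separate cases and explicitly requiring $\bar w_i=\bar w_j$ for D-contacts so that the per-node inequalities \eqref{eq-thm2-1}--\eqref{eq-thm2-2} can be summed; you instead work globally in the $W^{-1}$-weighted norm, where the cross term $2\Delta\eta_c^T\Delta\lambda_c$ telescopes against the surrogate Delassus quadratic $\Delta\lambda_c^T\Gamma_c\Delta\lambda_c$ in one stroke. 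This buys a cleaner and slightly more general non-expansiveness statement for the contact step: it holds for any diagonal $W$ with the per-node structure of Prop.~3 (the S/D case split and the $\bar w_i=\bar w_j$ hypothesis become unnecessary), and the non-contact indices are handled automatically rather than via the Nodalization remark. The similarity transform $W^{-1/2}(I-WA)W^{1/2}=I-W^{1/2}AW^{1/2}$ then gives the linear contraction factor, which for your final choice $W=\alpha I$ coincides with the paper's $\rho(I-\alpha A)<1$, and since the $W^{-1}$-norm is then a positive multiple of the 2-norm, the contraction constant agrees with the paper's metric as well. What the paper's version buys in exchange is explicitness: it exhibits exactly which node-level velocity discrepancies are controlled at each contact, which is the form reused in the strict-operator analysis of Thm.~1. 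One small bookkeeping point on your side: for Banach's theorem the map must be the same at every iteration, so $W$ must be held fixed across the V-FPI loop (Alg.~1 re-determines it each pass); you state this assumption explicitly, and the paper makes it implicitly, so it is not a gap relative to the paper.
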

\begin{proof}
Suppose that $m$-th contact is a S-contact on $i$-th node. From the 
\begin{align*}
&\bar{v}_{i}^{1,l+1} - \bar{v}_{i}^{2,l+1} =\bar{v}_{i}^{1,*} - \bar{v}_{i}^{2,*} + \bar{w}_{i} R_{c,m}^T(\lambda_{c,m}^1-\lambda_{c,m}^2)
\end{align*}
we can derive following equations:
\begin{align*}
&\| \bar{v}_{i}^{1,l+1} - \bar{v}_{i}^{2,l+1} \|^2 \\
&= \| \bar{v}_{i}^{1,*} - \bar{v}_{i}^{2,*} \|^2 + \bar{w}_{i}^2\|\lambda_{c,m}^1-\lambda_{c,m}^2 \|^2 \\
&+ 2\bar{w}_{i}(\bar{v}_{i}^{1,*}-\bar{v}_{i}^{2,*})^TR_{c,m}^T(\lambda_{c,m}^1-\lambda_{c,m}^2) \\
&= \| \bar{v}_{i}^{1,*} - \bar{v}_{i}^{2,*} \|^2 - \bar{w}_{i}^2\|\lambda_{c,m}^1-\lambda_{c,m}^2 \|^2 \\
&- 2\bar{w}_{i}(\xi_{c,m}^{1}-\xi_{c,m}^{2})^T(\lambda_{c,m}^1-\lambda_{c,m}^2) 
\end{align*}
Then from Lemma 3, we can find that
\begin{align} \label{eq-thm2-1}
&\| \bar{v}_{i}^{1,l+1} - \bar{v}_{i}^{2,l+1} \|^2 \le \| \bar{v}_{i}^{1,*} - \bar{v}_{i}^{2,*} \|^2 
\end{align}
holds. Now suppose that $m$-th contact is D-contact on $i,j$-th nodes and $\bar{w}_{i}=\bar{w}_{j}$ holds. From
\begin{align*}
  &\bar{v}_{i}^{1,l+1} - \bar{v}_{i}^{2,l+1} = (\bar{v}_{i}^{1,*} - \bar{v}_{i}^{2,*})+\bar{w}_{i} R_{c,m}^T(\lambda_{c,m}^1-\lambda_{c,m}^2) \\
  &\bar{v}_{j}^{1,l+1} - \bar{v}_{j}^{2,l+1} = (\bar{v}_{j}^{1,*} - \bar{v}_{j}^{2,*})-\bar{w}_{i} R_{c,m}^T(\lambda_{c,m}^1-\lambda_{c,m}^2) 
\end{align*}
we can derive following equations:
\begin{align*}
    &\| \bar{v}_{i}^{1,l+1} - \bar{v}_{i}^{2,l+1} \|^2 +
    \| \bar{v}_{j}^{1,l+1} - \bar{v}_{j}^{2,l+1} \|^2 \\
    &= \| \bar{v}_{i}^{1,*} - \bar{v}_{i}^{2,*} \|^2 +
    \| \bar{v}_{j}^{1,*} - \bar{v}_{j}^{2,*} \|^2 \\
    &+ 2\bar{w}_{i} (\lambda_{c,m}^1-\lambda_{c,m}^2)^TR_{c,m}((\bar{v}_{i}^{1,*}-\bar{v}_{j}^{1,*})- (\bar{v}_{i}^{2,*}-\bar{v}_{j}^{2,*})) \\
    &+2\bar{w}_{i}^2 \| \lambda_{c,m}^1-\lambda_{c,m}^2 \|^2 \\
    &=\| \bar{v}_{i}^{1,*} - \bar{v}_{i}^{2,*} \|^2 +
    \| \bar{v}_{j}^{1,*} - \bar{v}_{j}^{2,*} \|^2 \\
    &-4\bar{w}_{i}^2(\lambda_{c,m}^1-\lambda_{c,m}^1)^T(\xi_{c,m}^1-\xi_{c,m}^2) -2\bar{w}_{i}^2 \| \lambda_{c,m}^1-\lambda_{c,m}^2 \|^2
\end{align*}
Then from Lemma 3, we can find that
\begin{equation} \label{eq-thm2-2}
    \begin{aligned}
    &\| \bar{v}_{i}^{1,l+1} - \bar{v}_{i}^{2,l+1} \|^2 +
    \| \bar{v}_{j}^{1,l+1} - \bar{v}_{j}^{2,l+1} \|^2 \\
    &\le\| \bar{v}_{i}^{1,*} - \bar{v}_{i}^{2,*} \|^2 +
    \| \bar{v}_{j}^{1,*} - \bar{v}_{j}^{2,*} \|^2 
    \end{aligned}
\end{equation}
Applying \eqref{eq-thm2-1} and \eqref{eq-thm2-2} to all contacts,
\begin{align*}
\| \hat{v}^{1,l+1} - \hat{v}^{2,l+1} \| \le \| \hat{v}^{1,*} - \hat{v}^{2,*} \|
\end{align*}
is established. Therefore, from $\hat{v}^*=(I-WA)\hat{v}^l$,
contraction property is satisfied if $\| I-WA \| < 1$ holds.
Now consider $W = \alpha I$ with $0<\alpha < 2\sigma^{-1}_{max}(A)$. 
It satisfies the structure in Prop. 3 and $\bar{w}_{i}=\bar{w}_{j}$ for D-contact that we suppose.
Also, since $A$ is a symmetric matrix, $I-WA=I-\alpha A$ is also a symmetric matrix. Then
\begin{align*}
    \| I-WA \| = \rho(I-WA)
\end{align*}
holds. Also from the positive definite property of $A$ shown in Prop. 1, we can easily find that
\begin{align*}
    \rho(I-WA) < 1
\end{align*}
can be ensured, therefore, the V-FPI is contractible.
\end{proof}

The above implies an interesting insight; contacts are contributing property for the contraction property of the iteration process. 
For this reason, we observe that COND converges faster than proceeding linear solving on a gradient basis in the absence of contact, and convergence is still well established even in the case that $\| I-WA \|$ is slightly over $1$.

\subsection{Determination of $W$} \label{subsec-SSM}
As discussed above, choosing $W$ is the important part of the solver since the sufficient condition to guarantee the global convergence is directly related to the value of $\|I-WA\|$. As already known,
\begin{align*}
& W = \alpha I \text{ where } \alpha = \frac{2}{\sigma_{min}(A)+\sigma_{max}(A)}
\end{align*}
can enforce $\| I-WA \| < 1$ with the conditions required in Prop. 3.
However, computing the eigenvalues of a large size matrix is generally time-consuming and using the fixed step size for every iteration, and index is not favorable in terms of performance. 
Here, we propose some efficient and robust strategies to determine $W$.
These strategies  work well for all the cases we tested.

\subsubsection{Frobenius norm minimization}
Frobenius norm of $I-WA$ which can be written as:
\begin{align*}
&\| I-WA \|_F^2 =  n-\sum_{i=1}^nw_{ii}a_{ii}+\sum_{i=1}^n\| A_{i*} \|^2
\end{align*}
where subscript $i*$ denotes $i$-th row of matrix.
Considering $w_{i_1i_1}=w_{i_2i_2}=w_{i_3i_3}=\bar{w}_{i}$ for each node that in contact, first-order necessary condition to minimize $\| I-WA \|_F^2$ can be written as
\begin{align*}
&\bar{w}_{i} = \frac{a_{i_1i_1}+a_{i_2i_2}+a_{i_3i_3}}{\| A_{i_1*} \|^2+\| A_{i_2*} \|^2+\| A_{i_3*} \|^2} 
\end{align*}
If the nodes are in D-contact, as described above, $6$ components need to be the same value and the result can be derived similarly to above. For the index corresponds to the object that is not in contact, as it does not affect $\Gamma_c$,
\begin{align*}
&w_{ii} = \frac{a_{ii}}{\| A_{i*} \|^2}
\end{align*}
can be used as in \cite{tarazaga09cma}. 

\subsubsection{Barzilai-Borwein step size}

Barzilai-Borwein method \cite{dai05nm} is the popular method to solve large-scale optimization problems.
Even though the problem we are dealing with is not an optimization problem, we find that the strategy can also be well adopted to our framework. Here, $\alpha$ can be determined by two ways:
\begin{align*}
    \alpha_{bb1}^l = \frac{s_l^Ts_l}{s_l^Tz_l} \quad \alpha_{bb2}^l=\frac{s_l^Tz_l}{z_l^Tz_l}
\end{align*}
where $s_l = \hat{v}^l-\hat{v}^{l-1}$ and $z_l=A(\hat{v}^l-\hat{v}^{l-1})$. 
Any of the two can be selected or can be used alternately as in \cite{dai05nm}.

We observe that the two methods show similar performance levels and that better methods vary depending on the scenario.
Frobenius norm minimization takes longer to construct $W$ since it corresponds to $\mathcal{O}(n^2)$ complexity. We find that recycling $W$ for several time step sections can circumvent this, utilizing the fact that the value of $W$ does not significantly change for the adjacent time step. 
Note that reuse of $W$ does not affect the solver accuracy, since the variables associated with the true physical state (e.g., $A,b,J_c$) remain unchanged.

\section{Results and Evaluation} \label{Sec-result}

In this section, simulation results of diverse scenarios using COND with performance evaluation will be presented. 

\subsection{Implementation Details}

\subsubsection{Tools}
For our implementation, we use Intel Core i5-7500 CPU 3.40GHz (Quad-Core), OpenGL as rendering tool, C++ Eigen as matrix computation library, and C++ OpenMP as parallelization library.

\subsubsection{Matrix format}
To handle large-size matrices efficiently, we use compressed sparse column (CSC) format to store $A$ and $K$. $J_c$ is stored as a stack of $3\times 3$ matrices. The matrices which are always guaranteed to be diagonal like $\Gamma_c$ and $W$ are stored in vector format.

\subsubsection{Collision detection}
Collision detection is performed per time step.
Here we use self-developed code, which is mainly based on vertex-volume detection.

\subsubsection{Solver specification}
For all examples, we use fixed time step as $ t_k=10~\rm{ms}$.
The strict operator is used in contact solver for all results except in Sec.~\ref{subsec-invertible}.

\subsubsection{Warm start}
A warm start is utilized to promote the performance of the solver. 
For the part that is not a virtual node, we directly set $\hat{v}^1$ as the $\hat{v}$ value from the previous time step. The warm start of virtual nodes velocity is calculated by multiplying Jacobian mapping $J_v$ to the original state (i.e., state without virtual nodes) value.
Note that this velocity warm start is not available in I-FPI based methods.
Instead, we utilize the warm start in I-FPI to the previous time step value for the contact index overlapping the previous step.

\subsection{Effect of Virtual Nodes} \label{subsec-effvn}

\begin{figure}[t] 
    \centering
    \subfigure[Box sliding. Analytic solution (green box), COND (red box) and Penalty2 (blue box). Left: start, Right: after sliding.]{
    \includegraphics[width=4cm]{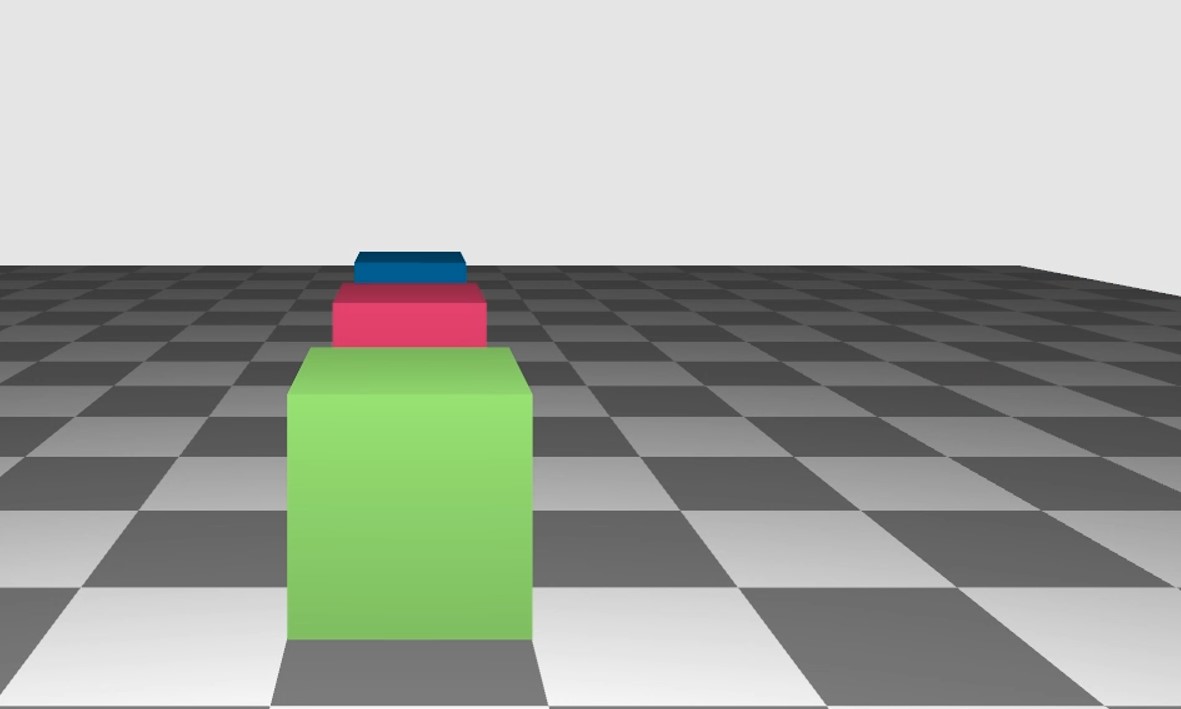}
    \includegraphics[width=4cm]{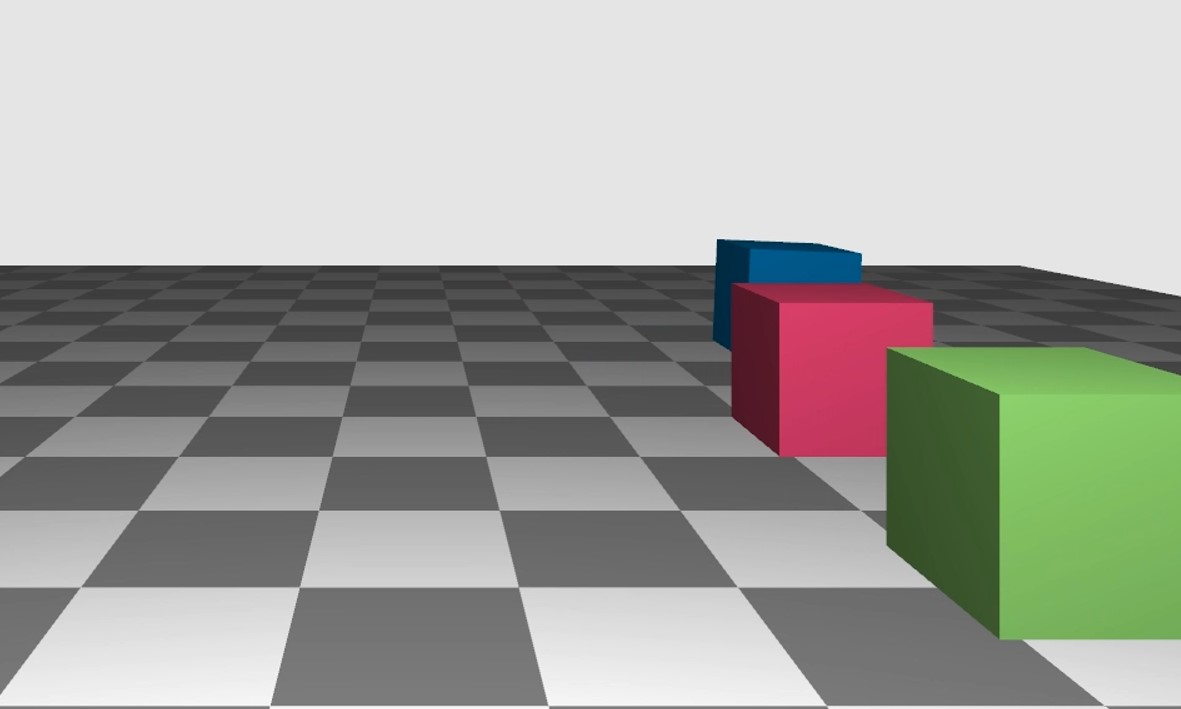}
    }
    \subfigure[Position of box]{
    \includegraphics[width=7cm]{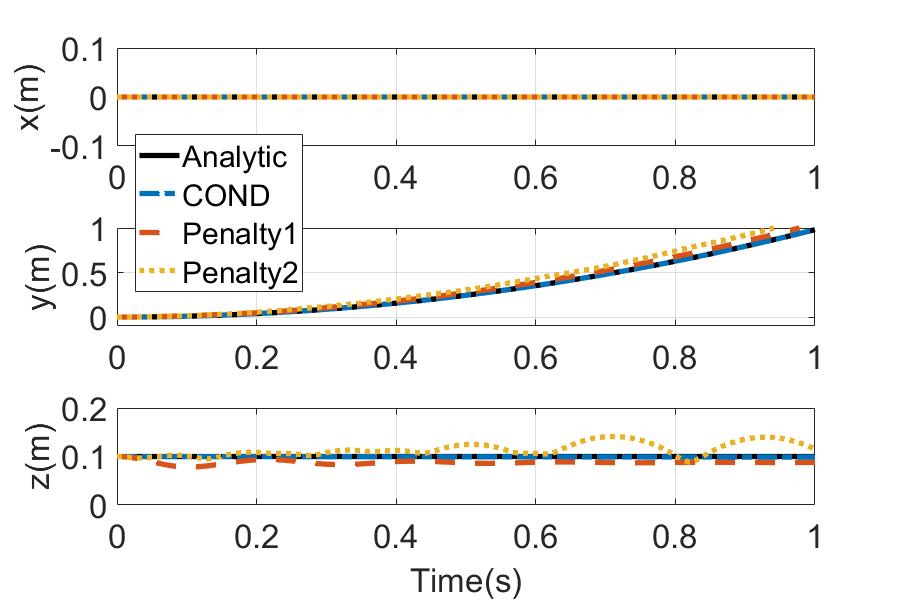}
    }
    \caption{Snapshots and result comparison plot for box sliding simulation. COND: virtual nodes gain $k_v=10^5$.}
    \label{fig-boxresult}
\end{figure}

\begin{table}[t]
\small
\centering
\renewcommand{\arraystretch}{1.5}{
\resizebox{8.4cm}{!}{
\begin{tabular}{|c|c|c|c|}
\hline
Formulation & $x$ error [mm] & $y$ error [mm] & $z$ error [mm] \\
\hline
\hline
COND ($k_v=10^3$) & 0.0000 & 0.1226 & 0.6131 \\
\hline 
COND ($k_v=10^4$) & 0.0000 & 0.0122 & 0.0612 \\
\hline 
COND ($k_v=10^5$) & 0.0000 & 0.0012 & 0.0061 \\
\hline 
Penalty1 & 0.0000 & 30.23 & 12.24 \\
\hline 
Penalty2 & 0.0000 & 63.87 & 15.32 \\
\hline 
\end{tabular}
}
}
\caption{Position error comparison results from analytic solutions of various contact formulations (virtual node and penalty). Averaged values over time steps.} 
\label{table-boxresult}
\end{table}

In this subsection, we intend to characterize our formulation based on nodalization.
Without virtual nodes, our framework does not take any relaxation. 
With virtual nodes, we show that the solution can precisely satisfy the SCC \eqref{eq-scc} for a reasonably large $k_v$ in Prop. 2.
To demonstrate the property, we configure the simple test cases that apply a constant force ($y$-direction) to the cube-shaped rigid box placed on the floor.
The parameters are set to length $0.2~\rm{m}$, mass $0.5~\rm{kg}$, the friction coefficient is set to $0.2$, and the four vertices of the bottom face are set as contact points.
We compare the result with an analytic solution, and the solution from the spring-damper-based penalty contact model \cite{yamane06icra}.
Two gain sets are selected for the penalty model (Penalty1 and Penalty2), and the gains in Penalty2 are all gains in Penalty1 multiplied by 10.

Comparison results are depicted in Fig.~\ref{fig-boxresult} and Table~\ref{table-boxresult}.
Virtual node-based formulation utilized in COND shows a small error, and its magnitude decreases as $k_v$ increases.
This demonstrates the consistency of virtual node formulation with original dynamics, which is described in Prop. 2.
On the other hand, the spring-damper penalty model shows a much larger error.
Also, the result varies greatly depending on the gain parameters and implausible behavior (e.g., bouncing motion) is generated in Penalty2.
This also indicates that large gain values do not guarantee a reduction in error, unlike virtual node formulation.

\subsection{Multibody Simulation Examples} \label{subsec-multibodyexp}

In this subsection, several multibody simulation examples are implemented using COND and evaluated.

\subsubsection{Baselines}
As previously mentioned, due to their versatility, I-FPI-based methods are widely used in the robotics community, as well as in open-source simulation software (MuJoCo, Bullet, Chrono, etc.).
Here, we implement the following two I-FPI baseline algorithms and characterize the difference between I-FPI and V-FPI:
1) projected Gauss-Seidel (PGS\cite{horak19ral}); and
2) accelerated projected gradient descent (APGD\cite{mazhar15tog}).
Since they rely on Delassus operator assembly, factorization of the $A$ matrix is required.
We use the SimplicialLLT function of the Eigen Sparse library for factorization, which is a state-of-the-art open-source implementation.
Residual in I-FPI is converted to velocity space using the map $A^{-1}J^T$ and evaluated to be consistent with V-FPI.

\subsubsection{Performance index}

Perform comparisons are conducted based on the following indices: 
\begin{itemize}
    \item Max penetration: Max penetration at each time step
    \item Dynamics index: Error from the ground-truth value for indices that reflect deformation, internal stress, etc.
    \item Computation time: Time for each step, divided into 1) dynamics time, to construct \eqref{eq-lindyn} that includes calculation of constraint error, constraint error Jacobian, mass matrix computation and collision detection, etc. and 2) solver time, to obtain $\hat{v}$ with the contact impulse $\lambda_c$.
\end{itemize}

As the ground-truth result for the dynamics index, real-world experimental values are used for two examples, while solutions from excessive iterations (thus, sufficiently converged with negligible residual) are used for the other examples.
For all scenarios, 20 repetitions are performed under several randomized parameter settings.

\subsubsection{Soft Mat Manipulation}

\begin{figure}[t] 
    \centering
    \subfigure[Soft mat manipulation simulation using COND]{
    \includegraphics[width=4.0cm]{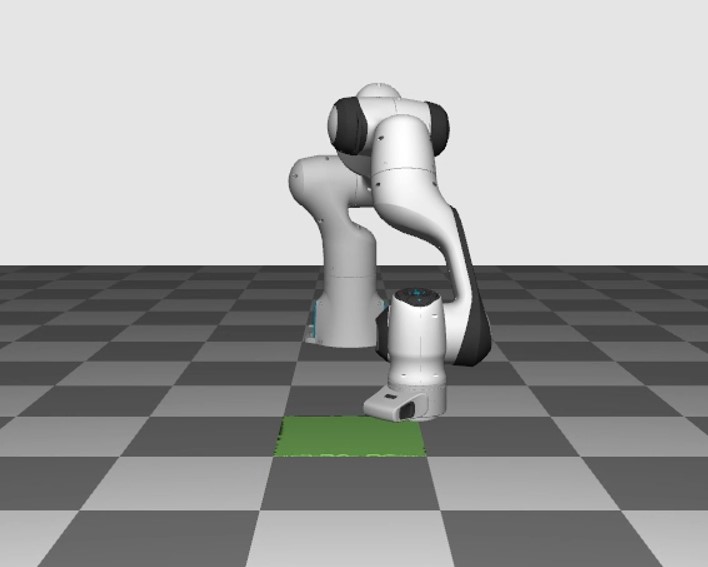}
    \includegraphics[width=4.0cm]{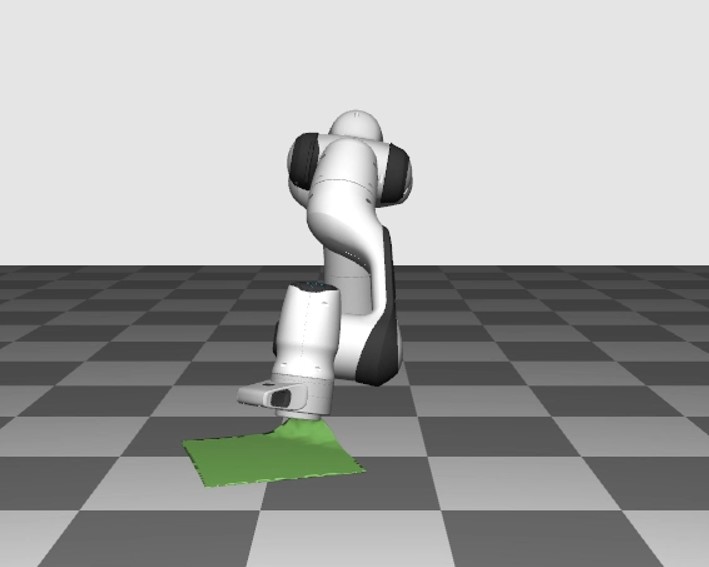}
    \label{subfig-matsnap}
    }
    \subfigure[Penetration depth]{
    \includegraphics[width=4.0cm]{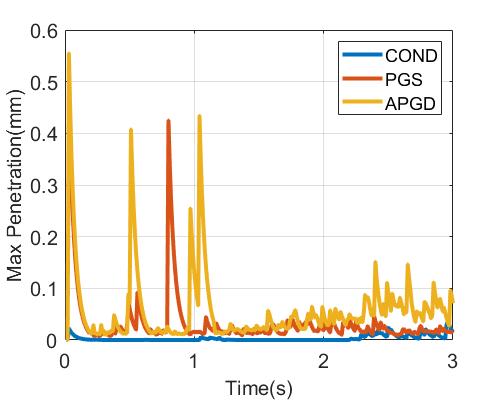}
    \label{subfig-matpenet}
    }
    \subfigure[Internal force]{
    \includegraphics[width=4.0cm]{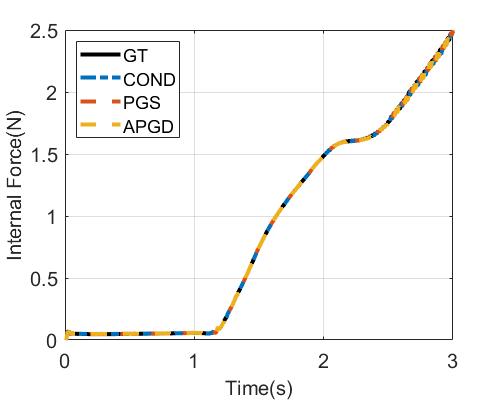}
    }
    \caption{Snapshots and performance comparison plots for soft mat folding manipulation simulation. GT: ground-truth.}
    \label{fig-matresult}
\end{figure}

\begin{table}[t]
\small
\centering
\renewcommand{\arraystretch}{1.5}{
\resizebox{8cm}{!}{
\begin{tabular}{|c|c|c|c|c|c|}
\hline
Solver & MP [mm] & IE [mN] & DT [ms] & ST [ms] & IN \\
\hline
\hline
COND & 0.0033 & 1.868 & 9.568 & 8.650 & 39.72 \\
\hline 
PGS & 0.0328 & 0.917 & 9.599 & 1412 (898) & 50.59 \\
\hline 
APGD & 0.0576 & 1.492 & 9.563 & 1645 (890) & 81.57 \\
\hline 
\end{tabular}
}
}   
\caption{Comparison results for soft mat folding manipulation simulation. Averaged values for 20 ramdomized trials and time steps. MP: max penetration, IE: internal force norm error, DT: dynamics time, ST: solver time, IN: iteration number. $(\cdot)$ computation time for Delassus operator assembly.}
\label{table-matresult}
\end{table}

Firstly, a robotic soft mat folding manipulation scenario is implemented. 
In this case, the contact situation includes a large number of contact and stick-slip transition behavior.
Franka Emika Panda is used to fold the soft mat, the co-rotational FEM model is used to formulate the deformable part dynamics, and parameters are randomly chosen between Young modulus $50-100~\rm{kPa}$, Poisson ratio $0.3-0.4$, and friction coefficient $0.1-0.5$.
Also in FEM modeling, 1477 nodes and 4100 tetrahedral elements are used, therefore total DOF is 4438 with 24606 constraints. 
For the dynamics index, we measure the internal force of the soft mat induced from its deformation and take the norm.
A residual threshold is set as $10^{-4}$.

\begin{figure}[!htb] 
    \centering
    \subfigure[Soft ball gripping simulation using COND]{
    \includegraphics[width=4.0cm]{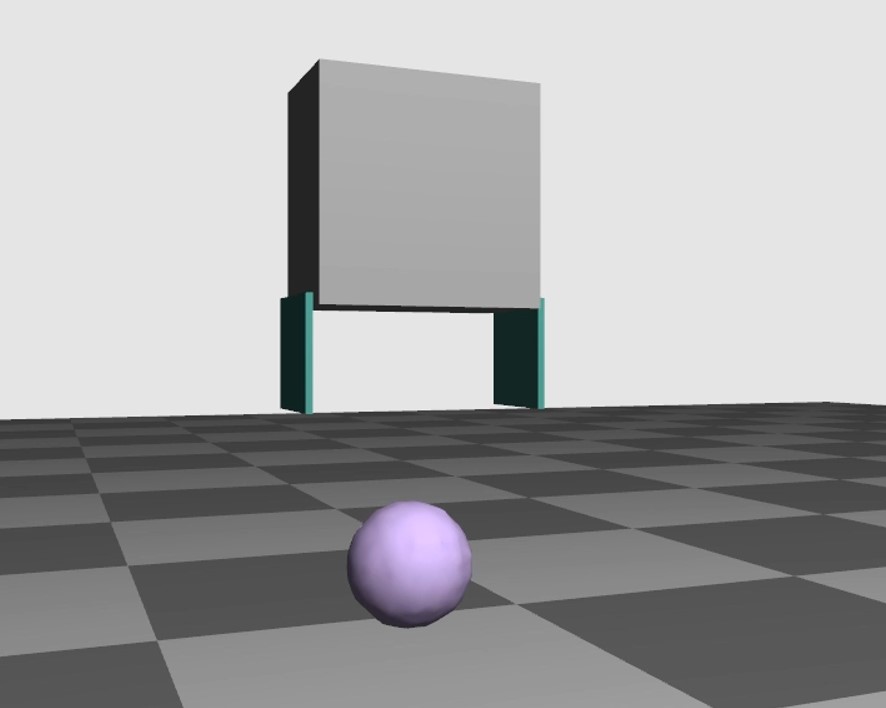}
    \includegraphics[width=4.0cm]{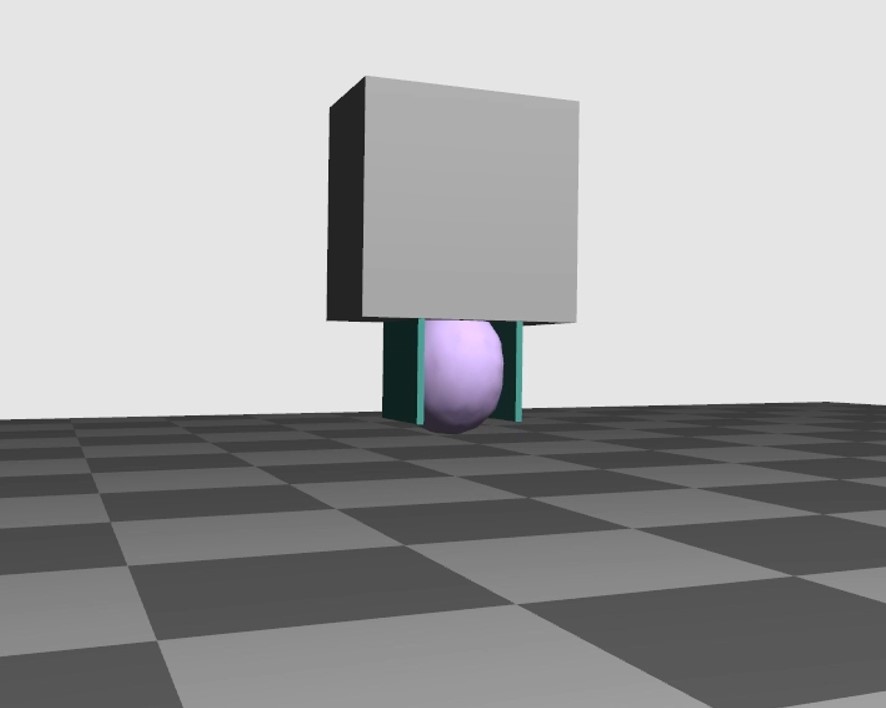}
    }
    \subfigure[Penetration depth]{
    \includegraphics[width=4.0cm]{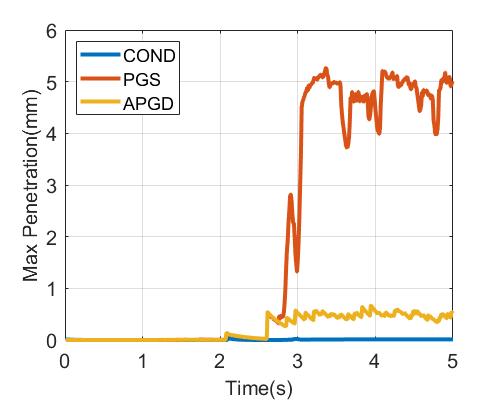}
    }
    \subfigure[Internal force]{
    \includegraphics[width=4.0cm]{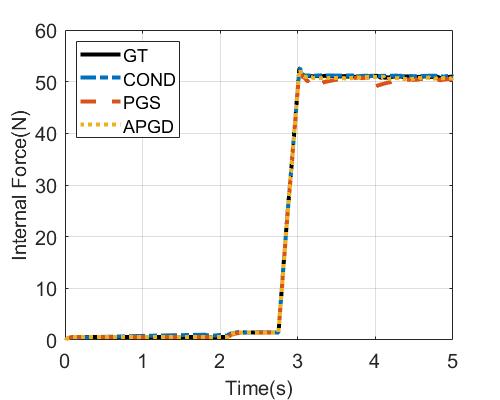}
    }
    \caption{Snapshots and performance comparison plots for soft ball gripping simulation. GT: ground-truth.}
    \label{fig-gripsoftresult}
\end{figure}

The results are illustrated in Fig.~\ref{fig-matresult} and Table~\ref{table-matresult}.
The most decisive difference is the computation time, where COND is over 100 times faster than PGS/APGD.
This is a result of the COND feature, which performs iteration only by sparse matrix-vector multiplication without dense matrix operation.
The methods based on I-FPI take near $900~\rm{ms}$ of Delassus operator calculation time alone.
The penetration depth in the 

\begin{table}[!htb]
\small
\centering
\renewcommand{\arraystretch}{1.5}{
\resizebox{8cm}{!}{
\begin{tabular}{|c|c|c|c|c|c|}
\hline
Solver & MP [mm] & IE [N] & DT [ms] & ST [ms] & IN \\
\hline
\hline
COND & 0.0058 & 0.1560 & 18.61 & 8.464 & 37.51 \\
\hline 
PGS & 1.9933 & 0.3189 & 18.51 & 805.1(624) & 233.9 \\
\hline 
APGD & 0.2325 & 0.1235 & 18.43 & 676.7(626) & 100.4 \\
\hline 
\end{tabular}
}
}   
\caption{Comparison results for soft ball gripping simulation. Averaged values for 20 ramdomized trials and time steps. MP: max penetration, IE: internal force norm error, DT: dynamics time, ST: solver time, IN: iteration number. $(\cdot)$ computation time for Delassus operator assembly.}
\label{table-gripsoftresult}
\end{table}

\begin{table}[!htb]
\small
\centering
\renewcommand{\arraystretch}{1.5}{
\resizebox{5cm}{!}{
\begin{tabular}{|c|c|c|c|c|c|}
\hline
Solver & 0.1 MPa & 1 MPa & 10 MPa \\
\hline
\hline
COND & 39.72 & 37.93 & 34.99  \\
\hline 
PGS & 145.8 & 224.0 & 306.7 \\
\hline 
APGD & 48.07 & 100.9 & 200.5 \\
\hline 
\end{tabular}
}
}   
\caption{Average number of iteration for gripping simulation of different stiffness soft balls.}
\label{table-gripsoftresult2}
\end{table}

\noindent COND result is much lower than in the PGS/APGD result.
This is very predictable because COND solves the surrogate dynamics problem at each iteration step so that the solution satisfies the contact condition.
On the other hand, COND shows a larger internal force error, which can be interpreted as the residual is biased to the dynamics error.
Note that the computation time of COND is fast, so this error can be further reduced by using more iterations.
For example, by lowering the residual threshold as $10^{-5}$, we can obtain IE $0.407~\rm{mN}$ and ST $18.90~\rm{ms}$. 
The average iteration numbers are comparable but lowest in COND, indicating that it converges to reasonable accuracy in tens of iterations on average.

\subsubsection{Soft Ball Gripping}

Next, the scenario where a rigid gripper grasps soft ball is implemented. 
In this case, gripping generates a combined contact situation between dynamically moving rigid and soft parts, which should be handled using virtual nodes in COND.
We design the gripper consisting of 6 DOF rigid bodies with 2 DOF prismatic joints which perform grasping of soft ball. 
A soft ball is modeled by co-rotational FEM and parameters are set as Young modulus $0.5-2~\rm{MPa}$, Poisson ratio $0.3-0.4$, and the friction coefficient is set to $0.4-0.6$.
Also in FEM modeling, 1463 nodes and 6928 tetrahedral elements are used, therefore total DOF is 4397 with 41568 constraints.
As in the soft mat scenario, the internal force of the soft ball is measured to assess the accuracy of dynamics, as it depends on the deformation and material properties of the ball. 
A residual threshold is set as $10^{-5}$.

The results are illustrated in Fig.~\ref{fig-gripsoftresult} and Table~\ref{table-gripsoftresult}, \ref{table-gripsoftresult2}.
Although the dynamics time process differs slightly from other solvers due to the use of a virtual node, the majority of the processes are the same, so the time required is very similar.
As in soft mat scenarios, the solver computation speed of COND is significantly faster than others (over x80).
Also, COND allows for the least amount of penetration here, demonstrating the validity of the virtual node formulation. 
APGD can successfully simulate gripping without significant penetration and has the lowest internal force error, while internal force error in COND is larger than APGD but comparable.
This is also a consistent result with the soft mat scenario, that the error of COND is dynamics-biased, but reaches a reasonable value after a few tens of iterations.

In many cases, PGS allows for large amounts of penetration.
We also observe this improper contact behavior eventually degrades overall dynamics accuracy, resulting in the greatest internal force error.
Failure of PGS is from convergence degradation of I-FPI, as the Delassus operator is ill-conditioned while the object is captured symmetrically on both sides.
Our further experimental results in Table~\ref{table-gripsoftresult2} shows that the PGS and APGD iteration numbers increase as the stiffness of the ball increases.
However, COND is resistant to this effect, as it takes V-FPI.

\subsubsection{Flexible Cable Manipulation}

\begin{figure}[t] 
    \centering
    \subfigure[Cable winding manipulation simulation using COND and experiment]{
    \includegraphics[width=4.0cm]{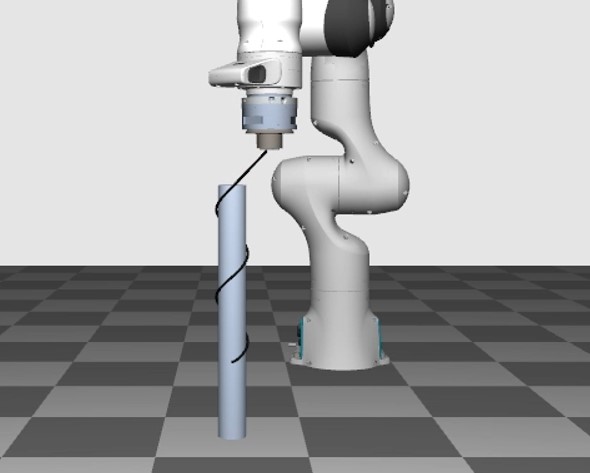}
    \includegraphics[width=4.0cm]{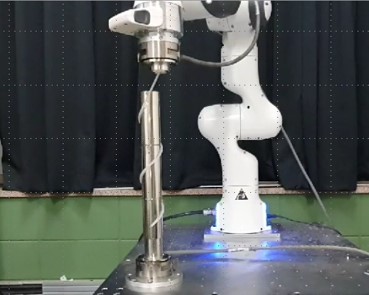}
    \label{fig-cablesnap}
    }
    \subfigure[Penetration depth]{
    \includegraphics[width=4.0cm]{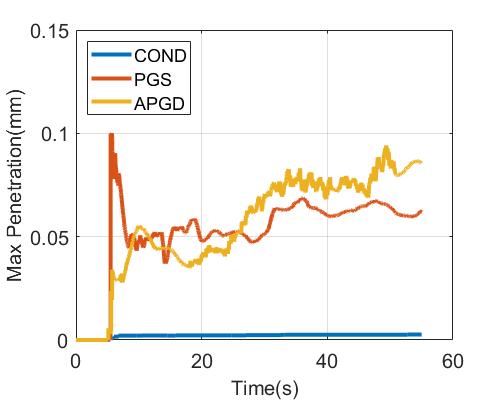}
    }
    \subfigure[End effector force ($x$)]{
    \includegraphics[width=4.0cm]{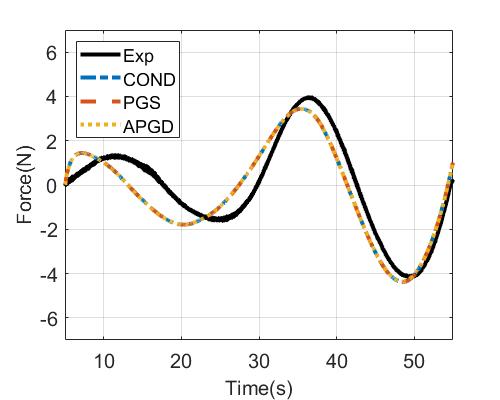}
    }
    \subfigure[End effector force ($y$)]{
    \includegraphics[width=4.0cm]{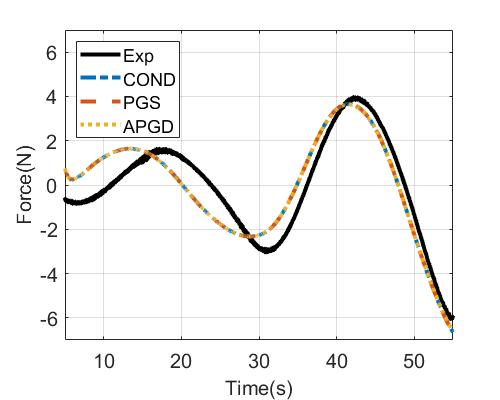}
    }
    \subfigure[End effector force ($z$)]{
    \includegraphics[width=4.0cm]{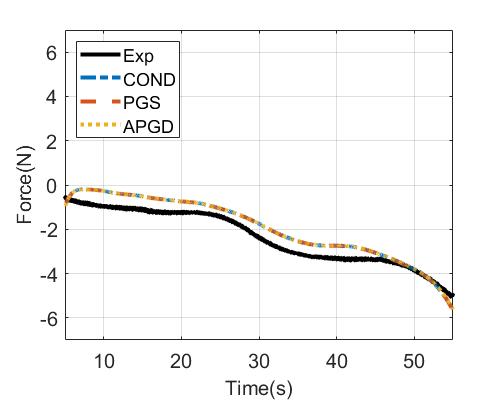}
    }
    \caption{Snapshots and comparison results for flexible cable winding manipulation simulation.}
    \label{fig-cableresult}
\end{figure}

As shown in Fig.~\ref{fig-cablesnap}, we conduct simulations and experiment on cable winding manipulation using a robot arm (Franka Emika Panda).
Here, dynamics equation of flexible cable is constructed using Cosserat rod model \cite{rubin13springer} and parameters of the cable are measured following to \cite{lee21icra}: Young modulus $2.954~\rm{MPa}$, Poisson ratio $0.49$, and friction coefficient $1.3$.
When testing for indicators other than experimental comparisons, these parameters are randomized as $4-5~\rm{MPa}$, $0.4-0.49$, and $0.5-1.5$.
The entire cable is split into a total of 320 links, therefore total DOF is 1927 with 1926 constraints.
For the comparison with the experiment, we measure the force applied to the end effector of the robot arm during the operation task using the F/T sensor (ATI Gamma).
A residual threshold is set as $10^{-4}$.

Overall results are depicted in Fig.~\ref{fig-cableresult} and Table~\ref{table-cableresult}.
As in previous scenarios, COND is significantly faster than the other two, while APGD outperforms PGS.
In the case of end effector force, the overall value agrees well with the experimental value, and the results of the three solvers are nearly identical.
This means that the COND iteration can rapidly arrive at a solution that satisfies the analytical cable model equations, which is close to the real world physics.
The remaining error can be described as a gap between the model and the real world, and it may be resolved through more precise matching (e.g., damping parameter) in future work.
Iteration number in COND is significantly lower than in others, and again, as expected, COND has the lowest penetration error.

\begin{table}[!htb]
\small
\centering
\renewcommand{\arraystretch}{1.5}{
\resizebox{8cm}{!}{
\begin{tabular}{|c|c|c|c|c|c|}
\hline
Solver & MP [mm] & FE [N] & DT [ms] & ST [ms] & IN \\
\hline
\hline
COND & 0.0021 & 0.3996 & 2.086 & 1.229 & 18.28 \\
\hline 
PGS & 0.0521 & 0.3997 & 2.103 & 74.27 (28.2) & 65.61 \\
\hline 
APGD & 0.0556 & 0.3976 & 2.112 & 56.72 (27.9) & 46.05 \\
\hline 
\end{tabular}
}
}   
\caption{Comparison results for flexible cable winding simulation. Averaged values for 20 ramdomized trials and time steps except FE. MP: max penetration, FE: end effector force norm error, DT: dynamics time, ST: solver time, IN: iteration number. $(\cdot)$ computation time for Delassus operator assembly.} 
\label{table-cableresult}
\end{table}

\subsubsection{Soft Gripper}

\begin{figure}[t] 
    \centering
    \subfigure[Soft gripper manipulation simulation using COND and experiment.]{
    \includegraphics[width=4.0cm]{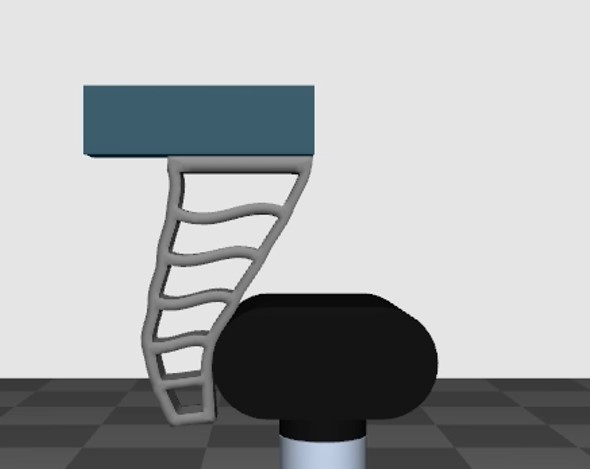}
    \includegraphics[width=4.0cm]{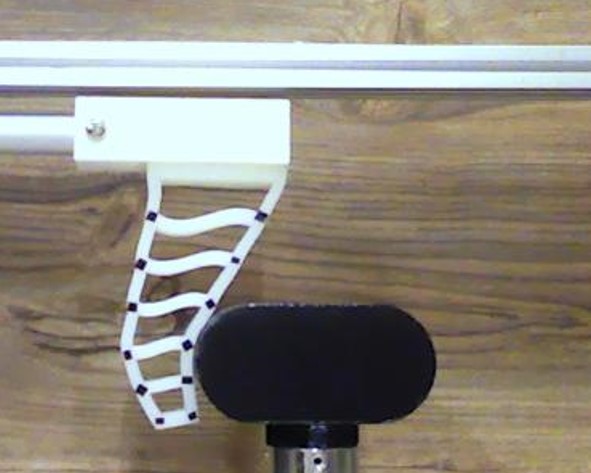}
    \label{fin-a}
    }
    \subfigure[Penetration depth]{
    \includegraphics[width=4.0cm]{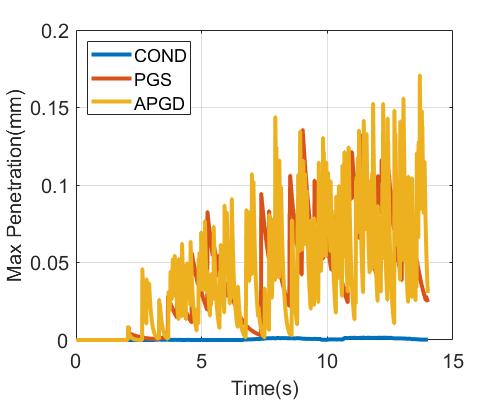}
    }
    \subfigure[Contact force ($x$)]{
    \includegraphics[width=4.0cm]{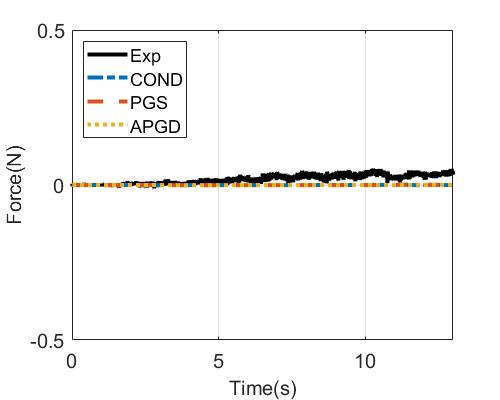}
    }
    \subfigure[Contact force ($y$)]{
    \includegraphics[width=4.0cm]{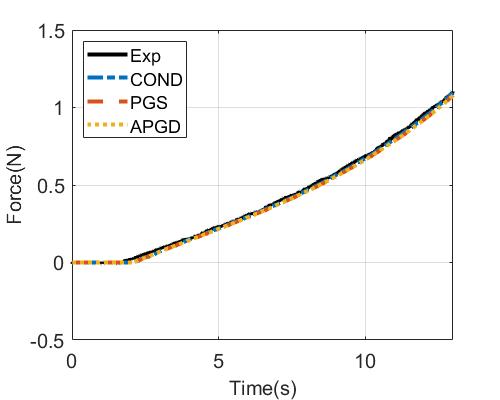}
    }
    \subfigure[Contact force ($z$)]{
    \includegraphics[width=4.0cm]{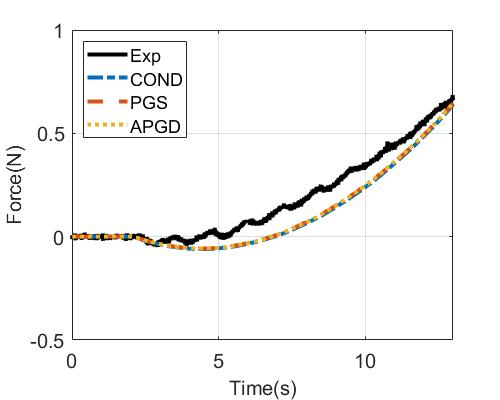}
    }
    \caption{Snapshots and comparison results for soft gripper manipulation simulation.}
    \label{fig-finresult}
\end{figure}

\begin{table}[t]
\small
\centering
\renewcommand{\arraystretch}{1.5}{
\resizebox{8.8cm}{!}{
\begin{tabular}{|c|c|c|c|c|c|c|}
\hline
Solver & MP [mm] & FE [N] & DE [mm] & DT [ms] & ST [ms] & IN \\
\hline
\hline
COND & 0.0005 & 0.1169 & 1.898 & 43.24 & 18.24 & 30.88 \\
\hline 
PGS & 0.0397 & 0.1175 & 1.895 & 42.89 & 547.0 (533) & 59.06 \\
\hline 
APGD & 0.0409 & 0.1172 & 1.896 & 42.78 & 547.3 (533) & 57.81 \\
\hline 
\end{tabular}
}
}   
\caption{Comparison results for soft gripper manipulation simulation. Averaged values for 20 ramdomized trials and time steps except . MP: max penetration, FE: contact force norm error, DE: displacement error at final state, DT: dynamics time, ST: solver time, IN: iteration number. $(\cdot)$ computation time for Delassus operator assembly.} 
\label{table-finresult}
\end{table}

Lastly, we simulate and experiment with the contact situation between the soft gripper and the rigid object.
The soft gripper part is made using liquid silicone Ecoflex 0050 (Smooth-on Inc.) and attached to a linear actuator so that it could be operated as a gripper.
The rigid object part is manufactured using a 3D printer (PLA).
The soft gripper dynamics is modeled using co-rotational FEM, and we use the Ecoflex 0050 material parameter reported in \cite{roberts13icra} (Young modulus: $83~\rm{kPa}$, Poisson ratio: $0.42$), and the friction coefficient is determined using a simple stick-slip test, yielding a value of $2.1$.
When testing for indicators other than experimental comparisons, these parameters are randomized
as $50-100~\rm{kPa}$, $0.4-0.49$, and $0.5-2.1$. 
For FEM modeling, 4889 nodes and 18122 tetrahedral elements are used, therefore total DOF is 14681 with 108744 constraints.
The rigid part is modeled as a dynamic object rather than a static environment, so virtual nodes are applied to its points of contact.
For experimental comparison, the contact force is measured with the F/T sensor (ATI Gamma) connected to the rigid part, and the displacement of the markers attached to the soft gripper at the final state is measured using a vision sensor (SC-FD110B).
A residual threshold is set at $10^{-6}$.

Fig.~\ref{fig-finresult} and Table~\ref{table-finresult} show the overall result.
As in cable manipulation scenario, the force/displacement results well match with the experiment for all three solvers, and the difference in error level is insignificant.
But still, there exists a gap with reality.
COND allows for the least amount of penetration and the fastest solver times (over 30x faster).
Note that in the scenario, the number of contact points is relatively small (around 40), therefore the speed increase amount in COND is relatively lower compare to other scenarios.
Finally, COND shows the lowest average number of iterations.
\subsection{Scalability}

\begin{figure}[t] 
    \centering
    \subfigure[COND]{\includegraphics[width=4.2cm]{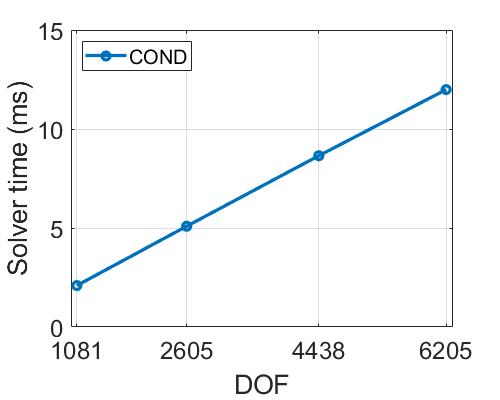}}
    \subfigure[PGS/APGD]{\includegraphics[width=4.2cm]{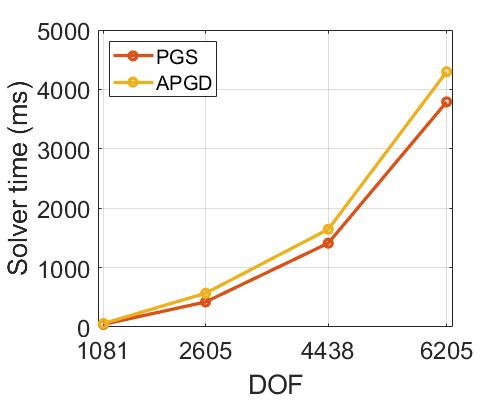}}
    \caption{Computation time results for soft mat manpulation scenario with various number of node.}
    \label{fig-scalability}
\end{figure}

Although the computational advantage of COND is clearly demonstrated in the preceding subsection, we proceed with further experiments to measure its scalability in more detail.
For the soft mat manipulation scenario, we check the solver time while changing the node number of the FEM model.
A residual threshold is set at $10^{-4}$ for all tests.

Scalability measure results are depicted in Fig.~\ref{fig-scalability}.
The computation time of COND increases as almost linear (R-squared value: 0.9997) as DOF increases, which demonstrates the analysis in Sec.~\ref{subsec-complexity}.
Meanwhile, computation of PGS/APGD grows super-linearly as DOF increases, which originated from matrix factorization and dense matrix operations.

\subsection{Comparison with Other Methods in Graphics}

Apart from popular and versatile I-FPI-based methods, there have been several attempts to improve computational efficiency.
Most of them are originated in the field of graphics, primarily dealing with large-scale problems, and in this subsection, COND and their comparisons are discussed.

\subsubsection{Contact constraint splitting}

\begin{table}[t]
\small
\centering
\renewcommand{\arraystretch}{1.5}{
\resizebox{6cm}{!}{
\begin{tabular}{|c|c|c|c|c|c|}
\hline
Solver & MP [mm] & IE [mN] & ST [ms]\\
\hline
\hline
COND (50) & 0.0049 & 0.9662 & 10.85  \\
\hline 
COND (100) & 0.0050 & 0.1620 & 20.90  \\
\hline 
CCS (50) & 0.0051 & 13.40 & 65.99  \\
\hline 
CCS (100) & 0.0053 & 6.559 & 107.9  \\
\hline 
\end{tabular}
}
}   
\caption{Comparison results of COND and CCS for soft mat manipulation simulation. 
Averaged values for 20 ramdomized trials and time steps. 
MP: max penetration, IE: internal force norm error, ST: solver time. 
$(\cdot)$ iteration number.}
\label{table-ccsresult}
\end{table}

In \cite{daviet20tog}, the contact dynamics problem is divided into two subproblems using the alternating direction method of multiplier (ADMM), thereby replacing the assembly of the Delassus operator with multiple linear solving processes.
The methods significantly reduce simulation time for hair/cloth, making them state-of-the-art in the field.
In this paper, we denote the algorithm as contact constraint splitting (CCS) and implement it for a soft mat manipulation scenario.
Note that the method is intended for thin nodal objects, thus it is not compatible with contact situations over joint variables or rotation representations (e.g., gripping scenarios).
Since CCS employs ADMM, it is ambiguous to set the fair residual threshold value because primal and dual residuals must be considered together.
Hence, we compare the performance under the fixed number of iterations.

The results are summarized in Table~\ref{table-ccsresult}.
As the projection variable of CCS guarantees the contact feasibility, it also shows a comparable penetration amount with COND.
However, internal force error result is much similar in COND, which means COND converges faster to accurate solution.
Also, computation time is nearly 5 times faster in COND.
This is unsurprising, because CCS requires linear solving of size $n$ for every iteration step, whereas COND only necessitates matrix-vector multiplication.
Overall, given the versatility of COND for robotic simulation, it is clear that this is a better option in many cases.
Combining CCS with contact nodalization to improve its applicability (e.g., joint variables) is also an interesting option, but we observe that the ADMM algorithm of CCS does not converge well in this case.

\subsubsection{Position-based dynamics} 

\begin{figure}[t] 
    \centering
    \subfigure[Simulation snapshot (PBD 100)]{
    \includegraphics[width=4.0cm]{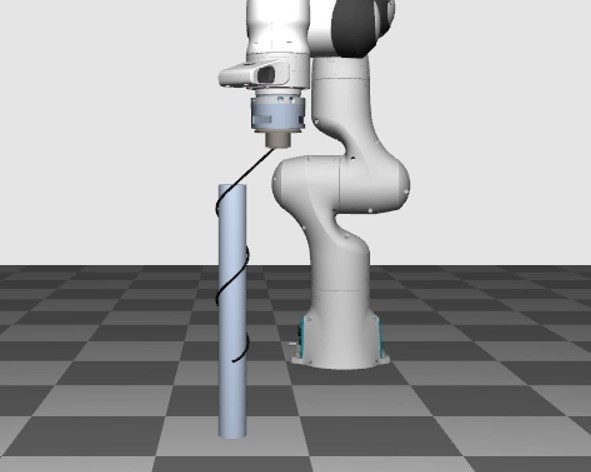}
    \label{fig-pbdsnapshot}
    }
    \subfigure[End effector force ($x$)]{
    \includegraphics[width=4.0cm]{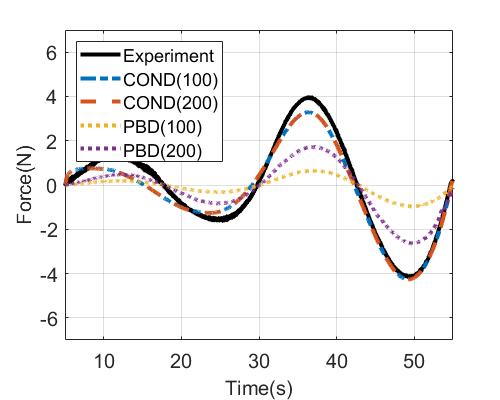}
    }
    \subfigure[End effector force ($y$)]{
    \includegraphics[width=4.0cm]{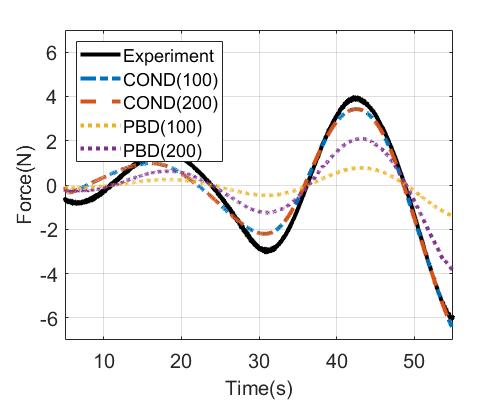}
    }
    \subfigure[End effector force ($z$)]{
    \includegraphics[width=4.0cm]{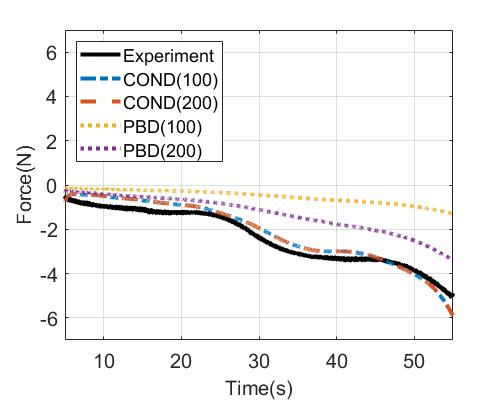}
    }
    \caption{Snapshot and end effector force plots of COND and PBD for flexible cable winding
manipulation simulation,}
    \label{fig-pbdresult}
\end{figure}

\begin{table}[t]
\small
\centering
\renewcommand{\arraystretch}{1.5}{
\resizebox{6cm}{!}{
\begin{tabular}{|c|c|c|c|c|c|}
\hline
Solver & MP [mm] & FE [N] & TT [ms]\\
\hline
\hline
COND (100) & 0.0031 & 0.553 & 3.949 \\
\hline 
COND (200) & 0.0032 & 0.553 & 7.125 \\
\hline 
PBD (100) & 0.0036 & 2.654 & 94.45  \\
\hline 
PBD (200) & 0.0026 & 1.575 & 187.6  \\
\hline 
\end{tabular}
}
}   
\caption{Comparison results of COND and PBD for flexible cable winding simulation. 
Averaged values for 20 ramdomized trials and time steps except FE. 
MP: max penetration, FE: end effector force norm error, TT: total computation time. $(\cdot)$ iteration (substep) number.}
\label{table-pbdresult}
\end{table}

For various graphical and robotic applications \cite{liu21icra,lie22arxiv} with open-source software (FleX, Brax, etc.), position-based dynamics (PBD) is a prevalent method for deformable body simulation.
In PBD, non-linear Gauss-Seidel iteration is performed for each constraint, therefore, can avoid large-size matrix factorization and multiplication.
We implement the flexible cable manipulation scenario using position-based dynamics (PBD).
We adopt the substepping and contact handling method presented in \cite{macklin19sca}, which are the state-of-the-art techniques to increase the performance of PBD.
PBD differs from our solver in several respects, as it is not derived from the form of \eqref{eq-lindyn}, nor does it use complementarity-based contact conditions.
Thus, as in CCS comparison, performance over a fixed iteration number is evaluated.
When using 320 links as in Sec.~\ref{subsec-multibodyexp}, we observe that the convergence of the PBD is extremely poor. Instead, 120 links are used.

The results are depicted in Table~\ref{table-pbdresult}.
For COND, the results from 100 and 200 iterations are nearly identical, while the force error matches the experimental results well.
However, in PBD results, the result varies with iteration number, which denotes that the solution does not sufficiently converge to the ground-truth solution.
Also, as shown in Fig.~\ref{fig-pbdsnapshot}, the visual appearance of the PBD result is plausible, implying that the results for proper graphics may be insufficient for overall accuracy.
In terms of computation time, COND has a faster computation time per iteration step.
This is due to the fact that PBD performs sequential constraint handling with state update, which is more time-consuming than a single matrix-vector multiplication.

\subsection{Invertible Contact Model} \label{subsec-invertible}

\begin{figure}[t] 
    \centering
    \subfigure[Norm of contact force applied to the mat during Franka soft mat folding]{
    \includegraphics[width=4.2cm]{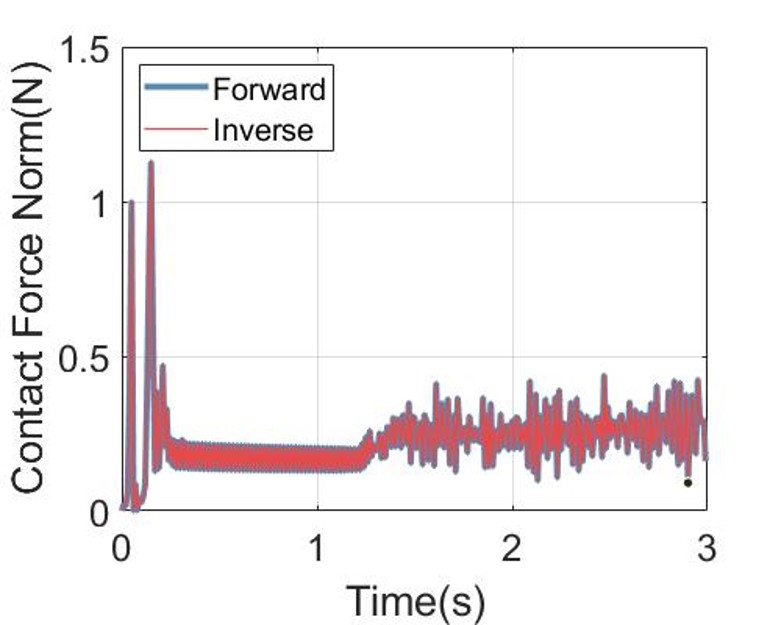}
    \includegraphics[width=4.2cm]{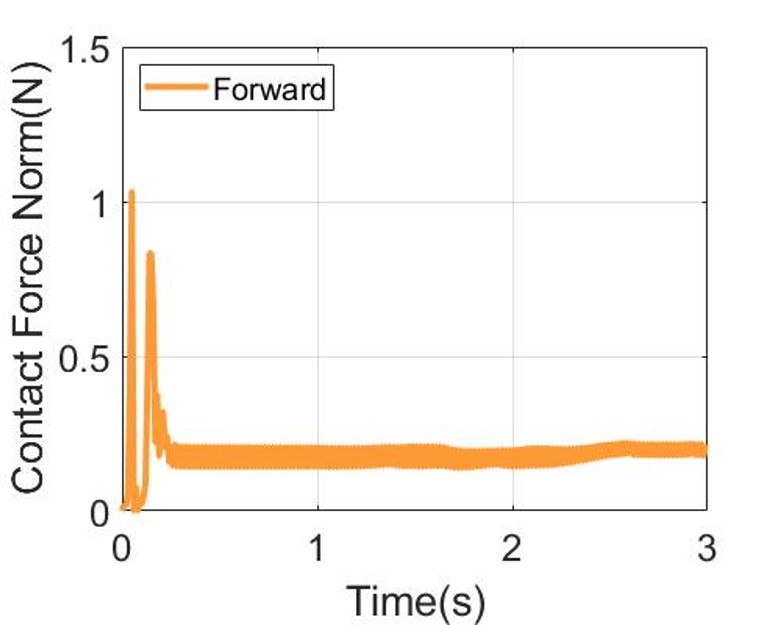}
    }
    \subfigure[Norm of contact force applied to the soft ball during gripping]{
    \includegraphics[width=4.2cm]{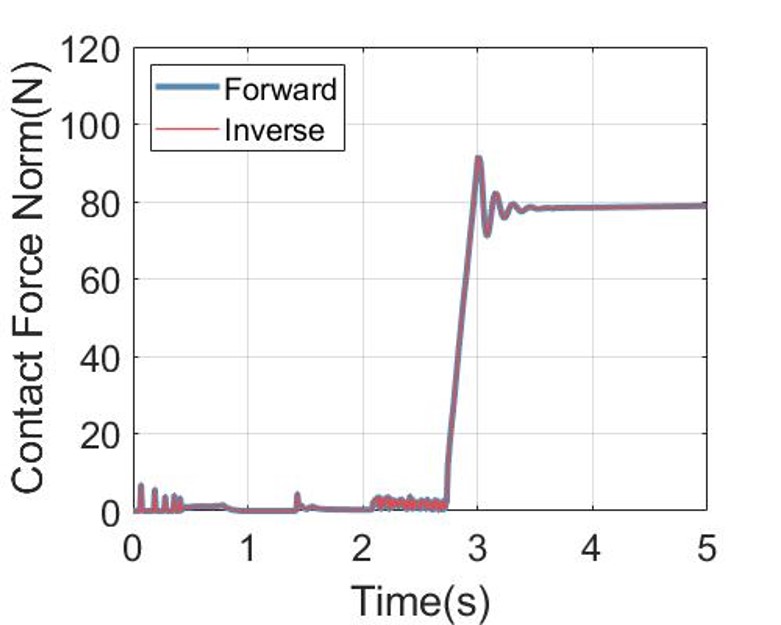}
    \includegraphics[width=4.2cm]{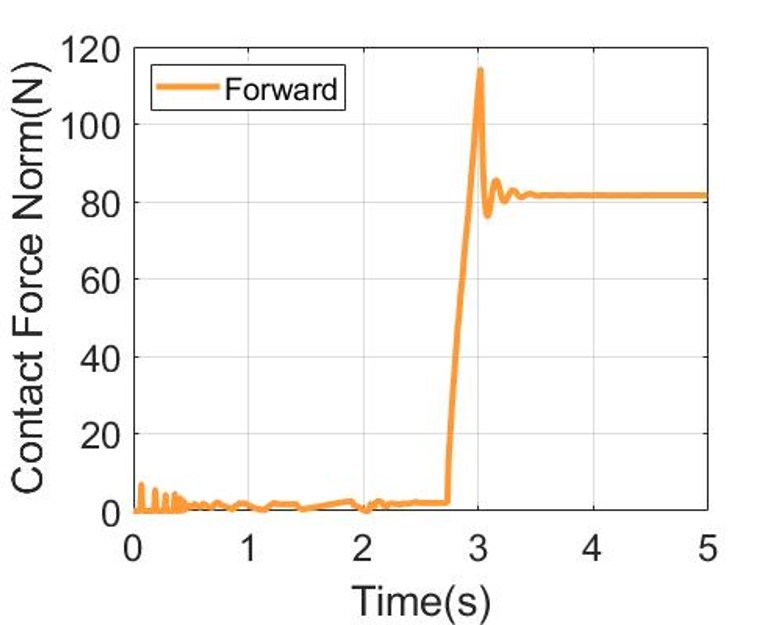}
    }
    \caption{Force results from two scenarios. Left: proximal operator, blue line denotes contact impulse norm computed using forward dynamics and red line denotes contact impulse norm computed inversely from the result. Right: strict operator.}
    \label{fig-invtresult}
\end{figure}

The applicability of COND in the invertible contact model is evaluated.
For soft mat and soft ball gripping scenarios, we applied the model described in Sec.~\ref{subsec-invt} and record vector norm of forwardly/inversely calculated contact forces applied to all nodes for each time step. 
The implementation is based on the proximal operator, and regularization term is set as $\Omega_c = 10^{-3} I$ for both scenarios, with residual threshold $10^{-4}$.

Results are depicted in Fig.~\ref{fig-invtresult}. 
We can find that the invertible property is well preserved as a forward result and the inverse result well-matched, as the error amount is $0.087~\rm{mN}$ for soft mat, $0.48~\rm{mN}$ for soft ball.
This demonstrates the solver successfully handle the convex optimization problem. 
In Franka soft mat scenario, the results using the proximal operator show that jittering occurs during sliding unlike the results using the strict operator. 
This seems to have occurred due to the fact that Signorini conditions are not exactly satisfied in the proximal operator, and repeatedly take off and collide simultaneously across nodes. 
On the other hand, for the softball gripping scenario, the two results are very similar. 
Also in some cases, the proximal operator requires an adequate amount of $\phi$ to avoid implausible contact behavior (e.g., detaching).

\subsection{Anisotropic Friction}

\begin{figure}[t] 
    \centering
    \subfigure[Experiment setup]{
    \includegraphics[width=4.0cm]{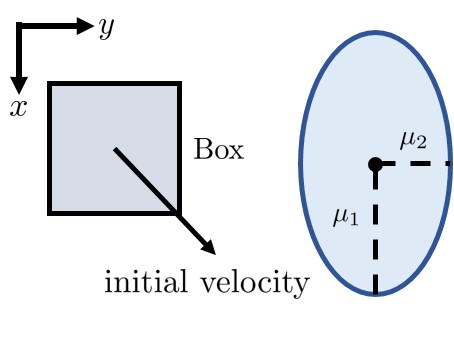}
    \label{fig-anisotropic1}
    }
    \subfigure[Box trajectory]{
    \includegraphics[width=4.0cm]{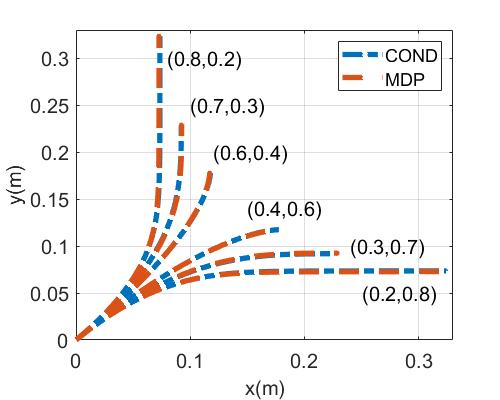}
    \label{fig-anisotropic2}
    }
    \caption{Box sliding simulation on anisotropic friction region. Left: snapshot, red coordinates visualize anisotropic friction model, Right: position plot.}
    \label{fig-anisotropic}
\end{figure}

Simulation with anisotropic friction is tested using box slipping scenario. 
As in Fig.~\ref{fig-anisotropic2}, we set ellipsoidal friction model \eqref{eq-genfc} with various set of $(\mu_1,\mu_2)$ and measure trajectory while the initial velocity is set to $[1;1;0]$.
Then we compare the result with the solution obtained from MDP problem, and the result is depicted in Fig.~\ref{fig-anisotropic2}.
The trajectory of the box is curved, as seen in other experimental reports \cite{ma18ral}.
In addition, the trajectory is exactly matched with the MDP solution, demonstrating the property described in Prop. 6.

\subsection{Effect of Chebyshev Acceleration}

\begin{table}[t]
\small
\centering
\renewcommand{\arraystretch}{1.5}{
\resizebox{6cm}{!}{
\begin{tabular}{|c|c|c|c|}
\hline
Scenario &  \makecell{Avg. iter.\\(w.o. Cheby.)} & \makecell{Avg. iter.\\(w. Cheby.)} \\
\hline
\hline
Franka mat & 167.4 & 39.72 \\
\hline 
Gripper ball & 138.8 & 37.51 \\
\hline 
Franka cable & 40.14 & 18.28 \\
\hline 
Soft gripper & 155.7 & 30.88 \\
\hline 
\end{tabular}
}
}   
\caption{Effect of Chebyshev acceleration in iteration number.}
\label{cheby_table}
\end{table}

To evaluate the efficacy of Chebyshev acceleration in COND, we perform an ablation study that compares the average iteration number for convergence with/without Chebyshev acceleration.
As shown in Table~\ref{cheby_table}, the average iteration number to reach the threshold value is decreased for all scenarios and demonstrates that Chebyshev acceleration effectively works on COND.

\section{Discussions and Future Work} \label{sec-discussion}

While COND handles common multibody situations, better alternatives exist in some cases.
For example, in granular material simulation or tight tolerance assembly (e.g., bolt-nut assembly) simulation, computation of the Delassus operator is easy, while COND requires a large number of virtual nodes.
In such cases, I-FPI-based methods such as \cite{mazhar15tog} may perform better.
In general, COND is most effective when the system involves many internal constraints (therefore, hard to factorize $A$), such as deformable object manipulation.
Therefore, the direction of proper integration of CONDs with other solvers will be an interesting topic.

Although not explicitly demonstrated through implementation, we believe that other kinematic constraints such as non-holonomic constraints can be dealt with in following approaches:
1) softening and incorporation as a potential function; and 2) extension of contact diagonalization into a general constraint form using the idea of virtual nodes.
In future work, we will concretely develop the approaches, with several implemented scenarios.

The current strategy for determining $W$ in Sec.~\ref{subsec-SSM} works well, but there may still be a room for improvement.
For instance, Nesterov momentum \cite{mazhar15tog} and Anderson acceleration \cite{zhang19tog} can be adopted to our framework.
The Barzilai-Borwein method including diagonal scaling \cite{park20icassp} is also seen as a possible direction.
Based on the strategies, verifying the theoretical convergence rate will be useful.
Also, the number of fixed points remains unknown when using the strict operator.
While it does not yet induce practical issues, it makes sense to explore cases where solutions do not exist.
It will also be interesting to present other operators capable of ensuring uniqueness and other useful properties.

As shown in comparison with experiments, simulators can represent reality to some extent, but there are still sim-to-real gaps for a variety of reasons.
From this point of view, integration with the so-called real-to-sim method such as \cite{liu21icra} will be promising.
Further improvement of efficiency by combining with model order reduction schemes is a practically meaningful direction.
As our method is well compatible with parallelization, GPU implementation will contribute well to the performance.
Finally, as an extension of our project in https://github.com/INRoL/inrol\_sim\_cablewinding, the development of an open-source framework will be a valuable contribution to the community.

\section{Conclusion} \label{sec-conclusion}

In this paper, we propose the new multibody simulation framework COND. 
The framework mainly focuses on developing velocity fixed-point iteration, which can avoid large-dimensional matrix factorization and multiplication while utilizing a complementarity-based contact model.
To that end, we first propose contact nodalization based on virtual nodes, which converts all contact into nodal situations.
Then, using the contact diagonalization technique, we create a contact solving algorithm based on solving multiple surrogate dynamics problems, and each surrogate dynamics problem can be solved in a one-shot/parallelized manner.
Theoretical statements related to the accuracy and convergence of the solver are presented.  
Simulations and experiments are carried out for a variety of multibody examples.
The results demonstrate that our solver is significantly faster than popular factorization-based solvers, and shows near $\mathcal{O}(n)$ complexity in practice.
Also, it is shown that COND can produce the result with convincing accuracy compared to the ground-truth results.
Despite its performance and versatility, COND still has room to evolve such as numerical techniques and integration with other solvers, and future work will focus on topic like these.


\bibliographystyle{unsrt}
\bibliography{TRO_COND}

\begin{thebibliography}{10}

\bibitem{kunze18ral}
L.~Kunze, N.~Hawes, T.~Duckett, M.~Hanheide, and T.~Krajník.
\newblock Artificial intelligence for long-term robot autonomy: A survey.
\newblock {\em IEEE Robotics and Automation Letters}, 3(4):4023--4030, 2018.

\bibitem{narang21icra}
Y.~Narang, B.~Sunarlingam, M.~Macklin, A.~Mousavian, and D.~Fox.
\newblock Sim-to-real for robotic tactile sensing via physics-based simulation
  and learned latent projections.
\newblock In {\em IEEE International Conference of Robotics and Automation},
  2021.

\bibitem{lee20tro}
M.~A. Lee, Y.~Zhu, P.~Zachares, M.~Tan, K.~Srinivasan, S.~Savarese, L.~Fei-Fei,
  A.~Garg, and J.~Bohg.
\newblock Making sense of vision and touch: Learning multimodal representations
  for contact-rich tasks.
\newblock {\em IEEE Transactions on Robotics}, 36(3):582--596, 2020.

\bibitem{agostinelli19nmi}
F.~Agostinelli, S.~McAleer, A.~Shmakov, and P.~Baldi.
\newblock Solving the rubik’s cube with deep reinforcement learning and
  search.
\newblock {\em Nature Machine Intelligence}, 1(8):356--363, 2019.

\bibitem{chebotar19icra}
Y.~Chebotar, A.~Handa, V.~Makoviychuk, M.~Macklin, J.~Issac, N.~Ratliff, and
  D.~Fox.
\newblock Closing the sim-to-real loop: Adapting simulation randomization with
  real world experience.
\newblock In {\em IEEE International Conference on Robotics and Automation},
  2019.

\bibitem{hwangbo19sr}
J.~Hwangbo, J.~Lee, A.~Dosovitskiy, D.~Bellicoso, V.~Tsounis, V.~Koltun, and
  M.~Hutter.
\newblock Learning agile and dynamic motor skills for legged robots.
\newblock {\em Science Robotics}, 4(26), 2019.

\bibitem{son20iros}
D.~Son, H.~Yang, and D.~Lee.
\newblock Sim-to-real transfer of bolting tasks with tight tolerance.
\newblock In {\em IEEE/RSJ International Conference on Intelligent Robots and
  Systems}, 2020.

\bibitem{pintorss18}
L.~Pinto, M.~Andrychowicz, P.~Welinder, W.~Zaremba, and P.~Abbeel.
\newblock Asymmetric actor critic for image-based robot learning.
\newblock In {\em Robotics: Science and Systems}, 2018.

\bibitem{johns21arxiv}
E.~Valassakis, N.~D. Palo, and E.~Johns.
\newblock Coarse-to-fine for sim-to-real: Sub-millimetre precision across wide
  task spaces.
\newblock In {\em IEEE/RSJ International Conference on Intelligent Robots and
  Systems}, 2021.

\bibitem{abbeel18icra}
X.~B. Peng, M.~Andrychowicz, W.~Zaremba, and P.~Abbeel.
\newblock Sim-to-real transfer of robotic control with dynamics randomization.
\newblock In {\em IEEE International Conference of Robotics and Automation},
  2018.

\bibitem{matas18corl}
J.~Matas, S.~James, and A.~J. Davison.
\newblock Sim-to-real reinforcement learning for deformable object
  manipulation.
\newblock In {\em Conference on Robot Learning}, 2018.

\bibitem{ken21icra}
H.~Zhang, J.~Ichnowski, D.~Seita, J.~Wang, H.~Huang, and K.~Goldberg.
\newblock Robots of the lost arc: Self-supervised learning to dynamically
  manipulate fixed-endpoint cables.
\newblock In {\em IEEE International Conference of Robotics and Automation},
  2021.

\bibitem{hofer20rss}
S.~Hofer, K.~Bekris, A.~Handa, G.~C. Gamboa, F.~Golemo, M.~Mozifian,
  C.~Atkeson, D.~Fox, K.~Goldberg, Leonard J., C.~K. Liu, J.~Peters, S.~Song,
  P.~Welinder, and M.~White.
\newblock Perspectives on sim2real transfer for robotics: A summary of the r:ss
  2020 workshop.
\newblock In {\em Workshop in Robotics: Science and Systems}, 2020.

\bibitem{todorov12iros}
E.~Todorov, T.~Erez, and Y.~Tassa.
\newblock Mujoco: A physics engine for model-based control.
\newblock In {\em IEEE/RSJ International Conference on Intelligent Robots and
  Systems}, 2012.

\bibitem{duriez18tro}
O.~Goury and C.~Duriez.
\newblock Fast, generic, and reliable control and simulation of soft robots
  using model order reduction.
\newblock {\em IEEE Transactions on Robotics}, 34(6):1565--1576, 2018.

\bibitem{daviet20tog}
G.~Daviet.
\newblock Simple and scalable frictional contacts for thin nodal objects.
\newblock {\em ACM Transactions on Graphics}, 39(4), 2020.

\bibitem{horak19ral}
P.~C. Horak and J.~C. Trinkle.
\newblock On the similarities and differences among contact models in robot
  simulation.
\newblock {\em IEEE Robotics and Automation Letters}, 4(2):493--499, 2019.

\bibitem{hwangbo18ral}
J.~Hwangbo, J.~Lee, and M.~Hutter.
\newblock Per-contact iteration method for solving contact dynamics.
\newblock {\em IEEE Robotics and Automation Letters}, 3(2):895--902, 2018.

\bibitem{daviet11tog}
F.~Bertails-Descoubes, F.~Cadoux, G.~Daviet, and V.~Acary.
\newblock A nonsmooth newton solver for capturing exact coulomb friction in
  fiber assemblies.
\newblock {\em ACM Transactions on Graphics}, 30(1):1--14, 2011.

\bibitem{liu05tase}
T.~Liu and M.Y. Wang.
\newblock Computation of three-dimensional rigid-body dynamics with multiple
  unilateral contacts using time-stepping and gauss-seidel methods.
\newblock {\em IEEE Transactions on Automation Science and Engineering},
  2(1):19--31, 2005.

\bibitem{erleben17sca}
K.~Erleben.
\newblock Rigid body contact problems using proximal operators.
\newblock In {\em Eurographics Symposium on Computer Animation}, 2017.

\bibitem{yamane06icra}
K.~Yamane and Y.~Nakamura.
\newblock Stable penalty-based model of frictional contacts.
\newblock In {\em IEEE International Conference on Robotics and Automation},
  2006.

\bibitem{khude13jcnd}
N.~Khude, I.~Stanciulescu, D.~Melanz, and D.~Negrut.
\newblock Efficient parallel simulation of large flexible body systems with
  multiple contacts.
\newblock {\em Journal of Computational and Nonlinear Dynamics}, 8(4), 2013.

\bibitem{kaufman08tog}
D.~M. Kaufman, S.~Sueda, D.~L. James, and D.~K. Pai.
\newblock Staggered projections for frictional contact in multibody systems.
\newblock {\em ACM Tranactions on Graphics}, 27(5):1--11, 2008.

\bibitem{drumwright11icra}
E.~Drumwright and D.~A. Shell.
\newblock An evaluation of methods for modeling contact in multibody
  simulation.
\newblock In {\em IEEE International Conference on Robotics and Automation},
  2011.

\bibitem{potra97nd}
M.~Anitescu and F.~A. Potra.
\newblock Formulating dynamic multi-rigid-body contact problems with friction
  as solvable linear complementarity problems.
\newblock {\em Nonlinear Dynamics}, 14:231--247, 1997.

\bibitem{llyod05icra}
J.E. Lloyd.
\newblock Fast implementation of lemke's algorithm for rigid body contact
  simulation.
\newblock In {\em IEEE International Conference on Robotics and Automation},
  2005.

\bibitem{mazhar15tog}
H.~Mazhar, T.~Heyn, D.~Negrut, and A.~Tasora.
\newblock Using nesterov's method to accelerate multibody dynamics with
  friction and contact.
\newblock {\em ACM Transactions on Graphics}, 34(3):1--14, 2015.

\bibitem{tasora10coa}
M.~Anitescu and A.~Tasora.
\newblock An iterative approach for cone complementarity problems for nonsmooth
  dynamics.
\newblock {\em Computational Optimization and Applications}, 47:207--235, 2010.

\bibitem{todorov14icra}
E.~Todorov.
\newblock Convex and analytically-invertible dynamics with contacts and
  constraints: Theory and implementation in mujoco.
\newblock In {\em IEEE International Conference on Robotics and Automation},
  2014.

\bibitem{bullet}
Bullet physics engine.
\newblock https://pybullet.org/.

\bibitem{tasora15chrono}
A.~Tasora, R.~Serban, H.~Mazhar, A.~Pazouki, D.~Melanz, J.~Fleischmann,
  M.~Taylor, H.~Sugiyama, and D.~Negrut.
\newblock Chrono: An open source multi-physics dynamics engine.
\newblock 2015.

\bibitem{RaiSim}
Raisim physics engine.
\newblock https://raisim.com/.

\bibitem{yoon19iros}
J.~Yoon, I.~Hong, and D.~Lee.
\newblock Passive model reduction and switching for fast soft object simulation
  with intermittent contacts.
\newblock In {\em IEEE International Conference on Intelligent Robots and
  Systems}, 2019.

\bibitem{sofa}
Simulation open framework architecture.
\newblock https://www.sofa-framework.org/.

\bibitem{macklin16game}
M.~Macklin, M.~Müller, and N.~Chentanez.
\newblock Xpbd: Position-based simulation of compliant constrained dynamics.
\newblock In {\em International Conference on Motion in Games}, 2016.

\bibitem{macklin19tog}
M.~Macklin, K.~Erleben, M.~M\"{u}ller, N.~Chentanez, S.~Jeschke, and
  V.~Makoviychuk.
\newblock Non-smooth newton methods for deformable multi-body dynamics.
\newblock {\em ACM Transactions on Graphics}, 38(5):1--20, 2019.

\bibitem{flex}
Flex physics engine.
\newblock https://developer.nvidia.com/flex.

\bibitem{brax}
Brax - a differentiable physics engine for large scale rigid body simulation.
\newblock http://github.com/google/brax.

\bibitem{liu21icra}
F.~Liu, Z.~Li, Y.~Han, J.~Lu, F.~Richter, and Michael~C. Yip.
\newblock Real-to-sim registration of deformable soft tissue with
  position-based dynamics for surgical robot autonomy.
\newblock In {\em IEEE International Conference on Robotics and Automation},
  2021.

\bibitem{lie22arxiv}
F.~Lie, E.~Su, J.~Lu, M.~Li, and M.~C. Yip.
\newblock Differentiable robotic manipulation of deformable rope-like objects
  using compliant position-based dynamics.
\newblock In {\em https://arxiv.org/pdf/2202.09714.pdf}, 2022.

\bibitem{peiret20ral}
A.~Peiret, F.~González, J.~Kövecses, M.~Teichmann, and A.~Enzenhoefer.
\newblock Model-based coupling for co-simulation of robotic contact tasks.
\newblock {\em IEEE Robotics and Automation Letters}, 5(4):5756--5763, 2020.

\bibitem{lee21icra}
J.~Lee, M.~Lee, J.~Yoon, and D.~Lee.
\newblock A parallelized iterative algorithm for real-time simulation of long
  flexible cable manipulation.
\newblock In {\em IEEE International Conference on Robotics and Automation},
  2021.

\bibitem{li18tog}
J.~Li, G.~Daviet, R.~Narain, F.~Bertails-Descoubes, M.~Overby, G.~E. Brown, and
  L.~Boissieux.
\newblock An implicit frictional contact solver for adaptive cloth simulation.
\newblock {\em ACM Transactions on Graphics}, 37(4):1--15, 2018.

\bibitem{champneys16jam}
A.~R. Champneys and P.~L. Varkonyi.
\newblock {The Painleve paradox in contact mechanics}.
\newblock {\em IMA Journal of Applied Mathematics}, 81(3):538--588, 2016.

\bibitem{kim17ijrr}
M.~Kim, Y.~Lee, Y.~Lee, and D.~J. Lee.
\newblock Haptic rendering and interactive simulation using passive midpoint
  integration.
\newblock {\em International Journal of Robotics Research}, 36(12):1341--1362,
  2017.

\bibitem{moita96nme}
G.~F. Moita and M.~A. Crisfield.
\newblock A finite element formulation for 3-d continua using the co-rotational
  technique.
\newblock {\em International Journal of Numerical Method in Engineering},
  39(22):3775--3792, 1996.

\bibitem{tournier15tog}
M.~Tournier, M.~Nesme, B.~Gilles, and F.~Faure.
\newblock Stable constrained dynamics.
\newblock {\em ACM Transactions on Graphics}, 34(4):1--10, 2015.

\bibitem{you21neuro}
K.~You and H.~Park.
\newblock Re-visiting riemannian geometry of symmetric positive definite
  matrices for the analysis of functional connectivity.
\newblock {\em NeuroImage}, 225:117464, 2021.

\bibitem{nicholas88spd}
J.~H. Nicholas.
\newblock Computing a nearest symmetric positive semidefinite matrix.
\newblock {\em Linear Algebra and its Applications}, 103:103--118, 1988.

\bibitem{bouaziz14tog}
S.~Bouaziz, S.~Martin, T.~Liu, L.~Kavan, and M.~Pauly.
\newblock Projective dynamics: Fusing constraint projections for fast
  simulation.
\newblock {\em ACM Transactions on Graphics}, 33(4):1--11, 2014.

\bibitem{volino05cav}
P.~Volino and N.~Magnenat-Thalmann.
\newblock Implicit midpoint integration and adaptive damping for efficient
  cloth simulation.
\newblock {\em Computer Animation and Virtual Worlds}, 16(3-4):163--175, 2005.

\bibitem{murphey09tro}
E.~R. Johnson and T.~D. Murphey.
\newblock Scalable variational integrators for constrained mechanical systems
  in generalized coordinates.
\newblock {\em IEEE Transactions on Robotics}, 25(6):1249--1261, 2009.

\bibitem{feather00icra}
R.~Featherstone and D.~Orin.
\newblock Robot dynamics: Equations and algorithms.
\newblock In {\em IEEE International Conference on Robotics and Automation},
  2000.

\bibitem{wang19tog}
Y.~Wang, N.~J. Weidner, M.~A. Baxter, Y.~Hwang, D.~M. Kaufman, and S.~Sueda.
\newblock Redmax: Efficient $\&$ flexible approach for articulated dynamics.
\newblock {\em ACM Transactions on Graphics}, 38(4):1--10, 2019.

\bibitem{boyd14fto}
N.~Parikh and S.~Boyd.
\newblock Proximal algorithms.
\newblock {\em Foundations and Trends in Optimization}, 1(3):127--239, 2014.

\bibitem{erleben20cgf}
K.~Erleben, M.~Macklin, S.~Andrews, and P.~G. Kry.
\newblock The matchstick model for anisotropic friction cones.
\newblock {\em Computer Graphics Forum}, 39(1):450--461, 2020.

\bibitem{preclik17cm}
T.~Preclik, S.~Eibl, and U.~Rude.
\newblock The maximum dissipation principle in rigid-body dynamics with
  inelastic impacts.
\newblock {\em Computational Mechanics}, 62:81--96, 2017.

\bibitem{yu16iros}
K.~Yu, M.~Bauza, N.~Fazeli, and A.~Rodriguez.
\newblock More than a million ways to be pushed. a high-fidelity experimental
  dataset of planar pushing.
\newblock In {\em IEEE/RSJ International Conference on Intelligent Robots and
  Systems}, 2016.

\bibitem{ma18ral}
D.~Ma and A.~Rodriguez.
\newblock Friction variability in planar pushing data: Anisotropic friction and
  data-collection bias.
\newblock {\em IEEE Robotics and Automation Letters}, 3(4):3232--3239, 2018.

\bibitem{wang15tog}
H.~Wang.
\newblock A chebyshev semi-iterative approach for accelerating projective and
  position-based dynamics.
\newblock {\em ACM Transactions on Graphics}, 34(6):1--9, 2015.

\bibitem{richard07fpt}
S.~P. Richard.
\newblock A simple proof of the banach contraction principle.
\newblock {\em Journal of Fixed Point Theory and Applications}, 2:221--223,
  2007.

\bibitem{studer08thesis}
C.~W. Studer.
\newblock {\em Augmented time-stepping integration of non-smooth dynamical
  systems}.
\newblock PhD thesis, 2008.

\bibitem{tarazaga09cma}
P.~Tarazaga and D.~Cuellar.
\newblock Preconditioners generated by minimizing norms.
\newblock {\em Computers $\&$ Mathematics with Applications}, 57(8):1305--1312,
  2009.

\bibitem{dai05nm}
Y.~Dai and R.~Fletcher.
\newblock Projected barzilai-borwein methods for large-scale box-constrained
  quadratic programming.
\newblock {\em Numerische Mathematik}, 100:21--47, 2005.

\bibitem{rubin13springer}
M.~B. Rubin.
\newblock {\em Cosserat Theories: Shells, Rods and Points}.
\newblock Springer, 2013.

\bibitem{roberts13icra}
P.~Roberts, D.~D. Damian, W.~Shan, T.~Lu, and C.~Majidi.
\newblock Soft-matter capacitive sensor for measuring shear and pressure
  deformation.
\newblock In {\em IEEE International Conference on Robotics and Automation},
  2013.

\bibitem{macklin19sca}
M.~Macklin, K.~Storey, M.~Lu, P.~Terdiman, N.~Chentanez, S.~Jeschke, and
  M.~Müller.
\newblock Small steps in physics simulation.
\newblock In {\em ACM SIGGRAPH/Eurographics Symposium on Computer Animation},
  2019.

\bibitem{zhang19tog}
J.~Zhang, Y.~Peng, W.~Ouyang, and B.~Deng.
\newblock Accelerating admm for efficient simulation and optimization.
\newblock {\em ACM Transactions on Graphics}, 38(6):1--21, 2019.

\bibitem{park20icassp}
Y.~Park, S.~Dhar, S.~Boyd, and M.~Shah.
\newblock Variable metric proximal gradient method with diagonal
  barzilai-borwein stepsize.
\newblock In {\em IEEE International Conference on Acoustics, Speech and Signal
  Processing}, 2020.

\end{thebibliography}

\end{document}